\newcounter{problemcounter}
\renewcommand{\theproblemcounter}{\Roman{problemcounter}}
\newenvironment{problem}[1][]{%
  \refstepcounter{problemcounter}
}{%
  \par
}
\newcommand{\problemtag}{\tag{\textbf{Problem \theproblemcounter}}}
\newcommand{\methodname}{MOSIC}
\newtheorem{theorem}{Theorem}
\newtheorem{definition}{Definition}
\newtheorem{remark}{Remark}
\newcommand{\E}{\mathbb{E}}
\begin{document}

\title{MOSIC: Model-Agnostic Optimal Subgroup Identification with Multi-Constraint for Improved Reliability}

\author{Wenxin Chen}
\email{wc645@cornell.edu}
\affiliation{%
  \institution{Computer Science, Cornell University}
  \city{New York City}
  \state{New York}
  \country{USA}}

\author{Weishen Pan}
\email{wep4001@med.cornell.edu}
\affiliation{%
  \institution{Weill Cornell Medicine, Cornell University}
  \city{New York City}
  \state{New York}
  \country{USA}}

\author{Kyra Gan}
\email{kyragan@cornell.edu}
\affiliation{%
  \institution{Operations Research and Information Engineering, Cornell University}
  \city{New York City}
  \state{New York}
  \country{USA}}

\author{Fei Wang}
\email{few2001@med.cornell.edu}
\affiliation{%
  \institution{Weill Cornell Medicine, Cornell University}
  \city{New York City}
  \state{New York}
  \country{USA}}


\begin{abstract}
Current subgroup identification methods typically follow a two-step approach: first estimate \emph{conditional average treatment effects} (CATEs) and then apply thresholding or rule-based procedures to define subgroups. While intuitive, this decoupled approach fails to incorporate key constraints essential for real-world clinical decision-making—such as subgroup size and propensity overlap. These constraints operate on fundamentally different axes than CATE estimation and are not naturally accommodated within existing frameworks, thereby limiting the practical applicability of these methods. We propose a \emph{unified optimization framework} that directly solves the \emph{primal} constrained optimization problem to identify optimal subgroups. Our key innovation is a reformulation of the constrained primal problem as an \emph{unconstrained differentiable min-max objective}, solved via a gradient descent-ascent algorithm. We theoretically establish that our solution converges to a feasible and locally optimal solution. Unlike threshold-based CATE methods that apply constraints as post-hoc filters, our approach enforces them directly during optimization. The framework is \emph{model-agnostic}, compatible with a wide range of CATE estimators, and extensible to additional constraints like cost limits or fairness criteria. Extensive experiments on synthetic and real-world datasets demonstrate its effectiveness in identifying high-benefit subgroups while maintaining better satisfaction of constraints.
\end{abstract}

\begin{CCSXML}
<ccs2012>
   <concept>
       <concept_id>10010405.10010481.10010484.10011817</concept_id>
       <concept_desc>Applied computing~Multi-criterion optimization and decision-making</concept_desc>
       <concept_significance>500</concept_significance>
       </concept>
   <concept>
       <concept_id>10010405.10010444.10010449</concept_id>
       <concept_desc>Applied computing~Health informatics</concept_desc>
       <concept_significance>500</concept_significance>
       </concept>
 </ccs2012>
\end{CCSXML}

\ccsdesc[500]{Applied computing~Multi-criterion optimization and decision-making}
\ccsdesc[500]{Applied computing~Health informatics}

\keywords{Causal Inference, Heterogeneous Treatment Effect, Optimal Subgroup Identification, Constrained Optimization}


\maketitle

\vspace{-5pt}
\section{Introduction}
In precision medicine, a fundamental challenge is identifying patient subgroups that benefit most from specific treatments, where heterogeneous effects must be estimated from observational data \citep{kosorok2019precision, kravitz2004evidence}. 
Most existing methods adopt a two-step paradigm: they first estimate CATEs using machine learning methods, then derive subgroups through either thresholding or simplified rule-based models (see Section~\ref{sec:lit}). 

However, this two-stage approach falls short in real-world settings, where optimal subgroup identification must account for diverse, interacting constraints.
Clinical deployment often requires satisfying statistical conditions like minimum subgroup size and overlap \citep{vanderweele2019selecting, crump2009dealing}, as well as operational and ethical considerations such as budget, safety, and fairness. 
Existing methods typically treat these constraints as post-hoc filters rather than integrating them into the optimization process. As a result, they \emph{struggle to jointly satisfy multiple constraints}, leading to instability and poor performance.
These challenges highlight a deeper disconnect between the continuous nature of CATE estimates and the discrete, constraint-driven structure of clinically actionable subgroup identification. This motivates the need for new frameworks that can incorporate and optimize over multiple real-world constraints in a unified and principled way.

We propose MOSIC (Model-agnostic Optimal Subgroup Identification with multi-Constraints), a \emph{novel optimization framework} that identifies subgroups with maximal CATE while satisfying group size and overlap constraints—with flexibility to incorporate additional constraints. Our approach addresses the challenge of nonconvex/nonconcave optimization, and our contributions are threefold:
%
\begin{itemize}[itemsep=0pt, topsep=2pt, partopsep=0pt, parsep=0pt, leftmargin=*]
\item \textbf{Problem Reformulation for Stable Solutions}:  
We develop a stable optimization procedure that: (1) formulates the task as a constrained problem 
(Section~\ref{subsec:framework}), 
(2) absorbs constraints into the objective via a reformulation
(Section~\ref{subsec:relax_formulation}), and 
(3) modifies the objective to improve stability and solves it using a gradient descent-ascent algorithm (Section~\ref{subsec:minmax_formulation}). Finally, we 
establish that the resulting solution is locally optimal and feasible (Section~\ref{subsec:modified_minmax}).

\item \textbf{Flexibility}:
MOSIC offers flexibility across three dimensions: (1) supporting multiple subgroup model architectures (e.g., multilayer perceptrons, decision trees) for different performance-interpretability tradeoffs, (2) compatibility with various ATE estimators, and (3) extensibility to diverse clinical constraints beyond our focus on size and overlap.
\item \textbf{Comprehensive Evaluation}: We extensively evaluate our framework on both synthetic and real-world data, demonstrating its effectiveness in optimal subgroup identification under multiple constraints (Section~\ref{sec:experiments}). Our implementation is available at
\url{https://github.com/Wenxin-Elmon-Chen/MOSIC-KDD}. Supplementary appendix is available at \url{https://github.com/Wenxin-Elmon-Chen/MOSIC-KDD/blob/main/supplementary/appendix.pdf}.
\end{itemize}

\vspace{-10pt}
\section{Related Work}\label{sec:lit}
We review treatment effect estimation, overlap handling, subgroup identification, and constrained optimization, highlighting that multi-constraint subgroup identification remains under-explored.

\textbf{\emph{Treatment Effect Estimation}.}\;
MOSIC accommodates various average treatment effect (ATE) estimators. Traditional methods like IPTW, meta-learners~\citep{kunzel2019metalearners}, R-learner~\citep{nie2021quasi}, BART~\citep{chipman2010bart} rely on either the treatment or outcome model, making them sensitive to model misspecification. In contrast, doubly robust estimators such as AIPTW~\citep{robins1995analysis}, DR-learner~\citep{kennedy2023towards}, TMLE~\citep{van2006targeted}, and DML~\citep{chernozhukov2018double} require only one model (treatment or outcome) to be correctly specified. This paper adopts AIPTW for illustration.

While ATE captures population-level effects, CATE estimation enables subgroup-specific recommendations.
Modern methods include 1) tree-based methods like Causal Tree (CT)~\citep{athey2016recursive}, Causal Forest (CF)~\citep{wager2018estimation}, and 2) neural-network-based approaches, such as TARNet~\citep{shalit2017estimating} and Dragonnet~\citep{shi2019adapting}.
In our AIPTW estimator, we estimate the outcome model with Dragonnet and the treatment model with Logistic Regression (LR).

\textbf{\emph{Dealing with Limited Overlap}.}\;
Limited sample overlap can bias treatment effect estimates or inflate variance.
Common solutions include truncating propensity scores \citep{10.1093/aje/kwac087, cole2008constructing} and excluding
low-overlap units
\citep{crump2009dealing, 10.5555/3666122.3669092, kallus2020efficientpolicylearningoptimal, li2018balancing}. 
We adopt the latter approach by incorporating a set of constraints to avoid low-overlap regions in subgroup identification. This ensures more reliable ATE estimation within the identified subgroup.

\textbf{\emph{Optimal Subgroup Identification}.}\;
Existing methods fall into three categories: (i) baseline methods without interpretability or constraints, (ii) \emph{interpretable} methods, and (iii) \emph{constrained} methods.

\emph{Baseline methods}
either: (1)
rank patients by estimated CATE values \citep{cai2011analysis, vanderweele2019selecting} (we benchmarked this approach employing CT, CF, and Dragonnet 
in Section~\ref{sec:experiments}) and DR-learner, R-learner, BART in 
Appendix E.4, or (2)
optimize individual treatment rules using methods like
Outcome-Weighted Learning (OWL)~\citep{zhao2012estimating, liu2018augmented}.
These methods provide useful benchmarks but do not readily accommodate additional constraints, a limitation that our approach addresses.

\emph{Interpretable methods} often rely on 
\emph{Decision Tree} (DT) \citep{lipkovich2011subgroup, dusseldorp2016quint, huang2017patient,athey2021policy}.
A representative example is
Virtual Twins (VT) \citep{foster2011subgroup}. It first estimates CATE and then applies a DT for interpretable subgroup identification.
Beyond trees, rule learning approaches have been adopted~\cite{wang2021causalrulesetsidentifying, 10.1145/3637528.3671951}. However, these methods rely on combinatorial searches and do not scale. Our method can instead leverage DTs as the backbone model, achieving the same level of interpretability while remaining scalable. We compare its performance against other DT-based methods in Section~\ref{subsec:dt_based}.

\emph{Constrained methods} explicitly incorporate constraints into subgroup search.
CAPITAL~\citep{cai2022capital} is the most closely related approach to ours: it maximizes subgroup size 
under a single constraint on subgroup ATE and allows extension to one additional constraint via
Lagrangian relaxation. However, it struggles with multiple constraints due to instability and a lack of feasibility guarantees.
In Section~\ref{sec:experiments}, we compare our approach with VT, OWL, and CAPITAL.

Related ideas also appear in constrained policy learning, where the goal is to optimize policies subject to explicit safety and budget constraints. 
Examples include constrained policy optimization~\citep{pmlr-v70-achiam17a, pmlr-v162-polosky22a}, contextual bandits with knapsacks~\citep{pmlr-v162-sivakumar22a, 10.1145/3164539}, and safe RL~\citep{garcia2015comprehensive, 10444044}. 
These methods generally impose trajectory-level cumulative cost or risk constraints in sequential decision-making settings. In contrast, MOSIC addresses a different class of structural constraints aimed at improving the reliability of a learned subgroup rule in the static setting, such as the overlap and group-size constraints. It can additionally accommodate linear and ratio-form constraints such as risk and budget restrictions.

\textbf{\emph{Constrained Optimization}.}\; 
Constrained optimization algorithms 
vary by problem convexity. For \emph{convex objective and constraints},
classical methods such as Projected Gradient Descent (PGD), Frank-Wolfe (FW), Interior Point Methods (IPM), and Lagrangian-based methods (e.g., Alternating Direction Method of Multipliers, ADMM~\citep{boyd2011distributed}) are effective. 
For \emph{non-convex objectives with convex feasible regions}, global optimality is NP-hard, and convergence guarantees are typically weakened
~\citep{lacoste2016convergence,wang2019global}. 
Our setting—\emph{non-convex objective with non-convex constraints}—poses greater challenges:
PGD struggles with complex projection, IPM scales poorly with constraint count, FW assumes convexity, and Lagrangian methods often lack stability and feasibility guarantees. ADMM fails here due to the non-separable objective. While ADMM variants exist for structured problems~\cite{gao2020admm}, none apply to our setting.
In contrast, our method guarantees both constraint feasibility and local optimality, critical for real-world deployment.

\vspace{-5pt}
\section{Problem Setting}\label{sec:problem_setting}
We consider an observational dataset with baseline covariates $\bm{X} \in \mathcal{X} \subseteq \mathds{R}^d$ (e.g., severity scores, laboratory measurements, and comorbidity), binary treatment assignment $A \in \mathcal{A} = \{0,1\}$ (e.g., corticosteroids), and clinical outcome $Y \in \mathcal{Y}$, where $Y$ may be binary (e.g., mortality) or continuous. The dataset consists of $n$ samples ${(\bm{x}_i, a_i, y_i)}_{i=1}^n$. 

A subgroup in our setting is a subset of patients defined by $X$, such as "patients with high inflammation biomarkers and moderate respiratory severity". Our goal is to identify a subgroup in which the treatment effect is favorable, while ensuring the subgroup is sufficiently large and has adequate treatment-control overlap in the observed data so that the estimated ATE is reliable.

Let $Y(0)$ and $Y(1)$ denote the potential outcomes under control and treatment. The propensity score is denoted as $e(\bm{x}) = P(A = 1 | \bm{X} = \bm{x})$, and potential outcomes as $\mu_a(\bm{x}) = \mathds{E}[Y | \bm{X} = \bm{x}, A = a]$.
Throughout this work, we adopt standard causal inference assumptions: 1) Stable Unit Treatment Value Assumption (SUTVA): $Y = A \cdot Y(1) + (1 - A)\cdot Y(0)$; 2) Unconfoundedness: $\{Y(0),Y(1)\} \perp A|\bm{X}$; 3) Overlap: $0 < e(\bm{x}) < 1, \forall \bm{x} \in \mathcal{X}$.

\vspace{-5pt}
\subsection{Reliable Subgroup Identification Framework}\label{subsec:framework}
We define a subgroup identification model: $S: R^d\mapsto\{0,1\}$, which assigns a patient with covariates X=x to the subgroup ($S(x)=1$) or excludes them ($S(x)=0$). The target estimand is subgroup ATE: $\mathds{E}[Y(1)-Y(0)|S(X)=1]$,  which reads as "the mean difference of potential outcome under treatment and that under control, within the selected subgroup". 
Under SUTVA and unconfoundedness, this estimand can be expressed as
$$\mathds{E}\left[ \mathds{E}[Y|A=1,\bm{X}] - \mathds{E}[Y|A=0,\bm{X}] | \tilde{S}(\bm{X}) = 1\right].$$


Let $\hat\phi(\bm{x}_i, a_i, y_i)$ denote the estimated pseudo-outcomes for each sample, and $\mathds{1}(\cdot)$ as the indicator function, a general estimator of the subgroup ATE is then
\begin{equation*}
    \frac{\sum_{i=1}^n\mathds{1}(\tilde{S}(\bm{X})=1)\hat{\phi}(\bm{x}_i, a_i, y_i)}{\sum_{i=1}^n\mathds{1}(\tilde{S}(\bm{X})=1)}.
\end{equation*}

Our goal is to identify a subgroup such that its subgroup ATE is maximized, while ensuring the following constraints:
\begin{itemize}[noitemsep, topsep=0pt, leftmargin=2em]
    \item \textbf{Minimum size requirement.} 
    A sufficiently large subgroup is essential for reliable estimation, robust statistical power, and economic viability in applications like drug repurposing.
   
    \item \textbf{Each sample in the identified subgroup lies in a region with sufficient overlap.} Intuitively, the propensity score $e(\bm{x})$ represents the treatment assignment probability given covariates. Patients with $e(\bm{x})$ close to 0 or 1 lie in low-overlap regions, meaning that either one of the treatment arms is rarely observed for patients with $\bm X=\bm x$, making the effect estimation and subsequent subgroup identification unreliable. For this reason, we impose that each sample in the selected subgroup has a propensity score bounded away from 0 and 1, similar to excluding samples with extreme propensity scores \cite{crump2009dealing}.


\end{itemize} 
\noindent The above task can be formally stated as follows:
\begin{problem}
\begin{align}\label{eq:raw_problem}
    \underset{\tilde{S}}{\min}  \;&-\frac{\sum_{i=1}^n\mathds{1}(\tilde{S}(\bm{X})=1)\hat{\phi}(\bm{x}_i, a_i, y_i)}{\sum_{i=1}^n\mathds{1}(\tilde{S}(\bm{X})=1)} \problemtag\\
     \text{s.t.}&\quad \mathds{E}\left[\tilde{S}(\bm{X})\right] \geq c\notag \\
     &\quad \alpha \leq e(\bm{x}) \leq 1 - \alpha, \forall \bm{x}: \tilde{S}(\bm{x}) = 1, \label{eq:propensity_constraint}
\end{align}
\end{problem}
\noindent where $c \in (0, 1)$ is the desired subgroup size, and $\alpha \in [0, 0.5)$ is the threshold controlling the overlap constraint. Beyond the size and overlap, our method naturally accommodates more general \textbf{linear and ratio-form constraints}, which we formally introduce in Lemma~\ref{main_lemma} and Remark~\ref{remark:ratio_constraint} (Section~\ref{subsec:modified_minmax}).

\vspace{-5pt}
\subsection{Relaxing the Combinatorial Formulation}\label{subsec:relax_formulation}
Since \ref{eq:raw_problem} is
combinatorial and difficult to optimize, we relax the subgroup identification to a probabilistic assignment. This relaxation is implemented using a parametric surrogate model $S: \mathds{R}^d \mapsto (0,1)$ with parameters $\bm{\theta}$.
The ATE on the identified subgroup can be expressed as
\begin{equation}\label{eq:subgroup_ate}
  f(\bm{\theta}) = \frac{\sum_{i=1}^n S(\bm{x}_i;\bm{\theta}) \hat\phi(\bm{x}_i, a_i, y_i)}{\sum_{i=1}^n S(\bm{x}_i;\bm{\theta})}.\footnote{$\hat\phi(\bm{x}_i, a_i, y_i)$ does not depend on parameter $\bm{\theta}$ as the estimation problem is separate from the parametric surrogate model $S$.}  
\end{equation}
Let $\hat e(\bm{x}_i)$, $\hat \mu_{a}(\bm{x}_i)$ denote the estimated $e(\bm{x}_i)$ and $\mu_{a}(\bm{x}_i)$. We then adopt the AIPTW estimator,
which can be expressed as functions of
$\hat e(\bm{x}_i)$ and $\hat \mu_{a}(\bm{x}_i)$:
\begin{align}
    \hat\phi_\mathrm{aiptw}(\bm{x}_i, a_i, y_i) &= \hat \mu_1(\bm{x}_i) - \hat \mu_0(\bm{x}_i)\\
    &+ \frac{a_i}{\hat e(\bm{x}_i)}(y_i - \hat \mu_1(\bm{x}_i)) \notag - \frac{1-a_i}{1-\hat e(\bm{x}_i)}(y_i - \hat \mu_0(\bm{x}_i)).\label{eq:aiptw}
\end{align}

While this study focuses on AIPTW, $\hat\phi(\bm{x}_i, a_i, y_i)$ can be derived using other ATE estimators, making this formulation flexible. More details are illustrated in 
Appendix C.1.

Due to the relaxation of subgroup identification into a probabilistic assignment, the overlap constraint in \eqref{eq:propensity_constraint}, originally designed for discrete subgroup selection, must be adapted.
To achieve this, we introduce $h(\bm{x}_i, \alpha)$, a surrogate function that reformulates the overlap constraint from \ref{eq:raw_problem}:
\begin{equation}
h(\bm{x}_i, \alpha) = 1 - \frac{\hat e(\bm{x}_i)(1 - \hat e(\bm{x}_i))}{\alpha (1 - \alpha)}.\label{eq:overlap}
\end{equation}

The following result, Lemma~\ref{lemma:surrogate_overlap} (proof in 
Appendix B.1), establishes that the overlap constraint in \ref{eq:raw_problem} can be replaced by a constraint on $h(\bm{x}_i, \alpha)$:

\begin{restatable}{lemma}{LemmaSurrogateOverlap}\label{lemma:surrogate_overlap}
$S(\bm{x}_i;\bm{\theta})h(\bm{x}_i,\alpha) \leq 0$ if and only if $\alpha \leq \hat e(\bm{x}_i) \leq 1- \alpha$.
\end{restatable}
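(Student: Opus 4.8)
The plan is to exploit two simple facts: that the surrogate $S(\boldsymbol{x}_i;\boldsymbol{\theta})$ takes values strictly in $(0,1)$, and that the map $t \mapsto t(1-t)$ is a downward parabola symmetric about $t=1/2$. First I would note that, because $S(\boldsymbol{x}_i;\boldsymbol{\theta}) \in (0,1)$ is strictly positive, the product $S(\boldsymbol{x}_i;\boldsymbol{\theta})\, h(\boldsymbol{x}_i,\alpha)$ has the same sign as $h(\boldsymbol{x}_i,\alpha)$; in particular $S(\boldsymbol{x}_i;\boldsymbol{\theta})\, h(\boldsymbol{x}_i,\alpha) \le 0$ if and only if $h(\boldsymbol{x}_i,\alpha) \le 0$. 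This reduces the claim to the equivalence $h(\boldsymbol{x}_i,\alpha) \le 0 \iff \alpha \le \hat e(\boldsymbol{x}_i) \le 1-\alpha$.

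Next, using $\alpha \in (0,1/2)$ so that $\alpha(1-\alpha)>0$, I would rewrite $h(\boldsymbol{x}_i,\alpha)\le 0$ from Equation~\eqref{eq:overlap} as $\hat e(\boldsymbol{x}_i)\bigl(1-\hat e(\boldsymbol{x}_i)\bigr) \ge \alpha(1-\alpha)$. Then the one-line algebraic identity $\hat e(1-\hat e) - \alpha(1-\alpha) = (\hat e - \alpha)(1-\alpha-\hat e)$ turns this into $(\hat e(\boldsymbol{x}_i) - \alpha)\bigl(1-\alpha-\hat e(\boldsymbol{x}_i)\bigr) \ge 0$. Since the two roots $\alpha$ and $1-\alpha$ satisfy $\alpha < 1-\alpha$ and the quadratic in $\hat e(\boldsymbol{x}_i)$ opens downward, this product is nonnegative exactly on the closed interval $[\alpha, 1-\alpha]$. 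Chaining the two equivalences yields the statement.

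The argument is essentially routine, so there is no real obstacle; the only points requiring care are (i) recording that it is the \emph{strict} positivity of $S$ (not merely nonnegativity) that lets us discard $S$ from the inequality without breaking the ``only if'' direction, and (ii) treating the endpoints correctly, observing that $h(\boldsymbol{x}_i,\alpha)=0$ corresponds precisely to $\hat e(\boldsymbol{x}_i)\in\{\alpha,1-\alpha\}$, which is consistent with the weak inequalities in Constraint~\eqref{eq:propensity_constraint}. (If $\alpha=0$, the surrogate $h$ should be read as defined by continuity so that the constraint becomes vacuous; this case plays no role since $\alpha$ is bounded away from $0$ in the intended use of Problem~\ref{eq:raw_problem}.)
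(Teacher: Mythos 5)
Your proposal is correct and follows essentially the same route as the paper: reduce to the sign of $h(\boldsymbol{x}_i,\alpha)$ via the strict positivity of $S$, then analyze the quadratic inequality $\hat e(1-\hat e) \ge \alpha(1-\alpha)$ through the factorization $(\hat e - \alpha)(1-\alpha-\hat e) \ge 0$. The only (cosmetic) difference is that you treat both directions symmetrically with this single factored equivalence, whereas the paper proves the converse separately by a monotonicity argument on $q(z)=z(1-z)$; your handling of the endpoint and $\alpha=0$ edge cases is careful and consistent with the paper's intent.
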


With Lemma~\ref{lemma:surrogate_overlap}, our optimization can be reformulated as:
\begin{problem}
\begin{align*}\label{prob:surrogate}
    \underset{\bm{\theta}}{\min} \quad & -f(\bm{\theta})  \problemtag\\
    \text{s.t. }\; & \frac{1}{n}\sum_{i=1}^n S(\bm{x}_i; \bm{\theta}) \geq c,
    \quad S(\bm{x}_i;\bm{\theta})h(\bm{x}_i, \alpha) \leq 0, \ \forall i.
\end{align*}
\end{problem}

Solving \ref{prob:surrogate} remains a significant challenge.
As shown in \eqref{eq:subgroup_ate}, the parametric model $S(\bm{x_i};\bm{\theta})$ appears in both the numerator and denominator, making the objective function $f(\bm{\theta})$ neither convex nor concave, even for simple models like logistic regression. This nonconvexity also extends to the feasible set, rendering standard convex optimization methods, such as ADMM and FW, unsuitable. 
While Lagrangian relaxation could in principle be applied, doing so would require tuning a separate multiplier for each constraint, making the hyperparameter tuning process impractical at scale. 
To address this, we reformulate \ref{prob:surrogate} and present the final framework in Section~\ref{sec:opt_methods}. 

\vspace{-5pt}
\section{Optimization Methods}\label{sec:opt_methods}
Section~\ref{subsec:minmax_formulation} reformulates the task as a min-max optimization and adopts the \emph{$\gamma$-Gradient Descent Ascent} ($\gamma$-GDA) algorithm~\citep{10.5555/3666122.3669092}.
Section~\ref{subsec:modified_minmax} establishes feasibility guarantees, showing that MOSIC can identify the optimal subgroup while satisfying multiple constraints.

\vspace{-5pt}
\subsection{Min-Max Formulation and GDA}\label{subsec:minmax_formulation}

Since neither the objective nor the feasible region in \ref{prob:surrogate} is convex,
we rewrite it using the saddle-point formulation:
    \begin{equation}\label{eq:partially_updated_obj}
        L(\theta,\lambda):= -f(\bm{\theta}) + \bm{\lambda}^T \bm{g}(\bm{\theta}),\;\;\underset{\bm{\theta}}{\min} \;\underset{\bm{\lambda}\geq 0}{\max} \; L(\theta,\lambda).
    \end{equation}
We note that this is solving the primal constrained problem directly through a \textit{min-max Lagrangian formulation}, unlike Lagrangian relaxation, which operates on the dual problem. We can solve this problem by $\gamma$-GDA~\citep{pmlr-v119-jin20e}, as described in Algorithm~\ref{alg:GDA_algo}. The solution to Eq.~\eqref{eq:partially_updated_obj} is equivalent to solving \ref{prob:surrogate} \citep{boyd2004convex}. 


While the saddle-point formulation provides a correct representation of \ref{prob:surrogate}, applying standard GDA can lead to numerical instability and hinder convergence.
Consider a scenario with a single group size constraint ($c - \frac{1}{n}\sum_{i=1}^n S(\bm{x}_i; \bm{\theta})$) and its Lagrange multiplier $\lambda_{n+1}$.
Initially, the constraint is violated because the group size is below the threshold,
leading to a positive gradient 
that increments $\lambda_{n+1}$ at each iteration.
Once the constraint is met, the gradient becomes negative, driving $\lambda_{n+1}$ toward $0$. 
However, due to the small learning rate, $\lambda_{n+1}$ remains nonzero in subsequent iterations, continuing to penalize the objective and leading to artificially shrunk feasible regions. 
This instability extends to multiple constraints, causing the algorithm to oscillate between a strictly feasible but suboptimal $\bm{\theta}$ and an optimal but infeasible $\bm{\theta}$, undermining convergence.
To obtain stable and convergent $\gamma$-GDA dynamics, additional structural conditions are needed. We thus refine the objective to satisfy two key properties:
\begin{itemize}[noitemsep, topsep=0pt, leftmargin=2em]
    \item Only violated constraints contribute gradients. This can be achieved by introducing a ReLU transform on the constraint vector. When $g_i(\bm{\theta})\leq 0$ (i.e., the constraint is satisfied), the penalty term becomes zero, so satisfied constraints no longer affect the optimization dynamics.
    \item Ensuring convergence to a feasible, locally optimal solution.
\end{itemize}
For the second requirement, we begin by defining local optimality in the context of a nonconvex-nonconcave min-max problem, introducing the concept of \emph{local minmax point} 
(Definition 1). 
Informally, it is a fixed point where the objective remains stable under small perturbations in $\bm{\theta}$ (the parameters over which we minimize) and worst-case perturbations in $\bm{\lambda}$ (the parameters over which we maximize). 
Theorem 1~\citep{pmlr-v119-jin20e} states that if the min-max objective function is twice differentiable, then the $\gamma$-GDA algorithm, upon convergence, reaches either a local minmax point or a stationary point with a degenerate Hessian. 


However, the Hessian of our objective in \eqref{eq:partially_updated_obj} is inherently degenerate:
$\nabla^2_{\bm{\lambda}\bm{\lambda}}L(\bm{\theta},\bm{\lambda}) = \bm{0}$, which hinders guarantees of local optimum convergence and feasibility.
Unlike \citet{nandwani2019primal}, who did not exclude degenerate points--potentially invalidating their results--we introduce a modification that eliminates degenerate points, yielding the final objective of MOSIC:
\begin{problem}
    \begin{align*}\label{prob:minmax}
        &\underset{\bm{\theta}}{\min} \;\underset{\bm{\lambda}\geq 0}{\max} \; L(\bm{\lambda},\bm{\theta}),\problemtag\\
        L(\bm{\lambda},\bm{\theta}) = -f(\bm{\theta}) &+ \bm{\lambda}^T ReLU(\bm{g}(\bm{\theta})) - \frac{\beta}{2}\bm{\lambda}^2,
    \end{align*}
\end{problem}
\noindent where
$\bm{\lambda} \in \mathbb{R}^{n+1}_+,$
$\bm{g}(\bm{\theta})=(S(\bm{x_1};\bm{\theta})h(x_1;\alpha),\dots, S(\bm{x_n};\bm{\theta})h(\bm{x_n};\alpha), c - \frac{1}{n}\sum_{i=1}^n S(\bm{x_i};\bm{\theta})^\top$.
For notational convenience, we write $\bm{g}(\bm{\theta})$ without explicitly indicating its dependence on fixed constraint values $c$ and $\alpha$ (or those for any additional constraint, if present). 

\begin{algorithm}[t]
\caption{$\gamma$-Gradient Descent Ascent ($\gamma$-GDA)}\label{alg:GDA_algo}
\begin{algorithmic}[1]
\STATE \textbf{Input:} step size $\eta$,  decay rate $\zeta$, objective function $L(\bm{\theta},\bm{\lambda})$. 
\STATE Initialize $\bm{\lambda}_0= 0$; Initialize $\bm{\theta}_0$ randomly
\FOR{$t = 0, 1, \ldots T$}
    \STATE If converged, output $\bm{\theta}^*=\bm{\theta}_t$
    \STATE $\gamma \leftarrow (1 + t)^\zeta$
    \STATE Update $\bm{\theta}_t$ using gradient descent with learning rate $\eta/\gamma$:
    $\bm{\theta}_{t+1} \leftarrow \bm{\theta}_t - \left(\frac{\eta}{\gamma}\right) \nabla_{\bm{\theta}} L(\bm{\theta}_t, \lambda_t).$
    \STATE Update $\bm{\lambda}_t$ using gradient ascent with learning rate $\eta$:
$\bm{\lambda}_{t+1} \leftarrow \bm{\lambda}_t + \eta \nabla_{\bm{\lambda}} L(\bm{\theta}_{t+1}, \bm{\lambda}_t).$
\ENDFOR
\STATE \textbf{Output:} $\bm{\theta}^T$
\end{algorithmic}
\end{algorithm}

\vspace{-5pt}
\subsection{Feasibility Guarantees}
\label{subsec:modified_minmax}

With \ref{prob:minmax}, we establish that if Algorithm~\ref{alg:GDA_algo} converges, it reaches a \textit{strict local minmax point} (Proof in 
Appendix B.2): 

\begin{restatable}[Local Optimality]{proposition}{LocalOptimality}\label{local_minmax}
    Let $(\bm{\theta}',\bm{\lambda}')$ be a \emph{linearly stable point} (formally defined in 
    Definition 2)
    of Algorithm~\ref{alg:GDA_algo}. Then,  $(\bm{\theta}',\bm{\lambda}')$ must be a \emph{strict local minmax point}.
\end{restatable}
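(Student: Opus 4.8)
The plan is to obtain Proposition~\ref{local_minmax} from the stability characterization of $\gamma$-GDA recorded in Theorem~\ref{theorem_local_minimax}: a linearly stable fixed point of $\gamma$-GDA (Definition~\ref{def_stable_points}) is, in the large-$\gamma$/two-timescale regime, either a strict local minmax point (Definition~\ref{def_local_minmax}) or a stationary point at which the block $\nabla^2_{\boldsymbol{\lambda}\boldsymbol{\lambda}}\tilde{L}$ is degenerate. The whole argument then reduces to (i) verifying that the hypotheses of Theorem~\ref{theorem_local_minimax} apply to $\tilde{L}$ near $(\boldsymbol{\theta}',\boldsymbol{\lambda}')$, and (ii) excluding the degenerate alternative -- which is exactly the role of modification~\ref{mod:2}, the $-\tfrac{\beta}{2}\|\boldsymbol{\lambda}\|^2$ term in \eqref{eq:updated_obj}.

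For (i) I would first establish that $\tilde{L}$ is genuinely $C^2$ in a neighborhood of the point. Since the $\gamma$-GDA update map and its Jacobian must be well defined at $(\boldsymbol{\theta}',\boldsymbol{\lambda}')$ for linear stability to be meaningful, no constraint can be exactly tight there, i.e. $g_i(\boldsymbol{\theta}')\neq 0$ for all $i$; hence locally each coordinate $ReLU(g_i(\cdot))$ equals either $0$ or $g_i(\cdot)$, so $\tilde{L}$ inherits the $C^2$ regularity of $f$ and $\boldsymbol{g}$, which holds for the smooth (logistic, MLP, soft-tree) surrogates $S(\cdot;\boldsymbol{\theta})$ we use. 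Next, I would show the constraint $\boldsymbol{\lambda}\geq 0$ in \eqref{eq:final_objective} is locally inactive: at a fixed point of the $\boldsymbol{\lambda}$-step, $\nabla_{\boldsymbol{\lambda}}\tilde{L}(\boldsymbol{\theta}',\boldsymbol{\lambda}') = ReLU(\boldsymbol{g}(\boldsymbol{\theta}')) - \beta\boldsymbol{\lambda}' = 0$, so $\boldsymbol{\lambda}' = \tfrac{1}{\beta}ReLU(\boldsymbol{g}(\boldsymbol{\theta}'))\geq 0$; coordinates with $g_i(\boldsymbol{\theta}')<0$ satisfy $\lambda_i'=0$ and carry a penalty that is identically zero nearby, so they drop out of the dynamics, while the rest lie in the interior. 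Thus near $(\boldsymbol{\theta}',\boldsymbol{\lambda}')$, $\gamma$-GDA on \eqref{eq:final_objective} coincides with unconstrained $\gamma$-GDA on the reduced variables -- precisely the setting of Theorem~\ref{theorem_local_minimax} -- and the decaying inner step size $\eta/\gamma$ with $\gamma=(1+t)^\zeta$ in Algorithm~\ref{alg:GDA_algo} supplies the large-$\gamma$ regime it requires.

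For (ii) the key computation is one line: from \eqref{eq:updated_obj}, $\nabla^2_{\boldsymbol{\lambda}\boldsymbol{\lambda}}\tilde{L}(\boldsymbol{\theta},\boldsymbol{\lambda}) = -\beta I$ (on the active $\boldsymbol{\lambda}$-coordinates), which is negative definite, hence invertible. The degenerate branch of Theorem~\ref{theorem_local_minimax} is therefore impossible, so $(\boldsymbol{\theta}',\boldsymbol{\lambda}')$ must be a strict local minmax point. Unpacking Definition~\ref{def_local_minmax}, this yields the usual second-order certificate $\nabla\tilde{L}(\boldsymbol{\theta}',\boldsymbol{\lambda}')=0$, $\nabla^2_{\boldsymbol{\lambda}\boldsymbol{\lambda}}\tilde{L} = -\beta I \prec 0$, and a positive-definite Schur complement $\nabla^2_{\boldsymbol{\theta}\boldsymbol{\theta}}\tilde{L} + \tfrac{1}{\beta}J^\top J \succ 0$, with $J$ the (local) Jacobian of $ReLU(\boldsymbol{g})$ at $\boldsymbol{\theta}'$ -- the form later reused in Lemma~\ref{main_lemma} to extract feasibility.

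The \textbf{main obstacle} I anticipate is the nonsmoothness of $ReLU$ in step (i): one must be certain that a linearly stable point never sits exactly on a kink $g_i(\boldsymbol{\theta}')=0$, where the update Jacobian -- and thus the very notion of linear stability -- is ill-defined. I would resolve this either by building differentiability of the update into Definition~\ref{def_stable_points} (excluding such boundary points outright), or via a genericity argument: $\{\boldsymbol{\theta}: g_i(\boldsymbol{\theta})=0\text{ for some }i\}$ is measure zero, hence avoided almost surely under the random initialization of Algorithm~\ref{alg:GDA_algo}. The lesser subtlety -- matching the projected $\boldsymbol{\lambda}$-ascent to the unconstrained dynamics analyzed by \citet{pmlr-v119-jin20e} -- is handled cleanly by the observation above that the $-\tfrac{\beta}{2}\|\boldsymbol{\lambda}\|^2$ regularizer, together with $ReLU(\boldsymbol{g})\geq 0$, forces the fixed point into $\{\boldsymbol{\lambda}\geq 0\}$ with the constraint slack on every coordinate that actually moves.
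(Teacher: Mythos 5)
Your proposal is correct and follows essentially the same route as the paper: invoke Theorem~\ref{theorem_local_minimax}, observe that $\nabla^2_{\boldsymbol{\lambda}\boldsymbol{\lambda}}\tilde{L}=-\beta I$ is nondegenerate because of the $-\tfrac{\beta}{2}\boldsymbol{\lambda}^2$ term, and thereby rule out the degenerate branch so only strict local minmax points remain. Your treatment of the ReLU kink (excluding exactly-tight constraints at a linearly stable point, or a measure-zero genericity argument) and of the $\boldsymbol{\lambda}\geq 0$ projection is in fact more careful than the paper's, which dismisses the non-differentiability with an informal ``probability of hitting the kink is negligible'' remark.
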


Building on this, 
the following result, Lemma~\ref{main_lemma}, shows that upon convergence, all constraints are approximately satisfied within a small tolerance. This guarantee applies not only to the group size constraint but also to general linear constraints in $S$ (though not linear in $\bm{\theta}$), which include the overlap constraint.

\begin{restatable}{lemma}{MainLemma}\label{main_lemma}
Suppose the constraints include a group size constraint,  $g^{\text{size}}(\bm{\theta}) = c - \frac{1}{n} \sum_{i=1}^n S(x_i; \bm{\theta}),$ and $K$ ($K\geq 0$) additional constraints linear in $S$, $g^k(\bm{\theta}) = a^k + \sum_{i=1}^n b^k_i S(x_i; \bm{\theta}), $ where $a^k, b^k_i \in \mathbb{R}$, and,  $\forall k, |\sum_{i=1}^n b^k_i|>0$.
Together, they define the constraint vector $\bm g(\bm{\theta})=(g^1(\bm{\theta}),\dots,g^K(\bm{\theta}),g^{\text{size}}(\bm{\theta}))^\top$.

Define $\xi>0$ as the tolerance, $\phi_{\max}=\max_i\hat\phi(x_i,a_i,y_i)$,  
$L=\sup|\partial S(\cdot;\bm{\theta})/\partial \theta_j|$ as the coordinate-wise Lipschitz constant,  
and $\mu_\Delta=\E[\partial S(x_i;\bm{\theta})/\partial \theta_j]$ for $j=\arg\max_j |\mu_\Delta|/L$.

Let $(\bm{\theta}^*, \bm{\lambda}^*)$ be a strict local min-max point obtained by Algorithm~\ref{alg:GDA_algo}. If 
$$
\beta < \frac{\xi(c - \xi)|\mu_\Delta|}{2\, \phi_{\max} L},
$$
then either the model collapses ($\frac{1}{n} \sum_{i=1}^n S(x_i; \bm{\theta}^*) < \xi$) or all constraints are approximately satisfied: 

    \begin{align*}
        &g^{\text{size}}(\bm{\theta}^*) \leq \xi, \text{and furthermore, with probability at least } 1-\delta, \\
        &g^k(\bm{\theta}^*) \leq \frac{\xi}{|\sum_{i=1}^nb_i^k|(1+\frac{L}{|\mu_{\Delta}|\sqrt{n}}\sqrt{\log\frac{2}{\delta}})}, \forall k\in [K]
    \end{align*}
    
\end{restatable}

The proof of Lemma~\ref{main_lemma} 
(Appendix B.3) 
analyzes each constraint at \textit{strict local minmax points} and establishes that, with $\beta$ properly chosen,\footnote{
Constraint violation is insensitive to $\beta$; $\beta \in [10^{-5}, 10^{-2}]$ works well.}
the constraints are either fully satisfied or violated up to a small tolerance error, if the model does not collapse. The proof further shows that, as long as the feasible region is non-negligible, the model is unlikely to collapse.\footnote{If the model collapses to a near-zero subgroup, we restart with a new seed; persistent collapse indicates the constraint-defined feasible region should be re-evaluated.}

\begin{remark}[Implication on the overlap constraint]
When $n \rightarrow \infty$, the bound on constraint violation is governed by $|\sum_{i=1}^n b_i^k|$. 
For the overlap constraint on sample $j$, this term reduces to $h(x_j;\alpha)$ since $b^k_i = \mathds{1}(i = j)h(x_j;\alpha)$.
When $h(\bm{x_j};\alpha)$ is small, this denominator inflates the bound, suggesting potentially high violation. However, a small $h(\bm{x_j};\alpha)$ indicates that the corresponding violation of the overlap condition is itself negligible. Thus, such constraints can be safely ignored in practice.
\end{remark}

\begin{remark}[Extension to ratio constraints]\label{remark:ratio_constraint}
Because the model maximizes the subgroup ATE, it naturally favors smaller subgroups by excluding samples with relatively low estimated CATE, while the group size constraint enforces a group size lower bound $c$. As a result, the group size typically converges to the size threshold $c$, i.e., $\sum_{i=1}^n S(x_i;\bm{\theta})\approx c$ at termination. This observation allows us to extend the linear constraint to ratio constraints
$$g^k(\bm{\theta})=a^k + \frac{\sum_{i=1}^nb_i^kS(\bm{x_i};\bm{\theta})}{\sum_{i=1}^nS(\bm{x_i};\bm{\theta})}.$$
To retain compatibility with Lemma~\ref{main_lemma}, we block gradient flow through the denominator during backpropagation, effectively treating it as a constant.
\end{remark}

The linear and ratio families capture many practical constraints relevant to healthcare applications. Linear constraints include the overlap constraint and budget constraints (each patient incurs a treatment cost under limited resources). Ratio constraints cover safety constraints (e.g., risk levels associated with patients) and certain fairness constraints (e.g., conditional statistical parity metric). Section~\ref{subsec:ablation_more_constraint} assesses MOSIC's extendability to these constraints. Due to the nonconvexity of both the objective and the feasible region, extending the guarantee to more complex constraints remains an open direction for future work.
\begin{algorithm}[t]
\caption{MOSIC}\label{alg:overall}
\begin{algorithmic}[1]
\STATE \textbf{Input:} $\{(\bm{x_i}, a_i, y_i )\}_{i=1}^n$, constraint-related values $c$ and $\alpha$, learning rate $\eta$, decay rate $\zeta$
\STATE Estimate $\hat \mu_{0}(\bm{X})$, $\hat \mu_{1}(\bm{X})$, $\hat e(\bm{X})$ \textit{\% Estimate nuisance functions}
\STATE Compute $\hat\phi_(\bm{x}_i, a_i, y_i)$ using nuisance functions \textit{\% Pseudo-outcomes}
\STATE Construct $L(\bm{\theta};\bm{\lambda};c,\alpha)$
\STATE $\bm{\theta}^* \leftarrow \gamma$-GDA($\eta$, $\zeta$, $L(\bm{\theta};\bm{\lambda};c,\alpha)$) 
\textit{\% Solve \ref{prob:minmax} using Algorithm~\ref{alg:GDA_algo}}
\STATE \textbf{Output:} Parametric surrogate model $S(\bm{X};\bm{\theta}^*)$
\end{algorithmic}
\end{algorithm}

\begin{figure*}[t]
    \centering
    \subfigure[Unconfounded($\tilde{\omega}=0$)]{
        \includegraphics[width=0.48\linewidth]{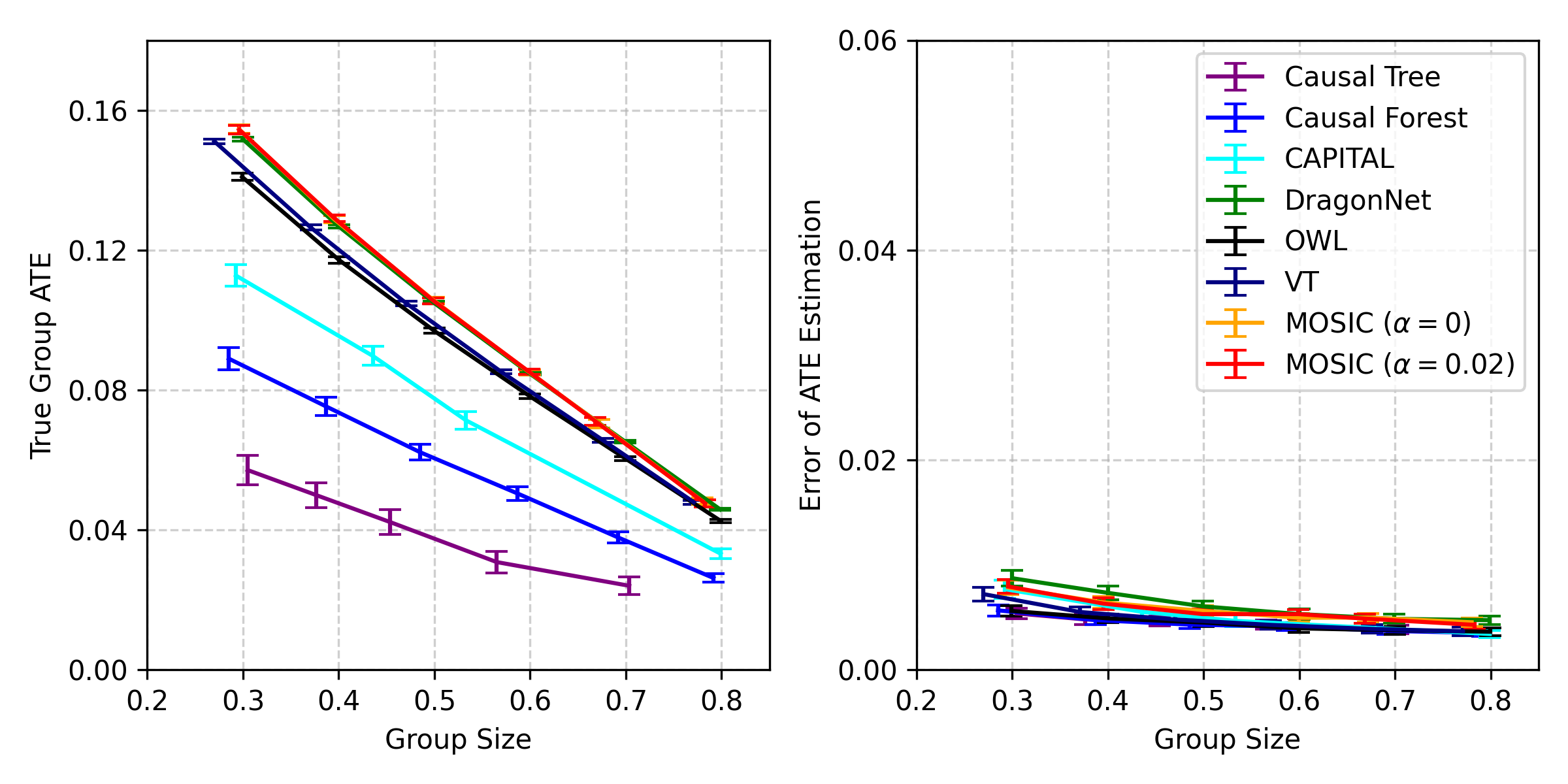}\label{fig:syn-wo-cs}
    }
    \subfigure[Confounded($\tilde{\omega}=5$)]{
        \includegraphics[width=0.48\linewidth]{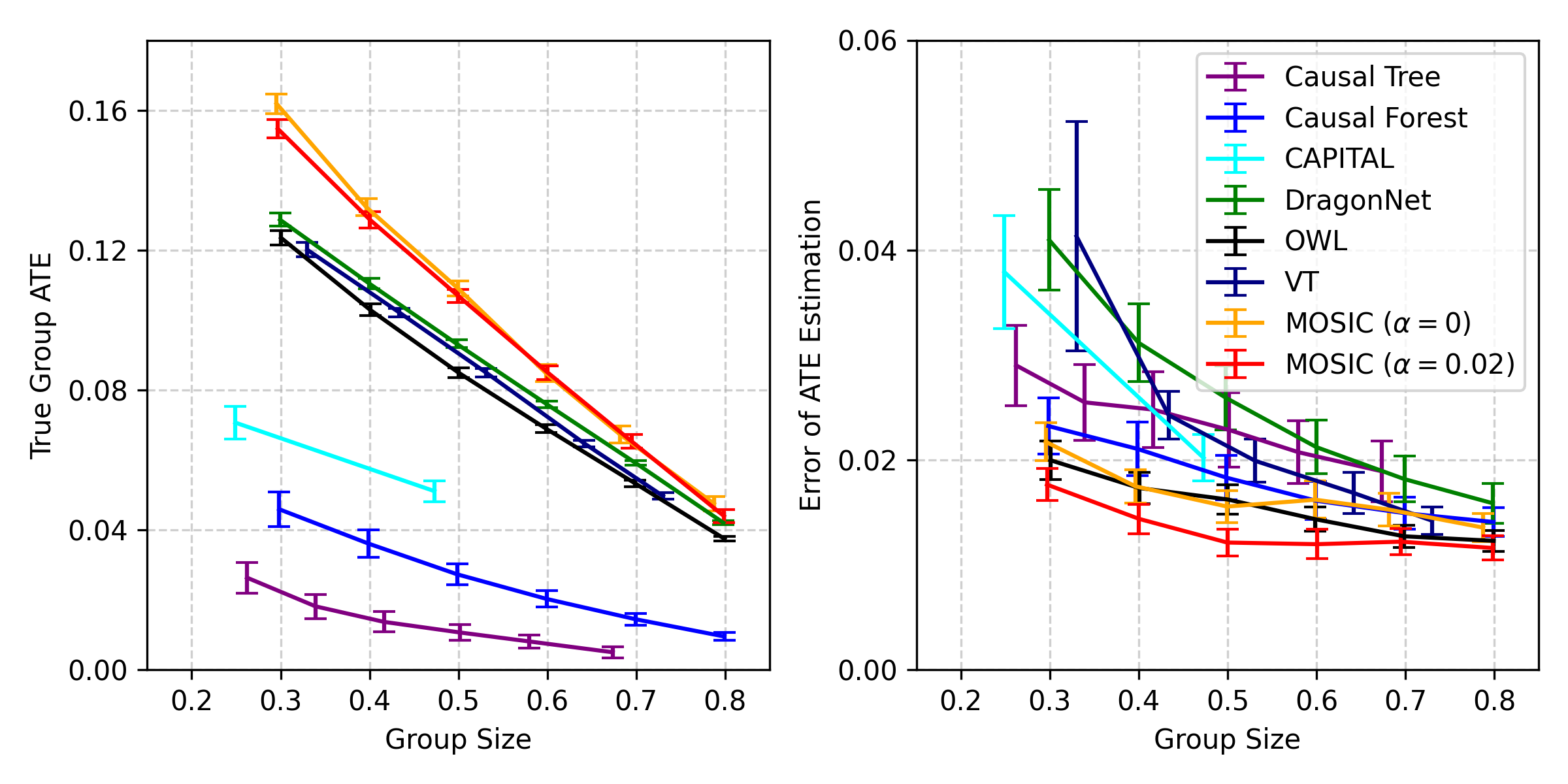}\label{fig:syn-w-cs}
    }\hfill
    \vspace{-5pt}
    \caption{True ATE and estimation error across different group sizes on synthetic data}
    \vspace{-5pt}
    \label{fig:syn}
\end{figure*}

Finally, we summarize our overall framework, MOSIC, in Algorithm~\ref{alg:overall}. It first estimates the nuisances and computes the pseudo-outcomes for each sample, which are then fed into the objective function (\ref{prob:minmax}). The optimization is subsequently performed using the $\gamma$-GDA algorithm.

\vspace{-5pt}
\section{Experiments}\label{sec:experiments}

We evaluate \methodname\ on both synthetic and real-world data. Section~\ref{subsec:setting} outlines the experimental setup. 
Section~\ref{subsec:results} demonstrates that MOSIC achieves high ATE while maintaining comparable covariate balance, or vice versa.
Section~\ref{subsec:ablation} further examines the robustness, interpretability, extensibility, and computational properties of MOSIC.

\vspace{-5pt}
\subsection{Setup}\label{subsec:setting}
\paragraph{Datasets} We evaluate MOSIC on synthetic and real-world datasets:
\begin{enumerate}[noitemsep, topsep=0pt, leftmargin=2em]
    \item 
    We generate synthetic data following a procedure adapted from \citet{assaad2021counterfactual} (details in 
    Appendix C.2). 
    We introduce an \emph{imbalance parameter} $\tilde{\omega} \geq 0$ to determine the strength of confounding bias.
In particular, we generated two datasets of size $n=5{,}000$ with $d=10$ covariates and the continuous outcome $Y$: (1) one with no confounding bias ($\tilde{\omega} = 0$) and (2) one with high confounding bias ($\tilde{\omega} = 5$).
    \item 
    We use two de-identified datasets from intensive care units (ICU): 
    The eICU dataset ($n=13{,}361$, $d=23$ covariates)~\citep{pollard2018eicu} and the MIMIC-IV ($n=6{,}516$, same covariates)~\citep{johnson2023mimic}. In both datasets, treated patients ($A = 1$) received an initial Glucocorticoids dose of 160mg within 10 hours before to 24 hours after ICU admission. The outcome $Y$ represents 7-day survival, with $Y=1$ indicating survival and $Y=0$ otherwise. The covariates $\bm{X}$ include lab test results, vital signs, and sequential organ failure assessment (SOFA) scores~\citep{vincent1996sofa}.
\end{enumerate}

\begin{figure*}[t]
    \centering
    \subfigure[eICU\label{fig:real-world-eicu}]{%
        \includegraphics[width=0.48\linewidth]{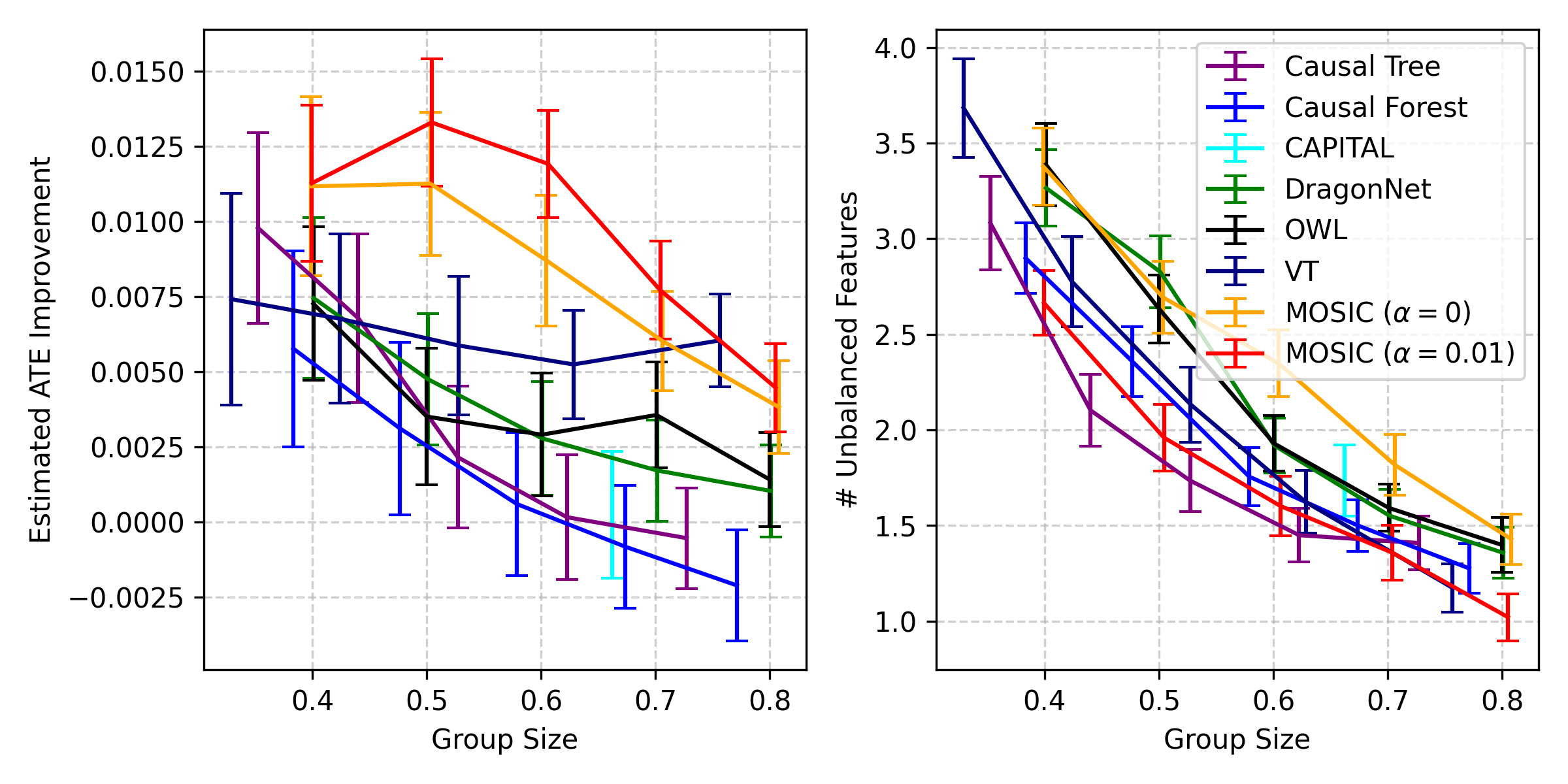}
    }\hfill
    \subfigure[MIMIC-IV\label{fig:real-world-mimic}]{%
        \includegraphics[width=0.48\linewidth]{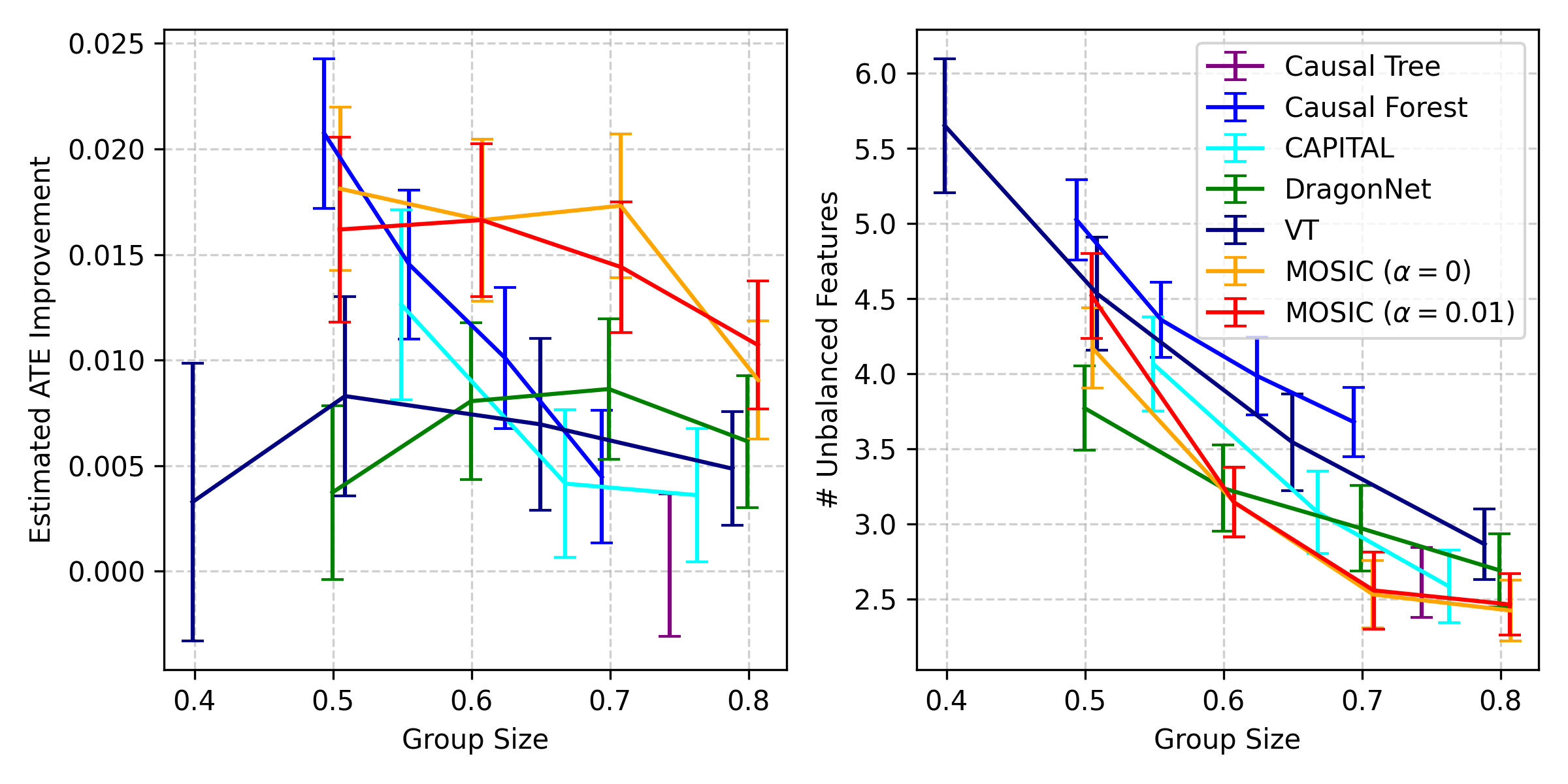}
    }
    \vspace{-5pt}
    \caption{Estimated ATE and the number of unbalanced features on real-world datasets.\protect\footnotemark}
    \vspace{-5pt}
    \label{fig:real-world}
\end{figure*}
\footnotetext{The OWL method failed to converge on the MIMIC-IV dataset and was therefore omitted from the figure.}

\paragraph{Baselines}
We compare \methodname\ with two categories from Section~\ref{sec:lit}: those designed for subgroup identification and those adapted from CATE estimation algorithms.

\begin{enumerate}[noitemsep, topsep=0pt, leftmargin=2em]
    \item Dedicated subgroup identification methods: We evaluate CAPITAL, OWL, and VT, modifying them to incorporate a group size constraint via thresholding (Details in 
    Appendix C.3). 
    These methods prioritize interpretability, making it unclear whether performance limits stem from this trade-off or poor CATE estimation. To address this, we consider the next category.
    \item CATE estimation algorithms: We evaluate three methods: CT, CF, and Dragonnet in the main results. We additionally compare with DR-learner, R-learner, BART, and an overlap-weighted variant of MOSIC (MOSIC-OW) in 
    Appendix E.4.
    In these baselines, patients are ranked by the estimated CATE values, and the top subgroup of the desired size is selected~\citep{vanderweele2019selecting}. While not designed for subgroup selection, they provide a natural baseline. If CATE estimation were accurate, this approach would identify the optimal subgroup, allowing us to separate the impact of CATE estimation reliability from the subgroup interpretability trade-off.
\end{enumerate}

\paragraph{Evaluation Metrics} 
We assess performance using two metrics: \textbf{(1) Subgroup ATE}, which measures whether identified subgroups achieve high ATE at the desired size. On synthetic data, we compute the ground‑truth ATE; on real-world data, we use the difference between the subgroup and overall AIPTW estimates. \textbf{(2) ATE Estimation Reliability}. On synthetic data, it is measured by the AIPTW estimation error. On real-world data, where the true ATE is unobserved, it is measured by the number of unbalanced features. 
A feature is considered unbalanced \citep{cohen2013statistical} if its standardized mean difference (SMD) $>0.2$ after IPTW reweighting~\citep{austin2009using, zang2023high}. More unbalanced features indicate greater estimation uncertainty and lower subgroup reliability. To validate imbalance as a proxy for estimation error, we also report the correlation of estimation error and feature imbalance on synthetic data 
(Appendix E.1). 
To assess how well MOSIC enforces the overlap constraint, we further measure the proportion of test-set samples that violate the overlap constraint on the real-world data.

\paragraph{Implementation Details}
For all datasets, we perform 100 random splits of the training and test sets. For each split, we first conduct a 5-fold cross-validation on the training set to determine the optimal hyperparameters 
(Appendix C.5). 
The model is then retrained on the entire training set using the selected hyperparameters and evaluated on the corresponding test set. The final results are reported using the mean and standard deviations across all 100 evaluations.

We implement the subgroup identification model ($S$) using MLP and DT. In the next sections, `\methodname' refers to \methodname-MLP, unless stated otherwise. For DTs, we adopt the neural-network representation of ~\cite{marton2024gradtree}. All these models are trained using Algorithm~\ref{alg:GDA_algo}, with L1 regularization applied in the loss function. For nuisance function estimation, we use LR for the propensity score model $\hat e(\cdot)$, and Dragonnet for the outcome models $\hat \mu_{a}(\cdot)$ 
(Appendix C.4).
\footnote{Empirically, estimating propensity scores with Dragonnet results in a large number of unbalanced features; we therefore adopt LR, demonstrating the flexibility of \methodname.}
Our implementation shares the training data for nuisance estimation and subgroup selection. But the final comparison on the test set is not affected by such sharing because the test set is never used for either step. 

\vspace{-5pt}
\subsection{Results}\label{subsec:results}

\paragraph{Synthetic Data}
To assess the impact of overlap constraints, we compare \methodname\ with ($\alpha = 0.02$) and without ($\alpha = 0$) them. 
Figure~\ref{fig:syn} demonstrates that \methodname\ consistently identifies subgroups with the highest true subgroup ATE across all group sizes (Figure~\ref{fig:syn-wo-cs} and~\ref{fig:syn-w-cs}, left);
and it achieves the lowest ATE estimation errors (Figure~\ref{fig:syn-wo-cs} and~\ref{fig:syn-w-cs}, right). 
We also investigate the statistical properties of MOSIC in 
Appendix F and G.

\paragraph{Real-World Data}
Since the real-world datasets contain a large portion of samples with propensities outside [0.05, 0.95] 
(Figure E.5), 
we relax the overlap constraint threshold to be $\alpha=0.01$. Figures~\ref{fig:real-world} and 
Figure E.4
demonstrate that MOSIC consistently outperforms other methods, and 
Table E.1
quantifies the statistical significance of the performance differences shown in Figure~\ref{fig:real-world}. We highlight the importance of jointly evaluating subgroup ATE and covariate balance when assessing performance. A large ATE alone is insufficient if covariate imbalance persists, as it may indicate unreliable estimates. As shown in Figures~\ref{fig:real-world-eicu} and \ref{fig:real-world-mimic} (left), \methodname\ achieves higher subgroup ATEs at comparable group sizes in most cases. Even when the ATE advantage is not statistically significant (e.g., MOSIC ($\alpha=0.01$) vs. CF at $c= 0.6$ on MIMIC, p = 0.68), MOSIC delivers significantly better covariate balance (p = 5.1E-04; Figures~\ref{fig:real-world-eicu} and \ref{fig:real-world-mimic}, right). 

To quantify the uncertainty of the ATE, we additionally compute 95\% confidence intervals for the subgroup ATEs and conduct a sensitivity analysis of unmeasured confounding 
(Appendix E.3). 
For completeness, feature imbalance with SMD$>$ 0.1 as the threshold is also presented in 
Appendix E.7.

Figure \ref{fig:overlap_evaluation_alpha} (right) shows that MOSIC with the overlap constraint successfully limits the number of test-set samples falling outside the allowable propensity range, demonstrating effective enforcement of the constraint (additional results are provided in 
Appendix E.6). 
This reduction in extreme-propensity samples leads to improved feature balance on eICU (Figures \ref{fig:real-world-eicu}, right). 

\paragraph{Alternative Outcome Definition.}
To evaluate robustness to the outcome specification, we additionally consider ventilation-free days within 7 days of ICU admission, assigning 0 ventilation-free days to patients who die within 7 days. 
Table~\ref{tab:vfd_results} shows that the subgroup ATEs identified by MOSIC and the baselines are largely comparable. However, MOSIC consistently achieves substantially better covariate balance, reducing the number of unbalanced features by 25--40\% across subgroup sizes. Additional balance diagnostics in 
Appendix E.8
confirm this trend. These results are consistent with our main findings and suggest that the benefit of MOSIC is robust to the choice of outcome definition.

\begin{table}[t]
\centering
\caption{Alternative outcome definition: ventilation-free days within 7 days.}
\label{tab:vfd_results}
\resizebox{\columnwidth}{!}{
\begin{tabular}{lccc|ccc}
\toprule
& \multicolumn{3}{c|}{Subgroup ATE}
& \multicolumn{3}{c}{\# Unbalanced Features} \\
\cmidrule(lr){2-4} \cmidrule(lr){5-7}
Method & $c=0.6$ & $c=0.7$ & $c=0.8$
& $c=0.6$ & $c=0.7$ & $c=0.8$ \\
\midrule
OWL
& 0.05$\pm$0.01 & 0.04$\pm$0.00 & 0.03$\pm$0.00
& 2.59$\pm$0.18 & 2.28$\pm$0.17 & 1.73$\pm$0.14 \\

VT
& 0.04$\pm$0.01 & 0.04$\pm$0.01 & 0.04$\pm$0.01
& 2.67$\pm$0.19 & 2.20$\pm$0.17 & 1.88$\pm$0.16 \\

CT
& 0.04$\pm$0.00 & 0.04$\pm$0.00 & 0.04$\pm$0.00
& 3.06$\pm$0.23 & 2.23$\pm$0.19 & 1.74$\pm$0.15 \\

CF
& 0.04$\pm$0.00 & 0.03$\pm$0.00 & 0.04$\pm$0.00
& 2.65$\pm$0.18 & 2.09$\pm$0.18 & 1.85$\pm$0.17 \\

Dragonnet
& 0.05$\pm$0.00 & 0.05$\pm$0.00 & 0.04$\pm$0.00
& 2.65$\pm$0.18 & 2.14$\pm$0.15 & 1.89$\pm$0.17 \\

\midrule

MOSIC-MLP
& 0.05$\pm$0.00 & 0.05$\pm$0.00 & 0.04$\pm$0.00
& 1.88$\pm$0.16 & 1.48$\pm$0.12 & 1.55$\pm$0.13 \\

MOSIC-DT
& 0.04$\pm$0.00 & 0.04$\pm$0.00 & 0.04$\pm$0.00
& 1.52$\pm$0.13 & 1.50$\pm$0.13 & 1.43$\pm$0.13 \\

MOSIC-Forest
& 0.04$\pm$0.00 & 0.04$\pm$0.00 & 0.04$\pm$0.00
& 1.64$\pm$0.14 & 1.48$\pm$0.14 & 1.27$\pm$0.14 \\

\bottomrule
\end{tabular}
}
\end{table}

\begin{figure}
    \centering
    \includegraphics[width=0.9\linewidth]{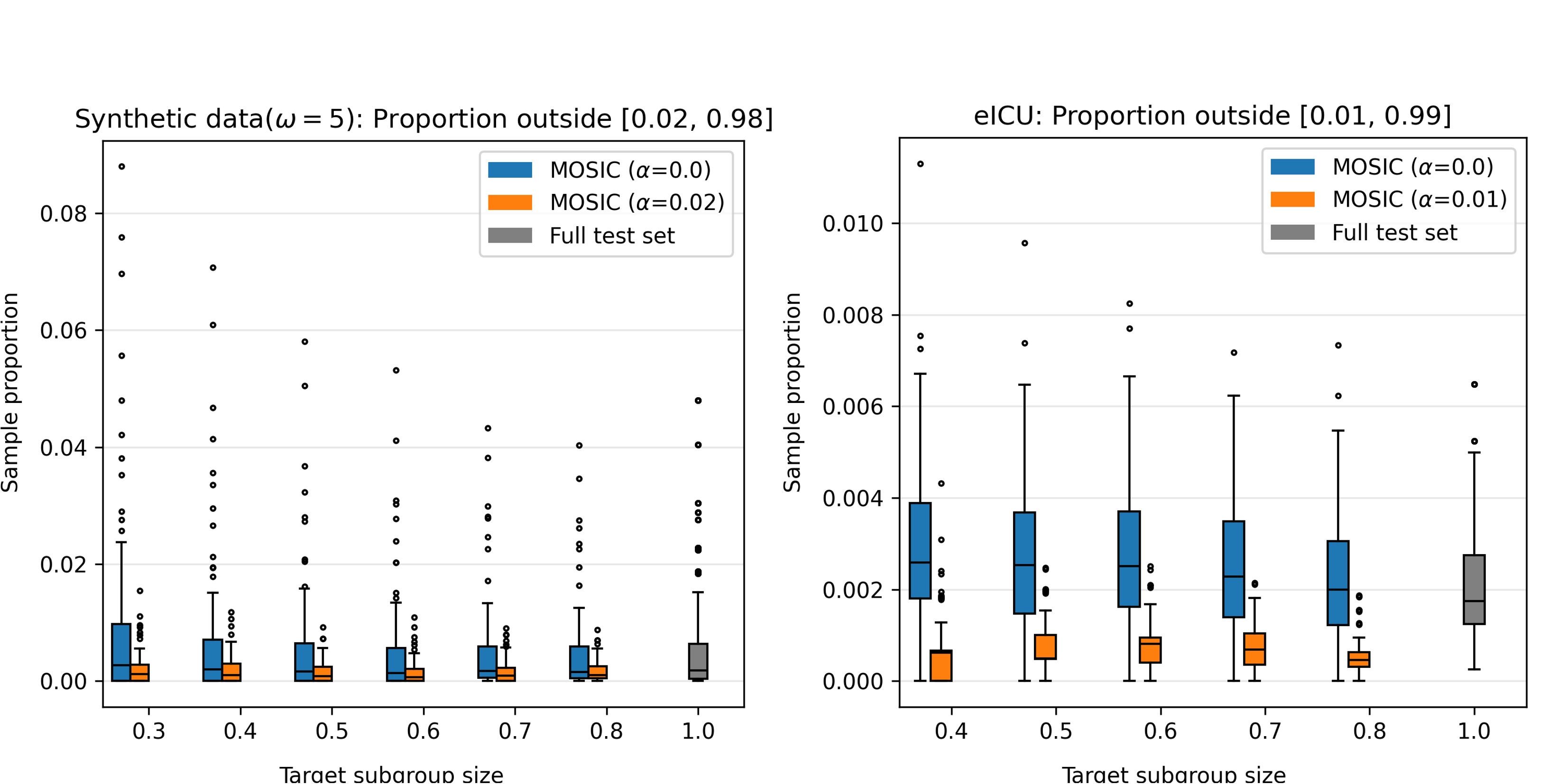}
    \caption{Test set sample proportion that violates the corresponding overlap constraints on confounded synthetic data (left) and eICU (right)}\label{fig:overlap_evaluation_alpha}
    \vspace{-10pt}
\end{figure}


\begin{table*}[!t]
\centering
\caption{Performance on synthetic data ($\tilde{\omega}=5$) under multiple additional constraints. Constraints: 1) Safety: Average Risk $\leq 0.05$; 2) Budget: Total Cost $\leq 1250$; 3) Fairness: $|$Sensitive Group Ratio - 0.5$|$ $\leq 0.01$.}
\resizebox{0.8\linewidth}{!}{%
\begin{tabular}{lcccc}
\toprule
\textbf{Metric} & \shortstack{\textbf{Group Size \& Overlap}} 
                & \shortstack{\textbf{Group Size \& Overlap}\\ \textbf{\& Safety}} 
                & \shortstack{\textbf{Group Size \& Overlap}\\ \textbf{\& Safety \& Budget}} 
                & \shortstack{\textbf{Group Size \& Overlap}\\ \textbf{\& Safety \& Budget \& Fairness}} \\
\midrule
Group Size & 0.50 ± 0.01 & 0.50 ± 0.05 & 0.48 ± 0.05 & 0.49 ± 0.02 \\
\# Unbalanced Features & 0.14 ± 0.40 & 0.20 ± 0.45 & 0.23 ± 0.57 & 0.23 ± 0.45 \\
True CATE & 0.10 ± 0.01 & 0.10 ± 0.01 & 0.09 ± 0.01 & 0.09 ± 0.01 \\
ATE Error & 0.01 ± 0.01 & 0.01 ± 0.01 & 0.01 ± 0.01 & 0.01 ± 0.01 \\
Average Risk & 0.08 ± 0.01 & \textbf{0.05 ± 0.01} & \textbf{0.05 ± 0.01} & \textbf{0.05 ± 0.01} \\
Total Cost & 1377.41 ± 41.57 & 1365.72 ± 144.95 & \textbf{1252.78 ± 132.59} & \textbf{1268.40 ± 42.44} \\
Sensitive Group Ratio & 0.45 ± 0.03 & 0.45 ± 0.03 & 0.46 ± 0.03 & \textbf{0.49 ± 0.02} \\
\bottomrule
\end{tabular}\label{tab:syn_more_constraints}
}
\vspace{-0.5\baselineskip}
\end{table*}

\subsection{Additional Analyses}\label{subsec:ablation}

\paragraph{Incorporating Sample-Splitting}
To assess whether internal sample-splitting improves subgroup learning, we implemented 5-fold cross-fitting on the training data: nuisances were trained on 4 folds and used to generate CATEs on the held-out fold, and the subgroup model was trained on these cross-fitted CATEs. Evaluation remained on the untouched test set, where nuisances were refit on the full training set before computing AIPTW. The results (Figure \ref{fig:sample_splitting}) are similar to our original pipeline, suggesting that the benefit of decoupling nuisance and subgroup estimation is limited in our setting, likely because further splitting of a modest dataset weakens nuisance estimation.

\begin{figure}[t]
    \centering
    \subfigure[Confounded Synthetic Data($\tilde{\omega}=5$)]{%
        \includegraphics[width=0.85\linewidth]{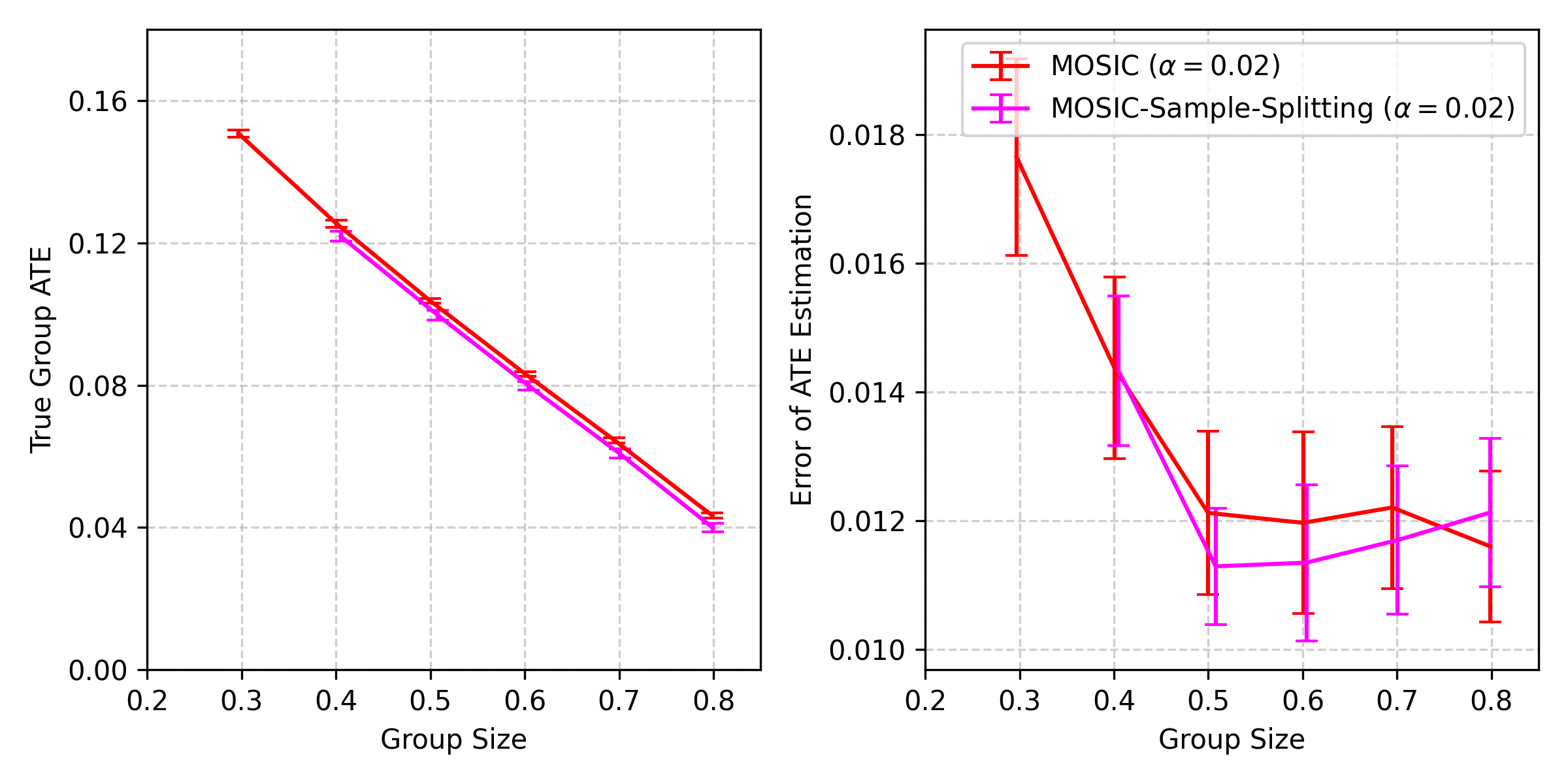}
    }\hfill
    \subfigure[eICU]{%
        \includegraphics[width=0.85\linewidth]{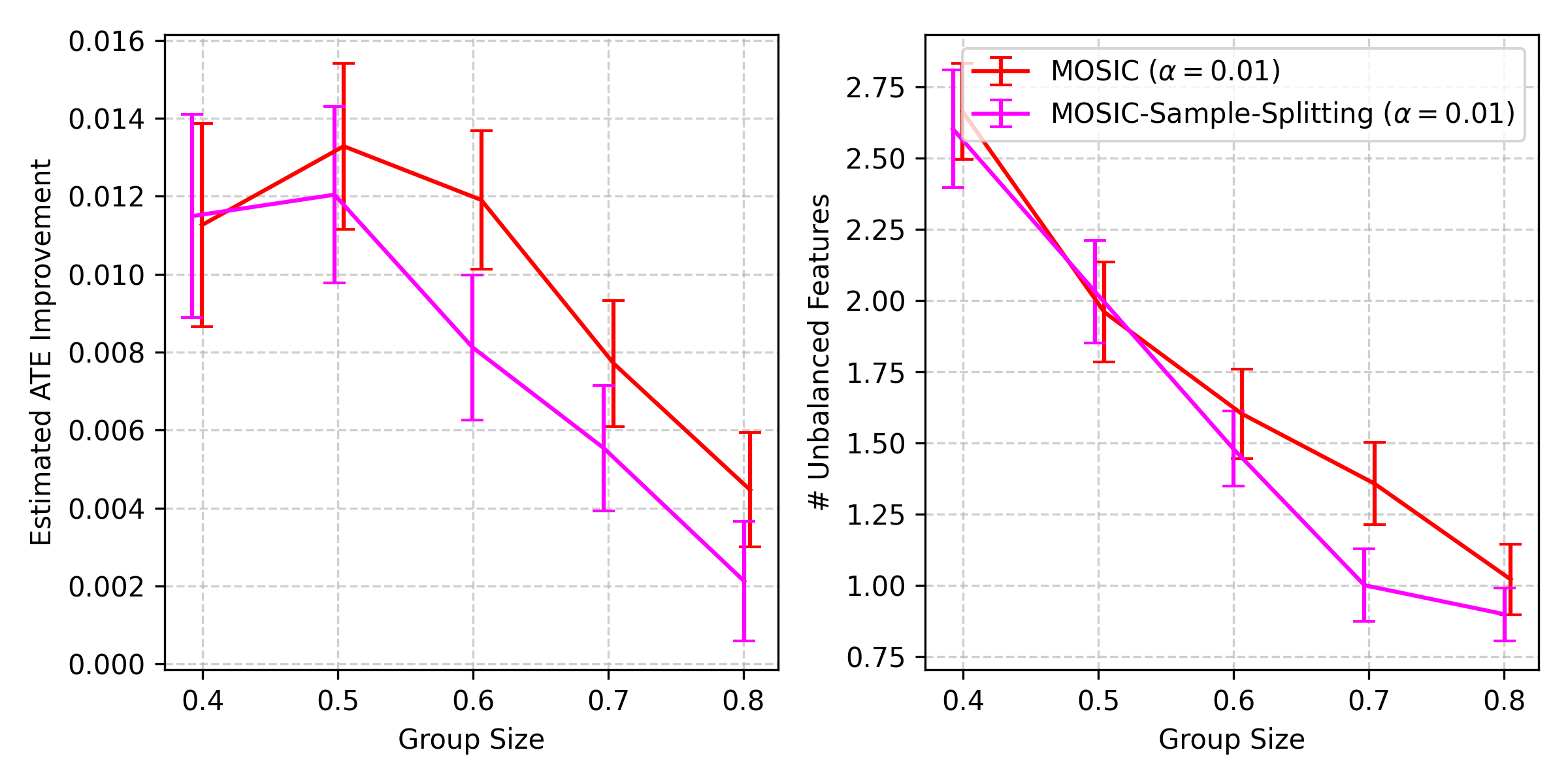}
    }
    \vspace{-5pt}
    \caption{MOSIC w/ and w/o cross-fitted CATE estimates.}
    \vspace{-5pt}\label{fig:sample_splitting}
\end{figure}

\paragraph{Decision Tree as Backbone}\label{subsec:dt_based}
While MOSIC-MLP outperforms baselines, its black-box nature limits interpretability. In contrast, decision trees are favored in clinical settings for their transparency
~\citep{cai2022capital}.
We therefore compare \methodname\ with DT backbone to DT-based baselines (CT, CF) on eICU. VT is excluded due to its high ATE estimation error on synthetic data (Figure~\ref{fig:syn-w-cs}). 

To impose a fair comparison, we match model capacity: we fix the tree depth to 5 for both MOSIC (denoted as MOSIC-DT) and CT, and use ensembles of 3 trees
of depth 5 for MOSIC (denoted as MOSIC-Forest) and CF. As shown in Figure~\ref{fig:eicu_dt}, MOSIC 
consistently outperforms CT and CF under the same interpretability requirement. Notably, despite the limited model capacity, MOSIC effectively enforces both group size and overlap constraints, leading to improved covariate balance.

\begin{figure}[t]	
    \centering
    \includegraphics[width=1\columnwidth]{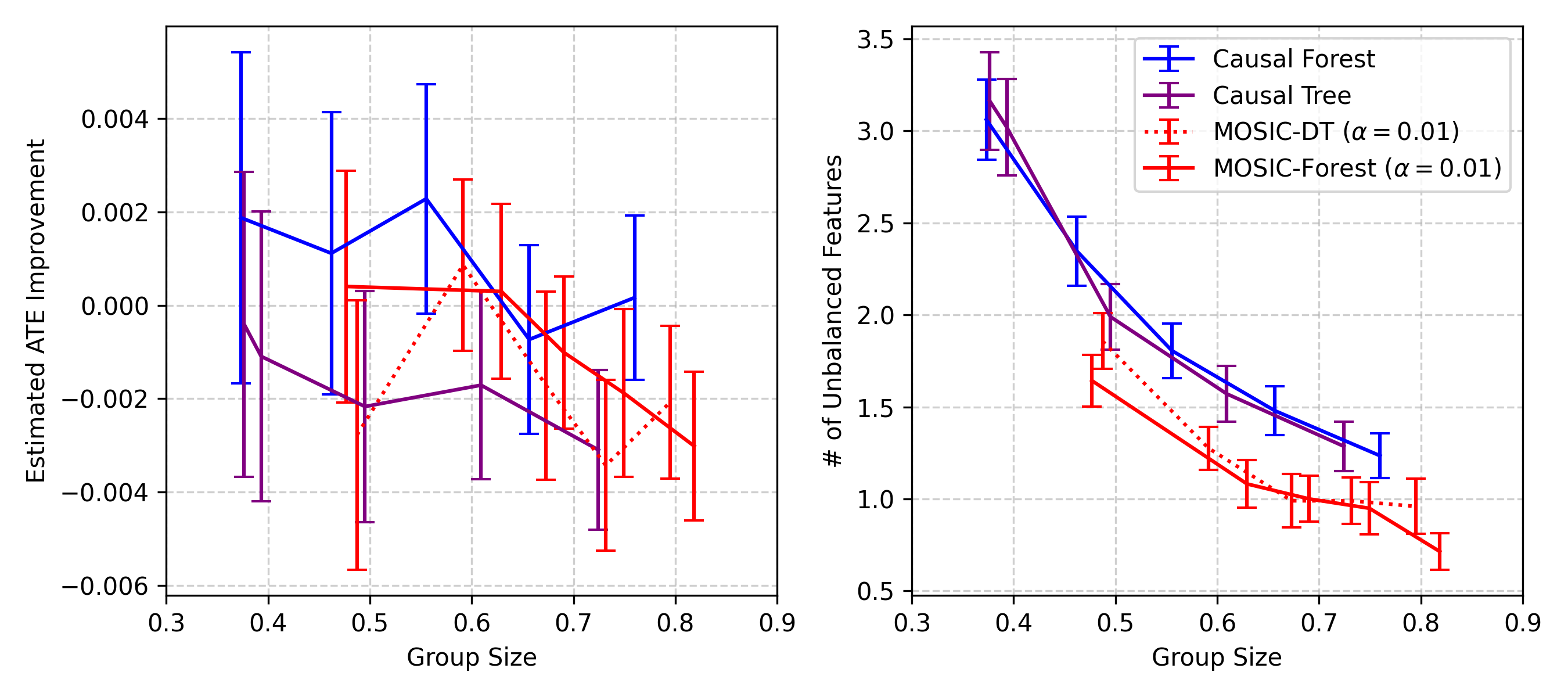}
    \caption{Results on eICU: MOSIC with DT backbone vs. DT-based baselines.
    }\label{fig:eicu_dt}
    \vspace{-10pt}
\end{figure}

\paragraph{Extension to Additional Constraints}\label{subsec:ablation_more_constraint}
In addition to the group size and overlap constraint introduced earlier, MOSIC readily extends to other constraints. We demonstrate this capability on both synthetic and real-world data. On synthetic data, we progressively impose further requirements on top of the size and overlap constraints: first adding a safety constraint, then safety and budget constraints, and finally safety, budget, and fairness constraints. Table~\ref{tab:syn_more_constraints} demonstrates that MOSIC can effectively satisfy all of them (Detailed specification of these constraints is provided in 
Appendix E.9).

On eICU, in addition to the size ($c=0.4$) and overlap constraint ($\alpha=0.01$), we introduced a safety constraint 
motivated by evidence that glucocorticoids may exacerbate neural damage~\citep{hill2021glucocorticoids}. 
In particular, we required that the proportion of patients with a Glasgow Coma Scale (GCS) score $<6$
remain below 0.05. 
GCS measures the level of consciousness, with lower scores indicating more severe dysfunction. We targeted patients with GCS $<6$ because this threshold represents the most severe central nervous system dysfunction in the SOFA, the most commonly used ICU metric~\cite{singer2016third}. Without a constraint on the GCS score, we observed that the selected subgroup contained a nontrivial number of patients with GCS$<6$. Since observational studies have suggested that glucocorticoids may exacerbate neural system damage~\cite{hill2021glucocorticoids}, such a subgroup raises a safety concern. 
We therefore limit, rather than strictly exclude, such patients, acknowledging that a small subset with severe neurological impairment may still derive benefit from treatment.

\begin{figure*}[!t]
    \centering

    \begin{minipage}[t]{0.3\linewidth}
        \centering
        \subfigure[MOSIC-DT without GCS constraint]{
            \includegraphics[width=0.75\linewidth]{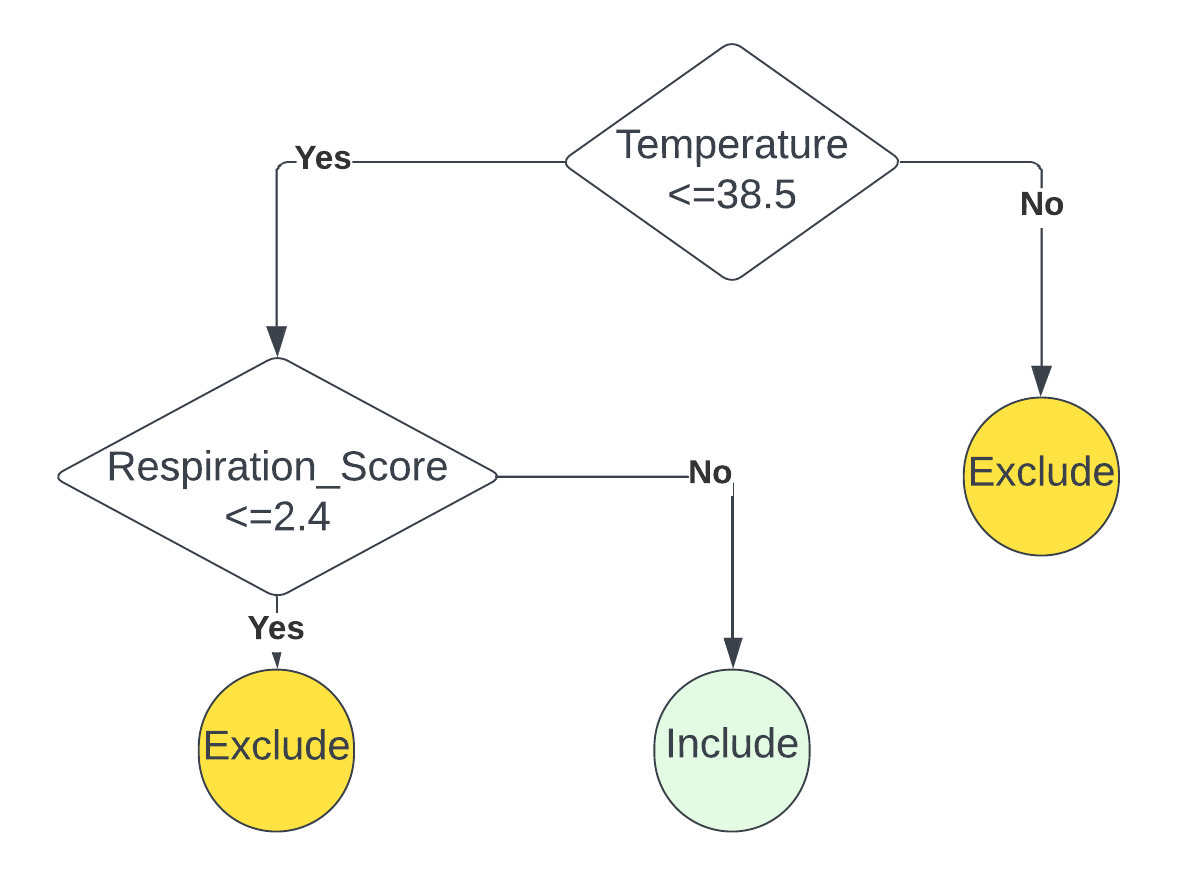}
            \label{subfig:eicu_dt_wo_cns}
        }

        \vspace{-1em}

        \subfigure[MOSIC-DT with GCS constraint]{
            \includegraphics[width=0.9\linewidth]{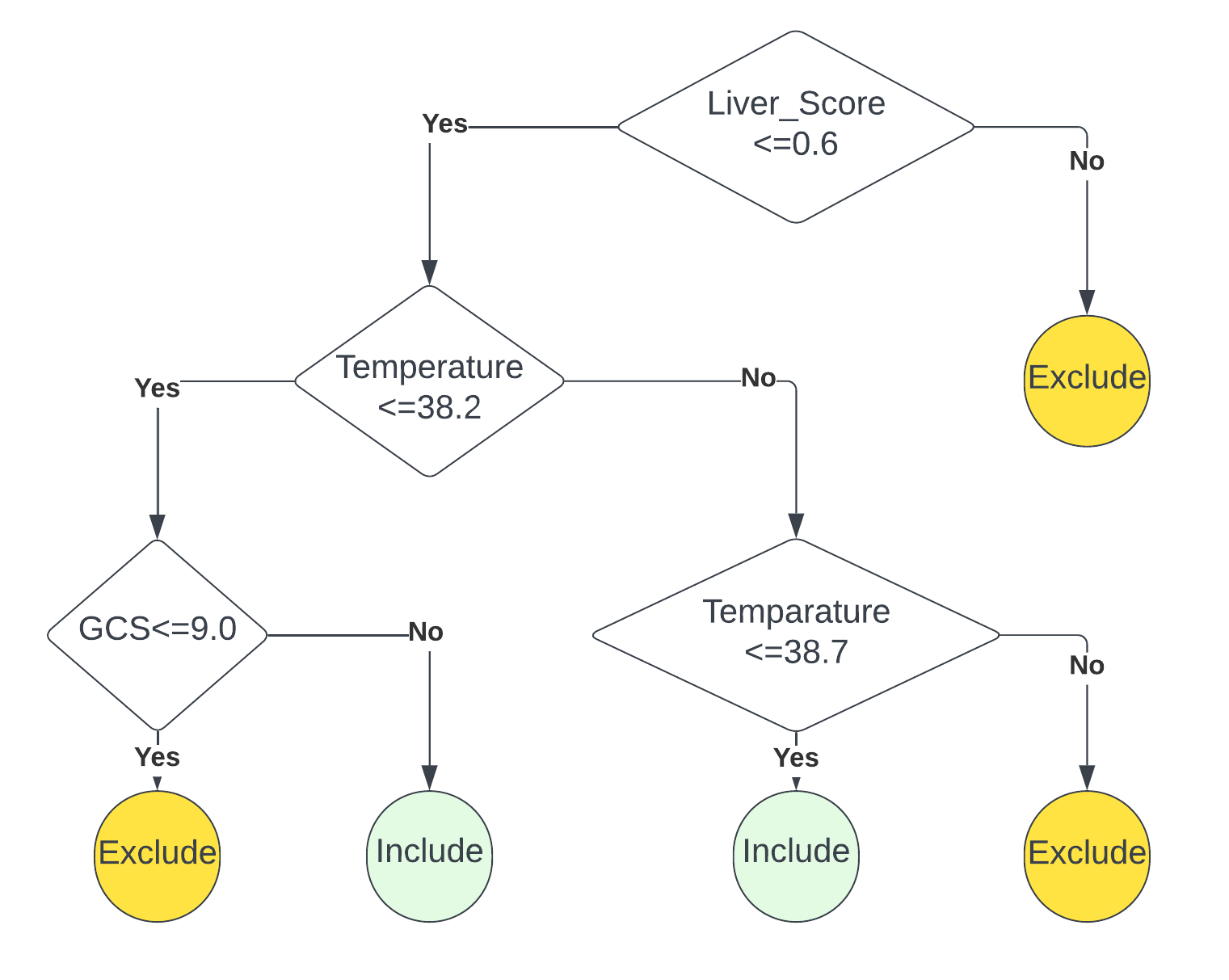}
            \label{subfig:eicu_dt_w_cns}
        }
    \end{minipage}
    \hfill
    \begin{minipage}[t]{0.32\linewidth}
        \centering
        \subfigure[SHAP analysis of MOSIC-MLP without GCS constraint]{
            \includegraphics[width=\linewidth]{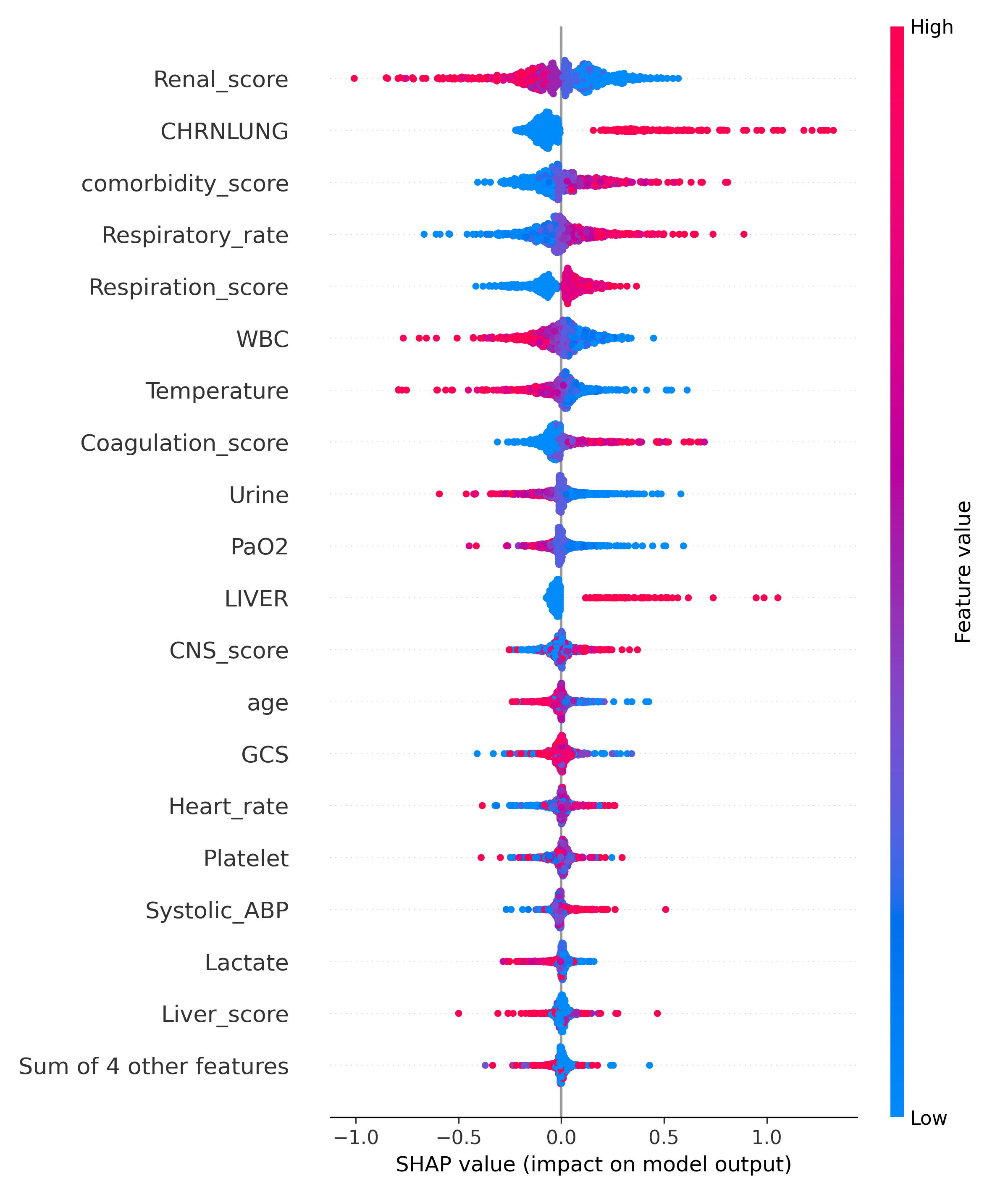}\label{fig:eicu_shap_wo_cns_constraint}
        }
    \end{minipage}
    \hfill
    \begin{minipage}[t]{0.32\linewidth}
        \centering
        \subfigure[SHAP analysis of MOSIC-MLP with GCS constraint]{
            \includegraphics[width=\linewidth]{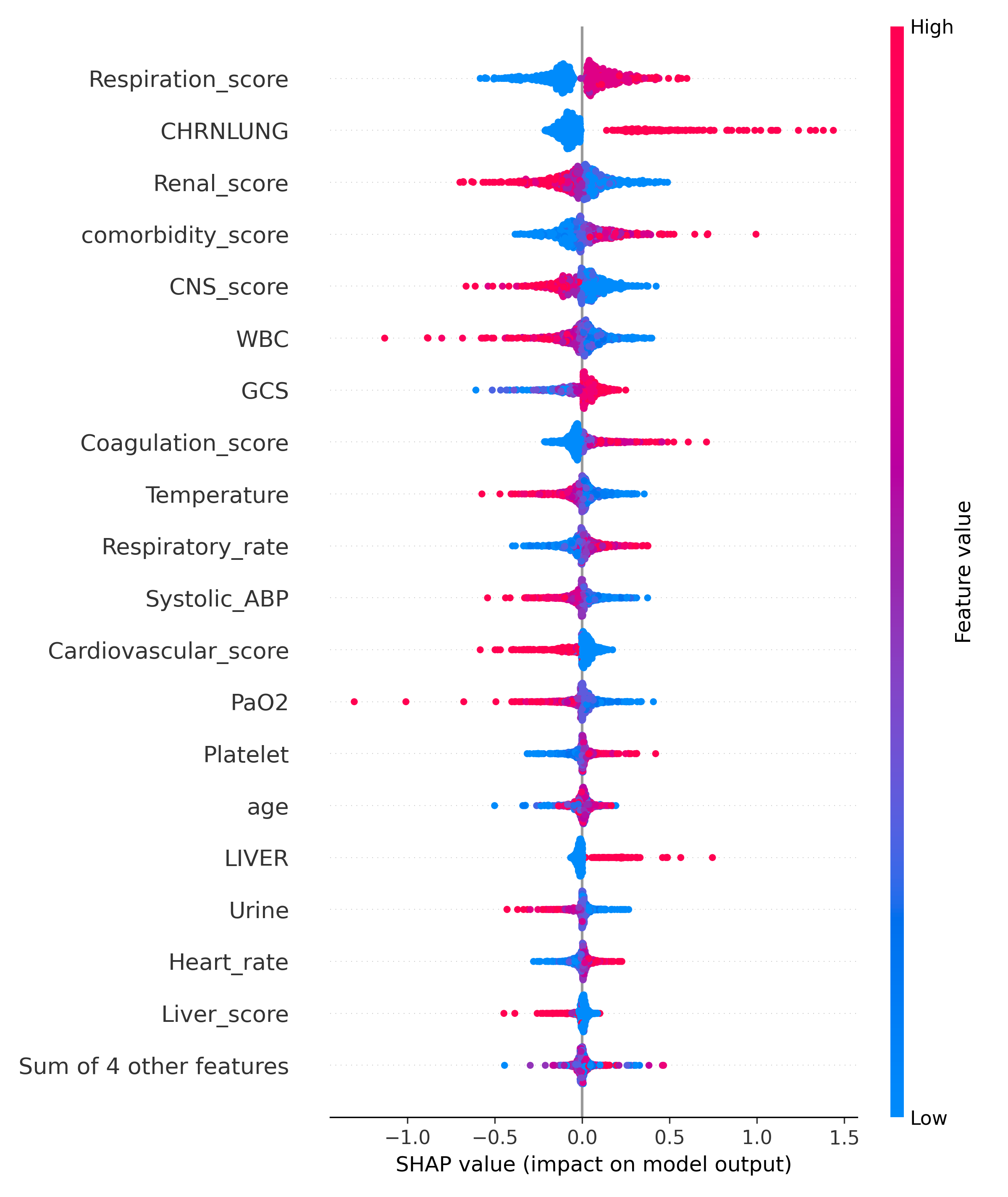}\label{fig:eicu_shap_w_cns_constraint}
        }
    \end{minipage}

    \vspace{-6pt}
    \caption{Resulting DT example of MOSIC-DT (left) and SHAP analyses of MOSIC-MLP (middle/right) on eICU with and without the additional GCS$<6$ constraint. The two DTs are trained from the same split for comparison.}
    \label{fig:eicu_cns_constraint}
    \vspace{-10pt}
\end{figure*}

\begin{table}[t]
\centering
\caption{Results on eICU with the additional constraint requiring the proportion of patients with GCS$<6$ to remain below 0.05. We report the mean $\pm$ standard error over 100 random splits.}
\label{table:eicu_cns4}
\resizebox{\linewidth}{!}{%
\begin{tabular}{lcc}
\toprule
& \multicolumn{2}{c}{\textbf{Constraint}} \\
\cmidrule(lr){2-3}

\textbf{Metric} & \textbf{Size \& Overlap} & \textbf{Size \& Overlap \& GCS} \\
\midrule
\rowcolor{gray!3}
Group Size            & $0.46 \pm 0.10$   & $0.44 \pm 0.10$   \\
\# Unbalanced Features & $2.0 \pm 1.72$      & $2.2 \pm 2.28$      \\
\rowcolor{gray!3}
ATE Improvement         & $0.02 \pm 0.02$ & $0.02 \pm 0.02$ \\
Proportion of GCS $<6$   & \bm{$0.15 \pm 0.05$}  & \bm{$0.03 \pm 0.03$} \\
\bottomrule
\end{tabular}
}
\vspace{-10pt}
\end{table}

We evaluate both MOSIC-DT and MOSIC-MLP with this additional constraint (See 
Appendix E.10 
for details). Table~\ref{table:eicu_cns4} shows that MOSIC-DT can additionally satisfy this safety constraint even under the interpretability restriction. The resulting DT examples show that MOSIC-DT indeed introduces an explicit rule to exclude low-GCS patients (Figure~\ref{subfig:eicu_dt_w_cns}), in contrast to the result before adding this constraint (Figure~\ref{subfig:eicu_dt_wo_cns}).
Since DTs may be sensitive to randomness, we additionally run MOSIC-MLP with and without the GCS$<6$ constraints and run post-hoc SHAP-value analysis to evaluate the feature contributions. Figure~\ref{fig:eicu_cns_constraint} confirms that the dominant features identified by SHAP are generally consistent with the rule structure revealed by the decision tree. In particular, the SHAP values of the GCS features were ambiguous before we enforced the constraint that avoids patients with GCS $<6$ (Figure~\ref{fig:eicu_shap_wo_cns_constraint}), but they became clearly separated after we enforced it (Figure~\ref{fig:eicu_shap_w_cns_constraint}), aligning well with our intention. 

\paragraph{Discussion on an identified subgroup rule}
To provide insight into the characteristics of the identified subgroup, we examine one learned subgroup rule.
Fig.~\ref{subfig:eicu_dt_wo_cns} shows that MOSIC-DT primarily selects patients with higher respiratory severity and lower temperature. The respiratory component is clinically plausible, as corticosteroids are known to benefit patients with severe pulmonary inflammation, such as ARDS and pneumonia~\cite{chaudhuri2021corticosteroids, torres2015effect}. The SHAP analysis of MOSIC-MLP (Fig.~\ref{fig:eicu_shap_wo_cns_constraint}) reveals a similar pattern, assigning higher importance to chronic lung disease, elevated respiratory rate, and indicators of respiratory severity. In contrast, the temperature signal is less straightforward. While hypothermia in sepsis is associated with worse outcomes and dysregulated host response~\cite{bhavani2019identifying}, it is not an established marker of corticosteroid benefit. Overall, some subgroup characteristics align with existing clinical knowledge, whereas others warrant further investigation.

\begin{table}[!t]
\centering
\caption{Runtime (seconds) of each method across different sample sizes. MOSIC shares the same first-stage nuisance fitting as Dragonnet; reported times reflect MOSIC’s second-stage optimization only.}
\label{tab:runtime}
\resizebox{\linewidth}{!}{%
\begin{tabular}{llcccc}
\toprule
& & \multicolumn{4}{c}{\textbf{Sample Size}} \\
\cmidrule(lr){3-6}

Method & Constraint & 1000 & 3000 & 10000 & 30000 \\
\midrule
Dragonnet & - & 15.77 & 25.86 & 58.22 & 151.32 \\
CT & - & 0.00 & 0.01 & 0.03 & 0.06 \\
CF & - & 0.02 & 0.05 & 0.16 & 0.41 \\
VT & - & 0.07 & 0.16 & 0.37 & 0.67 \\
OWL & - & 0.20 & 0.20 & 0.73 & 1.83 \\
CAPITAL & Size & 1.37 & 7.29 & 25.33 & 89.57 \\
MOSIC & Size + Overlap & 1.64 & 1.97 & 1.67 & 2.24 \\
MOSIC & Size + Overlap + Safety & 1.50 & 1.78 & 1.90 & 2.53 \\
MOSIC & Size + Overlap + Safety + Budget & 2.34 & 1.78 & 1.74 & 2.43 \\
MOSIC & Size + Overlap + Safety + Budget + Fairness & 2.35 & 2.04 & 1.64 & 2.44 \\
\bottomrule
\end{tabular}
}
\vspace{-1.5em}
\end{table}

\paragraph{Runtime Analysis and Training Dynamics} 


Gradient-based methods such as GDA scale well in high-dimensional settings, with per-iteration cost linear in the number of parameters. This makes MOSIC more scalable than combinatorial approaches such as CAPITAL. Although MOSIC involves many constraints, its overhead is modest: each constraint requires a single evaluation per update, resulting in a cost of $O(nm)$ for $m$ constraints and $n$ samples. Moreover, the $n$ overlap constraints contribute only $O(n)$ operations in total, since each involves a single multiplication. Consequently, MOSIC maintains a low per-iteration cost despite the large number of constraints. Figure~\ref{fig:eicu_training_dynamic} further confirms stable optimization under multiple constraints.

We also provide an empirical runtime comparison 
(Appendix H). 
To reflect resource-limited clinical environments, all experiments are conducted on CPUs. Table~\ref{tab:runtime} shows that the optimization overhead is small relative to nuisance estimation, suggesting that computation is unlikely to be a practical deployment bottleneck.

\begin{figure}[t]
    \centering
    \includegraphics[width=1\linewidth]{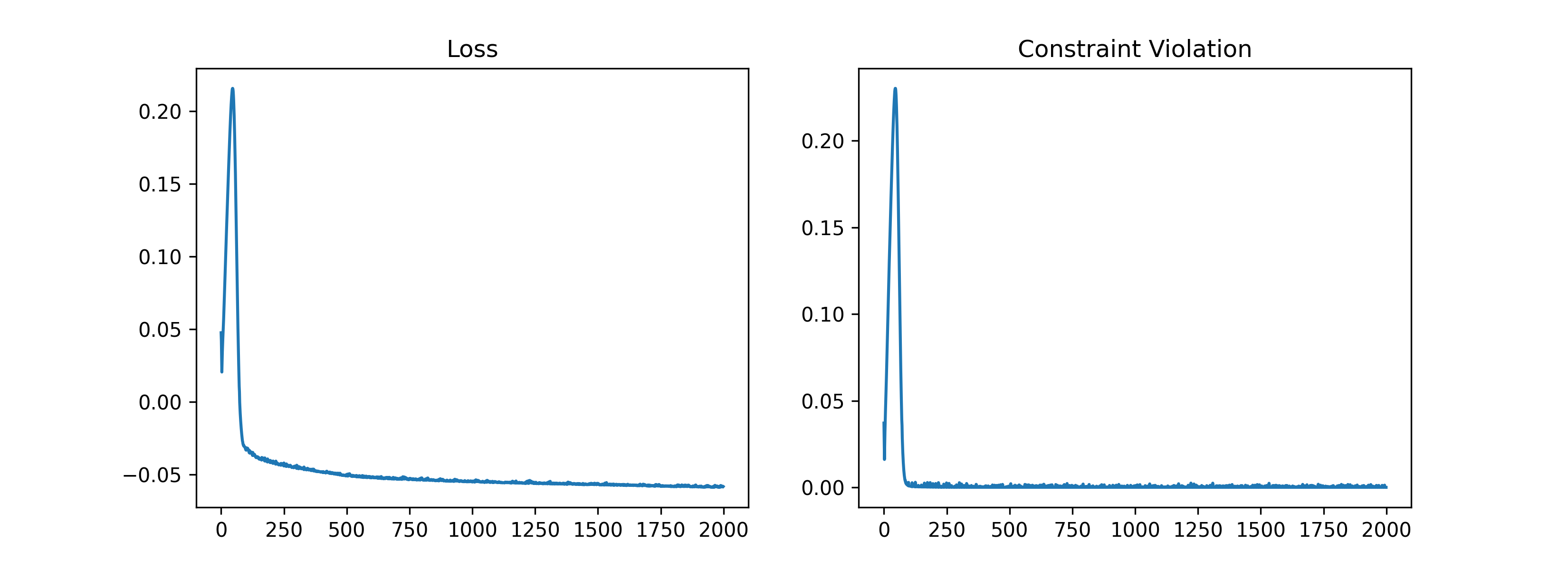}
    \caption{Training dynamics on eICU under size and overlap constraints. Left: training loss. Right: total constraint violation (positive values indicate violations; 0 indicates all constraints are satisfied)} \label{fig:eicu_training_dynamic}
\end{figure}

\vspace{-10pt}
\section{Conclusion}
We propose MOSIC, a model-agnostic framework for optimal subgroup identification that handles multiple constraints with feasibility guarantees. It demonstrates strong empirical performance under group size and overlap constraints, flexibly extends to additional clinical constraints, and supports diverse models to deliver interpretable solutions.

Finally, we acknowledge that the real-world ICU datasets used in this study have known limitations, including potential immortal-time bias and unmeasured confounding. Therefore, subgroup rules obtained from these datasets (e.g., Fig.~\ref{fig:eicu_cns_constraint}) should be interpreted as hypothesis-generating rather than clinically prescriptive. These intrinsic limitations affect all subgroup identification and CATE-based approaches equally, 
and do not affect the conclusion of our comparison.
In addition, our ablation study using a GCS-based safety constraint demonstrates how domain knowledge can be integrated to refine subgroup definitions when needed.

\section{GenAI Disclosure}
We used generative AI tools solely for language polishing (e.g., grammar and clarity edits). All scientific content, including problem formulation, methodology, theory, experiments, results, and conclusions, was developed and verified by the authors.

\section{Limitations and Ethical Considerations}
This study uses publicly available, de-identified ICU datasets (eICU and MIMIC-IV) released for research under established data use agreements; no re-identification was attempted. The analyses do not involve direct interaction with human subjects and are conducted in accordance with the datasets’ policy. Methodologically, results are based on observational data and standard causal assumptions (e.g., unconfoundedness and overlap), so identified subgroups and estimated effects should be interpreted as exploratory rather than clinical guidance. The proposed method is intended for research and decision support, not for direct clinical deployment without further validation and oversight.

\begin{acks}
We thank Suraj Rajendran and Zhenxing Xu for their contributions to MIMIC-IV data processing. This work utilized computing resources provided by AWS. We would also like to acknowledge the support from NSF 2212175, NIH RF1AG072449, RF1AG084178, R01AG080991, R01AG080624, R01AG076448, R01AG076234, and R01NS140142.

\end{acks}

\bibliographystyle{ACM-Reference-Format}
\balance
\bibliography{ref}

\appendix

\newpage

\counterwithin{figure}{section}
\counterwithin{table}{section}
\renewcommand{\thefigure}{\thesection.\arabic{figure}}
\renewcommand{\thetable}{\thesection.\arabic{table}}

\section{Preliminary Definitions and Theorems}\label{appx:def}
\subsection{Local minmax point}
\begin{definition}[Local minmax point]\label{def_local_minmax}
    A point \((\mathbf{\theta}^\star, \mathbf{\lambda}^\star)\) is said to be a \textit{local minmax point} of \(L\), if there exists \(\delta_0 > 0\) and a continuous function \(h\) satisfying \(h(\delta) \to 0\) as \(\delta \to 0\), such that for any \(\delta \leq \delta_0\), and any \((\mathbf{\theta}, \mathbf{\lambda})\) satisfying 
\[
\|\mathbf{\theta} - \mathbf{\theta}^\star\| \leq \delta \quad \text{and} \quad \|\mathbf{\lambda} - \mathbf{\lambda}^\star\| \leq h(\delta),
\]
we have
\[
L(\mathbf{\theta}^\star, \mathbf{\lambda}) \leq L(\mathbf{\theta}^\star, \mathbf{\lambda}^\star) \leq \max_{\mathbf{\lambda}' : \|\mathbf{\lambda}' - \mathbf{\lambda}^\star\| \leq h(\delta)} L(\mathbf{\theta}, \mathbf{\lambda}').
\]
If a point $(\theta^\star,\lambda^\star)$ satisfies 
$$[\nabla^2_{\theta\theta}L - \nabla^2_{\theta\lambda}L(\nabla^2_{\lambda\lambda}L)^{-1}\nabla^2_{\lambda\theta}L] \succ 0,$$
we call it a \textit{strict local minmax point}~\cite{pmlr-v119-jin20e}.
\end{definition}

\subsection{Linearly Stable Point}
\begin{definition}[Linearly Stable Point]\label{def_stable_points}For a differentiable dynamical system \(\mathbf{w}\), a fixed point \(\mathbf{z}^\star\) is a \textit{linearly stable point} of \(\mathbf{w}\) if its Jacobian matrix \(\mathbf{J}(\mathbf{z}^\star) := \left(\frac{\partial \mathbf{w}}{\partial \mathbf{z}}\right)(\mathbf{z}^\star)\) has spectral radius \(\rho(\mathbf{J}(\mathbf{z}^\star)) \leq 1\).
\end{definition}

\subsection{Convergence of the \texorpdfstring{$\gamma$-GDA} \; algorithm}
\begin{theorem}\label{theorem_local_minimax}
(Jin et al \cite{pmlr-v119-jin20e}, Theorem 26): Given an objective: $\underset{x \ in \mathcal{X}}{\min} \underset{y \in \mathcal{Y}}{\max} f(x,y)$. For any twice-differentiable f, the strict linearly stable limit points of the $\gamma$-GDA flow are \{strict local minmax points\} $\cup$ \{($\theta$,$\lambda$) | ($\theta$,$\lambda$) is stationary and $\nabla^2_{\lambda\lambda}f(\theta,\lambda)$ is degenerate\} as $\gamma \rightarrow \infty$.
\end{theorem}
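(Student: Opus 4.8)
The plan is to linearize the $\gamma$-GDA flow at a stationary point and track how its spectrum behaves under the timescale separation $\gamma \to \infty$, reading the two defining inequalities of a \emph{strict local minmax point} directly off the limiting eigenvalues. Recall the continuous-time flow underlying Algorithm~\ref{alg:GDA_algo} is
\[
\dot{\boldsymbol{\theta}} = -\tfrac{1}{\gamma}\nabla_{\boldsymbol{\theta}} f(\boldsymbol{\theta},\boldsymbol{\lambda}), \qquad \dot{\boldsymbol{\lambda}} = \nabla_{\boldsymbol{\lambda}} f(\boldsymbol{\theta},\boldsymbol{\lambda}),
\]
whose fixed points are precisely the stationary points $\nabla f = \boldsymbol{0}$. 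At such a point the Jacobian of the flow is the block matrix
\[
J_\gamma = \begin{pmatrix} -\tfrac{1}{\gamma}\nabla^2_{\boldsymbol{\theta}\boldsymbol{\theta}}f & -\tfrac{1}{\gamma}\nabla^2_{\boldsymbol{\theta}\boldsymbol{\lambda}}f \\ \nabla^2_{\boldsymbol{\lambda}\boldsymbol{\theta}}f & \nabla^2_{\boldsymbol{\lambda}\boldsymbol{\lambda}}f \end{pmatrix},
\]
and linear stability in the sense of Definition~\ref{def_stable_points} amounts to requiring the eigenvalues of $J_\gamma$ to have non-positive real part (strictly negative for strict stability).

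First I would establish a fast/slow decomposition of $\mathrm{spec}(J_\gamma)$ as $\gamma\to\infty$. The lower-right block $\nabla^2_{\boldsymbol{\lambda}\boldsymbol{\lambda}}f$ is $O(1)$ while the entire top row is $O(1/\gamma)$, so a singular-perturbation (two-timescale) analysis applies. Assuming $\nabla^2_{\boldsymbol{\lambda}\boldsymbol{\lambda}}f$ is invertible, the implicit function theorem defines the quasi-stationary manifold $\boldsymbol{\lambda}(\boldsymbol{\theta})$ via $\nabla_{\boldsymbol{\lambda}} f(\boldsymbol{\theta},\boldsymbol{\lambda}(\boldsymbol{\theta}))=\boldsymbol{0}$, and a standard block-decoupling (Chang-type) similarity transform shows the eigenvalues of $J_\gamma$ split into \emph{fast} eigenvalues converging to those of $\nabla^2_{\boldsymbol{\lambda}\boldsymbol{\lambda}}f$, and \emph{slow} eigenvalues of order $1/\gamma$ whose leading coefficients are the eigenvalues of the negated Schur complement $-S$, where $S = \nabla^2_{\boldsymbol{\theta}\boldsymbol{\theta}}f - \nabla^2_{\boldsymbol{\theta}\boldsymbol{\lambda}}f(\nabla^2_{\boldsymbol{\lambda}\boldsymbol{\lambda}}f)^{-1}\nabla^2_{\boldsymbol{\lambda}\boldsymbol{\theta}}f$ is the Hessian of the reduced objective $\boldsymbol{\theta}\mapsto f(\boldsymbol{\theta},\boldsymbol{\lambda}(\boldsymbol{\theta}))$.

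Next I would read off stability from each block. Since both Hessian blocks are symmetric their eigenvalues are real, so the fast block forces $\nabla^2_{\boldsymbol{\lambda}\boldsymbol{\lambda}}f \preceq 0$ (strictly $\prec 0$) and the slow block forces $S \succeq 0$ (strictly $\succ 0$). These two strict conditions are exactly Definition~\ref{def_local_minmax}: $\nabla^2_{\boldsymbol{\lambda}\boldsymbol{\lambda}}f \prec 0$ makes $\boldsymbol{\lambda}^\star$ a strict local maximizer of $f(\boldsymbol{\theta}^\star,\cdot)$, and $S \succ 0$ makes $\boldsymbol{\theta}^\star$ a strict local minimizer of the reduced objective. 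Hence every strict linearly stable limit point with invertible $\nabla^2_{\boldsymbol{\lambda}\boldsymbol{\lambda}}f$ is a strict local minmax point; conversely, at any strict local minmax point the fast eigenvalues ($\approx \mathrm{spec}(\nabla^2_{\boldsymbol{\lambda}\boldsymbol{\lambda}}f)<0$) and slow eigenvalues ($\approx -\mathrm{spec}(S)/\gamma<0$) all have negative real part for all large $\gamma$, so the point is strictly stable. This yields one component of the characterization.

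The genuine obstacle is the degenerate case $\det\nabla^2_{\boldsymbol{\lambda}\boldsymbol{\lambda}}f = 0$. Here the reduction collapses: the manifold $\boldsymbol{\lambda}(\boldsymbol{\theta})$ and the Schur complement are undefined, and a zero eigenvalue of $\nabla^2_{\boldsymbol{\lambda}\boldsymbol{\lambda}}f$ sits exactly on the stability boundary, so its $O(1/\gamma)$ perturbation can land on either side of the imaginary axis in the limit. The linear test is therefore inconclusive at these points—they can be neither certified as nor excluded from strict local minmax points—so the cleanest resolution, matching the statement, is to absorb them wholesale into the second set $\{(\boldsymbol{\theta},\boldsymbol{\lambda}) : \text{stationary and } \nabla^2_{\boldsymbol{\lambda}\boldsymbol{\lambda}}f \text{ degenerate}\}$. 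The proof then concludes by exhaustiveness: a trichotomy on the invertibility of $\nabla^2_{\boldsymbol{\lambda}\boldsymbol{\lambda}}f$, combined with the sign constraint $\nabla^2_{\boldsymbol{\lambda}\boldsymbol{\lambda}}f \preceq 0$ inherited from the fast block, shows every strict linearly stable limit point falls into exactly one of the two listed sets.
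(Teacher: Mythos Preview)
The paper does not provide a proof of Theorem~\ref{theorem_local_minimax}; it is quoted as Theorem~26 of Jin et~al.\ \cite{pmlr-v119-jin20e} in the appendix section ``Preliminary Definitions and Theorems'' and is invoked purely as a black-box tool in the proof of Proposition~\ref{local_minmax}. There is therefore no paper-side argument to compare your proposal against.

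For what it is worth, your sketch is the standard two-timescale/Schur-complement argument and is essentially how Jin et~al.\ establish the result in the cited reference: linearize the $\gamma$-GDA flow at a fixed point, use the $O(1)$ versus $O(1/\gamma)$ block structure to split the spectrum into fast modes governed by $\nabla^2_{\boldsymbol{\lambda}\boldsymbol{\lambda}}f$ and slow modes governed by the negated Schur complement $-S$, and read off the two strict-local-minmax conditions $\nabla^2_{\boldsymbol{\lambda}\boldsymbol{\lambda}}f\prec 0$ and $S\succ 0$ from strict stability. Your treatment of the degenerate case---noting that the reduced dynamics and Schur complement are undefined when $\nabla^2_{\boldsymbol{\lambda}\boldsymbol{\lambda}}f$ is singular, so those stationary points must be set aside as a separate component of the characterization---is also the right move and matches the statement.
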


\section{Proofs}\label{appx:proof}

\subsection{Proof of Lemma~\ref{lemma:surrogate_overlap}}\label{appx:proof_surrogate_overlap}
\LemmaSurrogateOverlap*

\begin{proof}
We proceed by proving both directions separately.

\textbf{Forward Direction:} Suppose $S(x_i;\theta)h(x_i) \leq 0$. We aim to show that this implies $\alpha \leq e(x_i) \leq 1 - \alpha$.

\begin{align*}
    & S(x_i;\theta)h(x_i)\leq 0\\
    \implies & h(x_i) \leq 0 \quad\quad \hfill{\text{(Since } S(x_i) >0 \text{)}} \\
    \implies & 1 - \frac{e(x_i)(1 - e(x_i))}{\alpha (1 - \alpha)}\leq 0 \quad\quad \hfill{\text{(Substituting } h(x_i) \text{)}}\\
    \implies & \alpha (1 - \alpha) - e(x_i)(1 - e(x_i))\leq 0\\
    \implies & e(x_i) - \alpha - \left(e(x_i)^2 -\alpha^2 \right) \geq 0\\
    \implies & (e(x_i) - \alpha)\left( 1 - (e(x_i) + \alpha)\right) \geq  0\\
    \implies & (e(x_i) - \alpha)\left( 1 - \alpha - e(x_i) \right) \geq  0\\
    \implies & \alpha \leq e(x_i) \leq 1 - \alpha.
\end{align*}

\textbf{Backward Direction:} Suppose $\alpha \leq e(x_i) \leq 1 - \alpha$. We need to show that this implies $S(x_i;\theta)h(x_i) \leq 0$.

Define the auxiliary function $q(z) = z(1-z)$, whose derivative is given by:
\[
q'(z) = 1 - 2z.
\]
Thus, $q(z)$ attains its maximum at $z = 0.5$. 


When $0 < \alpha \leq e(x_i) \leq 0.5$, we have $q'(e(x_i)) = 1 - 2 e(x_i) \geq 0$, which implies that $q(e(x_i))$ is non-decreasing on $[0,0.5]$. Consequently, from the assumption $\alpha \leq e(x_i)$, we obtain:
  \begin{align*}
      q(e(x_i)) \geq q(\alpha) \implies e(x_i)(1 - e(x_i)) \geq \alpha(1 - \alpha).
  \end{align*}
- Similarly, for $0.5 < e(x_i) \leq 1 - \alpha < 1$, we have $e(x_i)(1 - e(x_i)) \geq \alpha(1 - \alpha)$.

By combining both cases, we conclude that:
\[
e(x_i)(1 - e(x_i)) \geq \alpha(1 - \alpha).
\]
Dividing both sides by $\alpha(1 - \alpha)$ yields:
\begin{align*}
    1 - \frac{e(x_i)(1 - e(x_i))}{\alpha(1 - \alpha)} = h(x_i) \leq 0.
\end{align*}
Since $S(x_i) > 0$, it follows that:
\[
S(x_i)h(x_i) \leq 0.
\]
This completes the proof.
\end{proof}

\subsection{Proof of Proposition~\ref{local_minmax}}\label{appx:proof_local_minmax}
\LocalOptimality*

\begin{proof}
The derivation of Theorem~\ref{theorem_local_minimax} relies on the Jacobian matrix and requires twice differentiability. However, our objective function incorporates ReLU activations, introducing non-differentiability at the origin. This prevents us from directly applying Theorem~\ref{theorem_local_minimax} in its standard form. Nonetheless, since the probability of encountering exact zero inputs to ReLU is negligible, our objective function remains effectively differentiable in practice when using gradient-based optimization.

Thus, the key arguments of Theorem~\ref{theorem_local_minimax} extend to our objective (\ref{prob:minmax}), implying that its stable limit points must either be local minmax points or points where the second derivative is degenerate.

Further, we compute the first and second derivatives of $L$ with respect to $\lambda$:

\begin{equation}\label{fist_derivative}
    \frac{\partial L}{\partial \lambda_i} = \operatorname{ReLU}(g_i(\theta)) - \beta \lambda_i,
\end{equation}
\begin{equation}\label{second_derivative}
\frac{\partial^2L}{\partial \lambda_i \lambda_j} =
\begin{cases}
0 & \text{if } i \neq j, \\
-\beta & \text{if } i = j.
\end{cases}
\end{equation}

Since $\beta > 0$, the Hessian matrix $\frac{\partial^2L}{\partial \lambda_i \lambda_j}$ is never degenerate. Applying Theorem~\ref{theorem_local_minimax}, we conclude that 
the strict linearly stable limit points of the $\gamma$-GDA flow are precisely the set of local minmax points.
\end{proof}

\subsection{Proof of Lemma~\ref{main_lemma}} \label{appx:proof_main_lemma}
\MainLemma*
  
\begin{proof}

\noindent At convergence, the following condition holds:

\begin{equation}\label{eq:first_condition_raw}
    \frac{\partial L}{\partial \bm{\lambda}} \Big|_{\bm{\lambda} = \bm{\lambda^*}} = ReLU(\bm{g}(\bm{\theta^*})) - \beta\bm{\lambda^*} = \bm{0},
\end{equation}
\begin{equation}\label{eq:second_condition_raw}
    \frac{\partial L}{\partial \bm{\theta}}\Big|_{\bm{\theta}=\bm{\theta^*}}=
    -\frac{\partial f}{\partial \bm{\theta}}\Big|_{\bm{\theta}=\bm{\theta^*}} + \bm{\lambda^*}\frac{\partial ReLU}{\partial \bm{g}}\cdot\frac{\partial \bm{g}}{\partial \bm{\theta}}\Big|_{\bm{\theta}=\bm{\theta^*}}=\bm{0}
\end{equation}

\noindent Since Eq.~\eqref{eq:first_condition_raw} applies component-wise to $\bm{g}(\bm{\theta^*})$, we analyze each individual component $g(\bm{\theta^*})$ separately. Due to ReLU, we distinguish between two cases: $g(\bm{\theta^*}) \leq 0$ and $g(\bm{\theta^*}) > 0$. When $g(\bm{\theta^*}) \leq 0$, Lemma~\ref{main_lemma} holds trivially. Therefore, we focus on the remaining case where $g(\bm{\theta^*}) > 0$, which means the constraint is violated and Eq.~\eqref{eq:first_condition_raw} and Eq.\eqref{eq:second_condition_raw} simplifies to 
\begin{equation}\label{eq:first_condition}
    g(\bm{\theta^*}) = \beta\lambda^*,
\end{equation}
\begin{equation}\label{eq:second_condition}
    -\frac{\partial f}{\partial \bm{\theta}}\Big|_{\bm{\theta}=\bm{\theta^*}} + \lambda^*\frac{\partial g}{\partial \bm{\theta}}\Big|_{\bm{\theta}=\bm{\theta^*}}=\bm{0}
\end{equation}
\noindent Substituting Eq.\eqref{eq:first_condition} into Eq.\eqref{eq:second_condition}, we obtain
\begin{equation}\label{eq:lambda_plugged_in}
    -\frac{\partial f}{\partial \bm{\theta}}|_{\bm{\theta}=\bm{\theta^*}} + \frac{g(\bm{\theta^*})}{\beta}\frac{\partial g}{\partial \bm{\theta}}|_{\bm{\theta}=\bm{\theta^*}}=0.
\end{equation}
\noindent We note that
$g(\bm{\theta^*})$ is a scalar, meaning that
each element of $\frac{\partial f}{\partial \bm{\theta}}|_{\bm{\theta}=\bm{\theta^*}}$ is proportional to the corresponding elements in $\frac{\partial g}{\partial \bm{\theta}}|_{\bm{\theta}=\bm{\theta^*}}$ by the same constant $\frac{g(\bm{\theta^*})}{\beta}>0$. This allows us to pick any element of $\bm{\theta}$ to analyze $g(\bm{\theta^*})$. (Note that when $\theta\in \mathbb{R}^d$ is high-dimensional, Eq.\eqref{eq:lambda_plugged_in} is challenging to achieve, because we need one scalar to satisfy all $d$ equations. In practice, we observe small oscillations when the algorithm converges, which implies the algorithm finds it hard to exactly satisfy all equations.)

Denote $\phi_i = \phi(\bm{x}_i, a_i, y_i)$, $\Delta_i = \frac{\partial S(\bm{x}_i; \bm{\theta})}{\partial \theta_j}\Big|_{\bm{\theta}=\bm{\theta^*}} \leq L$, $\mu_\Delta = \mathbb{E}\Delta_i$, where $j = \underset{j}{\arg\max}\frac{\mu_\Delta}{L}$. The definition of function $f$ (Eq.\eqref{eq:subgroup_ate}) results in
\begin{align*}
\Big|\frac{\partial f}{\partial \theta_j} \Big|_{\bm{\theta} = \bm{\theta^*}} \Big|
&= \Big|\frac{ \sum_{i=1}^{n} \phi_i \Delta_i \sum_{s=1}^{n} S(\bm{x_s}; \bm{\theta})
    }{\left( \sum_{i=1}^{n} S(\bm{x}_i; \bm{\theta}) \right)^2 }- \\
    &\frac{\sum_{i=1}^{n} \Delta_i \sum_{s=1}^{n} \phi_s S(\bm{x_s}; \bm{\theta})}{\left( \sum_{i=1}^{n} S(\bm{x}_i; \bm{\theta}) \right)^2}\Big| \\
&= \Big|\frac{ \sum_{i=1}^{n}\sum_{s=1}^{n}\Delta_i S(\bm{x_s}; \bm{\theta})(\phi_i - \phi_s)}{\left( \sum_{i=1}^{n} S(\bm{x}_i; \bm{\theta}) \right)^2}\Big|
\end{align*}
\noindent With $\phi_{max}=\max_i|\hat{\phi}(x_i,a_i,y_i)|$, we have, 
\begin{align}
\left| \frac{\partial f}{\partial \theta_j} \Big|_{\bm{\theta} = \bm{\theta^*}} \right| &\leq 2\phi_{max}\Big|\frac{ \sum_{i=1}^{n}\sum_{s=1}^{n}\Delta_i S(\bm{x_s}; \bm{\theta})}{\left( \sum_{i=1}^{n} S(\bm{x}_i; \bm{\theta}) \right)^2}\Big|\notag\\
&= 2\phi_{max}\Big|\frac{ \sum_{i=1}^{n}\Delta_i\sum_{s=1}^{n} S(\bm{x_s}; \bm{\theta})}{\left( \sum_{i=1}^{n} S(\bm{x}_i; \bm{\theta}) \right)^2}\Big|\notag\\
&= 2\phi_{max}\frac{ \Big|\sum_{i=1}^{n}\Delta_i\Big|}{\sum_{i=1}^{n} S(\bm{x}_i; \bm{\theta})}\label{eq:f_partial_bound}
\end{align}
Substituting Eq.~\eqref{eq:f_partial_bound} into Eq.~\eqref{eq:lambda_plugged_in} and taking absolute values, we obtain
\begin{equation}\label{eq:g_partial_bound}
    g(\bm{\theta^*}) \leq \beta\left(\left|\frac{\partial g}{\partial \bm{\theta}}|_{\bm{\theta}=\bm{\theta^*}}\right|\right)^{-1}\frac{
    2\phi_{max}\left|\sum_{i=1}^{n} \Delta_i\right|
    }{\sum_{i=1}^{n} S(\bm{x}_i; \bm{\theta})}
\end{equation}

Next, we will prove the upper bound of the group size constraint violation and the general linear constraints violation by analyzing $\frac{\partial g}{\partial \bm{\theta}}|_{\bm{\theta}=\bm{\theta^*}}$.

\noindent\textbf{Case 1:} $g(\bm{\theta^*})=c - \frac{1}{n}\sum_{i=1}^nS(\bm{x}_i;\bm{\theta^*}) > 0$: the group size constraint is violated.

\begin{align}
        &\frac{\partial g}{\partial \theta_j}|_{\bm{\theta}=\bm{\theta^*}} = -\frac{1}{n}\sum_{i=1}^n \Delta_i\notag\\
        &\text{Substituting this into Eq.~\eqref{eq:g_partial_bound}:}\notag\\
        & g(\bm{\theta^*}) \leq \beta\left(\left|-\frac{1}{n}\sum_{i=1}^n \Delta_i\right|\right)^{-1}\frac{
    2\phi_{max}\left|\sum_{i=1}^{n} \Delta_i\right|
    }{\sum_{i=1}^{n} S(\bm{x}_i; \bm{\theta})}\notag\\
    &=\beta\frac{2\phi_{max}}{\frac{1}{n}\sum_{i=1}^{n} S(\bm{x}_i; \bm{\theta})}\label{eq:f_g_partial_pluged}
\end{align}
Substitute $\beta\leq \frac{\xi(c-\xi)\mu_{\Delta}}{2\phi_{max}L}$ into Eq.~\eqref{eq:f_g_partial_pluged}, we obtain
$$g(\bm{\theta^*}) \leq \frac{\xi(c-\xi)\mu_{\Delta}}{\frac{1}{n}\sum_{i=1}^{n} S(\bm{x}_i; \bm{\theta})L} \leq \frac{\xi(c-\xi)}{\frac{1}{n}\sum_{i=1}^{n} S(\bm{x}_i; \bm{\theta})} = \frac{\xi(c-\xi)}{c - g(\bm{\theta^*})}$$

\noindent Solving for $g(\bm{\theta^*})$, we obtain $g(\bm{\theta^*})\leq \xi$ (group size constraint satisfied) or $g(\bm{\theta^*})\geq c-\xi$ (the model collapses, group size is near zero).

     Next, we show that the model is improbable to collapse if the feasible region is non-negligible.
     
     Define the collapsed region as
$$
\bm{\Theta}_{\text{collapse}} := \left\{ \bm{\theta} \in \Theta : \frac{1}{n} \sum_{i=1}^n S(\bm{x}_i; \bm{\theta}) < \xi \right\},
$$
and the feasible region as
\begin{align*}
\bm{\Theta}_{\text{feasible}} := 
\Big\{ \bm{\theta} \in \Theta : \;
&\frac{1}{n} \sum_{i=1}^n S(\bm{x}_i; \bm{\theta}) \geq c \\
&\text{and other constraints are met} \Big\}
\end{align*}
for some thresholds \( 0 < \xi \ll c < 1 \). Suppose that the volume of the feasible region dominates that of the collapsed region, i.e.,
$$
|\bm{\Theta}_{\text{collapse}}| \ll |\bm{\Theta}_{\text{feasible}}|.
$$

Given $\forall i, 0\leq S(\bm{x}_i;\bm{\theta})\leq 1$, using Hoeffding inequality, we obtain
$$P\left(\sum_{i=1}^nS(\bm{x}_i;\bm{\theta}) - \mathbb{E}\sum_{i=1}^nS(\bm{x}_i;\bm{\theta}) \leq -t\right) \leq \exp(-\frac{2t^2}{n}).$$
Let $t = \sqrt{\frac{n\log(1/\delta)}{2}}$, with probability at least $1-\delta$, we have
$$\sum_{i=1}^nS(\bm{x}_i;\bm{\theta}) - \mathbb{E}\sum_{i=1}^nS(\bm{x}_i;\bm{\theta}) \geq -\sqrt{\frac{n\log(1/\delta)}{2}}$$
Assume $\bm{\theta}$ is sampled uniformly from $\Theta_{\text{collapse}}\cup\Theta_{\text{feasible}}$, $|\Theta_{collapse}| \ll |\Theta_{feasible}| \Rightarrow P(\bm{\theta}\in \Theta_{\text{collapse}}) \ll P(\bm{\theta}\in \Theta_{\text{feasible}})$. Then,
    \begin{align*}
        \mathbb{E}\sum_{i=1}^nS(\bm{x}_i;\bm{\theta}) &= P(\bm{\theta}\in \Theta_{collapse})\cdot \sum_{i=1}^nS(\bm{x}_i;\bm{\theta}) \\
        &+ P(\bm{\theta}\in \Theta_{feasible})\cdot \sum_{i=1}^nS(\bm{x}_i;\bm{\theta})
    \end{align*}
Given that the contribution from $\Theta_{\text{collapse}}$ is negligible, we approximate:
    $$\mathbb{E}\sum_{i=1}^nS(\bm{x}_i;\bm{\theta})\approx \sum_{i=1}^nS(\bm{x}_i;\bm{\theta}) |\{\bm{\theta}\in \Theta_{feasible}\}\geq nc$$
Therefore, with probability at least $1-\delta$, we have
    \begin{align*}
        \sum_{i=1}^nS(\bm{x}_i;\bm{\theta}) &\geq \mathbb{E}\sum_{i=1}^nS(\bm{x}_i;\bm{\theta})-\sqrt{\frac{n\log(1/\delta)}{2}}\\
        &\geq nc - \sqrt{\frac{n\log(1/\delta)}{2}}\\
        \implies \frac{1}{n}\sum_{i=1}^nS(\bm{x}_i;\bm{\theta}) &\geq c -\sqrt{\frac{\log(1/\delta)}{2n}},
    \end{align*}
i.e. the model is improbable to collapse if the feasible region is non-negligible.

\noindent\textbf{Case 2 :} $g(\bm{\theta^*})=a^k + \bm{b}^{k\top}\bm{S}(\bm{x}; \bm{\theta})>0$:

Similarly, we have
\begin{equation}
    g(\bm{\theta^*}) \leq \beta\left(\left|\sum_{i=1}^n b_i^k\Delta_i\right|\right)^{-1}\frac{
    2\phi_{max}\left|\sum_{i=1}^{n} \Delta_i\right|
    }{\sum_{i=1}^{n} S(\bm{x}_i; \bm{\theta})}.\label{eq:linear_constraint_partial_pluged}
\end{equation}
Let $Z = \sum_{i=1}^nb_i^k\Delta_i$, then $\mathbb{E}Z = \sum_{i=1}^nb_i^k\mathbb{E}\Delta_i=\mu_{\Delta}\sum_{i=1}^nb_i^k.$

By the triangle inequality, we have $\Big||Z| - |\mathbb{E}Z|\Big|\leq |Z - \mathbb{E}Z|$. So, by the Hoeffding inequality, we obtain
\begin{align*}
    P(\Big||Z| - |\mathbb{E}Z|\Big|\geq t) &\leq P(|Z - \mathbb{E}Z|\geq t)\\
    &\leq 2\exp(-\frac{2t^2}{4L^2\sum_{i=1}^n(b_i^k)^2})
\end{align*}
Let $t = L\sqrt{2 \sum_{i=1}^n(b_i^k)^2 \log\frac{2}{\delta}}$. Then with probability at least $1-\delta$,
$$|Z| \geq |\mathbb{E}Z| + t = |\mu_{\Delta}\sum_{i=1}^nb_i^k| + t.$$

Substituting this into Eq.~\eqref{eq:linear_constraint_partial_pluged}, we obtain
\begin{align*}
    g(\bm{\theta^*}) &\leq \beta\frac{
    2\phi_{max}\left|\sum_{i=1}^{n} \Delta_i\right|
    }{\left(|\mu_{\Delta}\sum_{i=1}^nb_i^k| + t\right)\sum_{i=1}^{n} S(\bm{x}_i; \bm{\theta})}\\
    &\leq \beta\frac{
    2\phi_{max}\sum_{i=1}^{n} \left|\Delta_i\right|
    }{\left(|\mu_{\Delta}\sum_{i=1}^nb_i^k| + t\right)\sum_{i=1}^{n} S(\bm{x}_i; \bm{\theta})}\\
    &\leq \beta\frac{2\phi_{max}L}{\left(|\mu_{\Delta}\sum_{i=1}^nb_i^k| + t\right)\frac{1}{n}\sum_{i=1}^{n} S(\bm{x}_i; \bm{\theta})}
\end{align*}

As analyzed in \textbf{Case 1}, when the feasible region is non-negligible, $\frac{1}{n}\sum_{i=1}^{n} S(\bm{x}_i; \bm{\theta})>c-\xi$ with high probability. Therefore,
\begin{align*}
    g(\bm{\theta^*}) &\leq \beta\frac{2\phi_{max}L}{\left(|\mu_{\Delta}\sum_{i=1}^nb_i^k| + t\right)(c-\xi)}\\
    &\text{Plug in }\beta\leq \frac{\xi(c-\xi)|\mu_\Delta|}{2\phi_{max}L},\\
    &\leq \frac{\xi|\mu_\Delta|}{|\mu_{\Delta}\sum_{i=1}^nb_i^k| + t}\\
    &\leq \frac{\xi|\mu_\Delta|}{|\mu_{\Delta}\sum_{i=1}^nb_i^k| + L\sqrt{2 \sum_{i=1}^n(b_i^k)^2 \log\frac{2}{\delta}}}\\
    &\leq \frac{\xi}{|\sum_{i=1}^nb_i^k| + \frac{L}{|\mu_{\Delta}|}\sqrt{2 \sum_{i=1}^n(b_i^k)^2 \log\frac{2}{\delta}}}\\
\end{align*}
By Cauchy-Schwarz inequality, we have $ \sqrt{\sum_{i=1}^n(b_i^k)^2} \geq \frac{|\sum_{i=1}^nb_i^k|}{\sqrt{n}}$, therefore,
\begin{align*}
    g(\bm{\theta^*}) &\leq \frac{\xi}{|\sum_{i=1}^nb_i^k| + \frac{L}{|\mu_{\Delta}|}\frac{|\sum_{i=1}^nb_i^k|}{\sqrt{n}}\sqrt{2 \log\frac{2}{\delta}}}\\
    &=\frac{\xi}{|\sum_{i=1}^nb_i^k| \left( 1 + \frac{L}{|\mu_{\Delta}|\sqrt{n}}\sqrt{\log\frac{2}{\delta}}\right)}
\end{align*}
    

\noindent Put together \textbf{Case 1} and \textbf{Case 2}, we finish the proof.

\end{proof}

\section{Implementation Details}\label{appx:implement}
\subsection{Selection of $\hat{\phi}$} \label{sec:phi}

$\frac{\sum_{i=1}^n S(\bm{x}_i;\theta) \hat{\phi}(\bm{x}_i, a_i, y_i)}{\sum_{i=1}^n S(\bm{x}_i;\theta)}$ can be used to represent the ATE estimated by different methods. And the following $\hat\phi_\mathrm{iptw}$ and $\hat\phi_\mathrm{aiptw}$,
correspond to the ATE estimated using IPTW and AIPTW, respectively.
\emph{inverse probability of treatment weighting} (IPTW) and \emph{augmented inverse probability of treatment weighting} (AIPTW), respectively. 
To clarify, $\phi(\bm{x}_i,a_i, y_i)$ does not depend on the parameter $\bm{\theta}$, these estimates are separated from the parametric surrogate model $S$.
\begin{align*}
    \hat\phi_\mathrm{iptw}(\bm{x}_i, a_i, y_i) =& \frac{a_i}{\hat e(\bm{x}_i)}y_i - \frac{1-a_i}{1-\hat e(\bm{x}_i)} y_i, \\
    \hat\phi_\mathrm{aiptw}(\bm{x}_i, a_i, y_i) = & \hat \mu_1(\bm{x}_i) - \hat \mu_0(\bm{x}_i) + \frac{a_i}{\hat e(\bm{x}_i)}(y_i - \hat \mu_1(\bm{x}_i)) \notag\\
    &- \frac{1-a_i}{1-\hat e(\bm{x}_i)}(y_i - \hat \mu_0(\bm{x}_i)).
\end{align*}

\subsection{Synthetic Data Generation}\label{subsec:synthetic}
Let
    $\sigma_X,\sigma_Y, \rho
    \in \mathbb{R}$ be fixed constants, 
    and let $\beta_1,\beta_\tau,\omega \in \mathbb{R}^p$.
Draw $\bm{X}$ according to a multi-variate normal distribution, and $A, Y(0), Y(1)$ as follows:
        \begin{align}
        &\bm{X} \sim \mathcal{MVN}(0,\sigma_X^2 [(1-\rho)I_p + \rho 1_p1_p^T]),\\
        &A|\bm{X} \sim \textup{Bernoulli}(\sigma(\bm{X}^T\omega)),\\
        &\epsilon \sim \mathcal{N}(0,\sigma_Y^2),\hspace{0.1in}\\
        &Y(0) = (\sin(10 * \bm{X} ) + 5*\bm{X}^2)^T\beta_1+ \epsilon,\\
        &Y(1) = (\sin(10 * \bm{X} ) + 5*\bm{X}^2)^T\beta_1 + \bm{X}^T\beta_\tau + \epsilon.
        \end{align}
We set the parameters as follows:
\begin{align*}
    p &= 10, \quad \sigma_X = \sigma_Y = 0.1, \quad \rho = 0.3, \\
    \beta_1 &= [0, 0, 0, 0, 2, 0, 0, 0, 0, 0], \\
    \beta_\tau &= [0.5, 0.5, 0.5, 0.5, 0, 0, 0, 0, 0, 0], \\
    \omega &= [0, -1 \cdot \tilde{\omega}, -1 \cdot \tilde{\omega}, 1 \cdot \tilde{\omega}, 1 \cdot \tilde{\omega}, -2 \cdot \tilde{\omega}, 0, 0, 0, 0].
\end{align*}

The \emph{imbalance parameter}, $\tilde{\omega} \geq 0$, scales the magnitude of the treatment assignment weight vector $\omega$. In particular, we generated two synthetic datasets:  (1) no confounding bias ($\tilde{\omega} = 0$); and (2) high confounding bias ($\tilde{\omega} = 5$). As there is no limitation to generate synthetic data, we set the total sample size for synthetic data to 5,000 to study the model performance under finite samples and use a balanced 50/50 train-test split. As for real-world datasets, we use a 70/30 train-test split.

\subsection{Baseline Implementation}\label{subsec:baseline}
\noindent\textbf{CAPITAL} identifies a subgroup by maximizing its size while ensuring the CATE exceeds a predefined threshold. Since subgroup size is not directly controlled in this setting, we vary the CATE threshold and construct the group size vs. ATE curve for comparison. 

\noindent\textbf{OWL} is originally designed for individual treatment rule estimation but can be viewed as a subgroup identification method without interpretability constraints. It assigns scores between 0 and 1, which we threshold to obtain subgroups of a desired size. We implement OWL using the DTRlearn2 R package.

\noindent\textbf{VT} The original VT supports both binary and continuous outcomes, but the commonly used R package aVirtualTwins handles only binary outcomes. Since our experiments require both, we implemented our own VT following \cite{foster2011subgroup}: first estimating treatment effects with a random forest, and then fitting a regression tree to assign subgroup scores. Subgroups of different sizes are obtained by thresholding these scores.

\noindent \textbf{Dragonnet \& CT \& CF} We adopt the implementations from the Python package causalml.

\begin{table}
	\centering
	\begin{tabular}{ccc}
		\toprule
		Dataset & Dragonnet & LR \\
		\hline
		Synthetic & 0.10 & 0.10 \\
		MIMIC-IV & 2.32 & 1.57 \\
		eICU & 1.08 & 1.10 \\
		\bottomrule
	\end{tabular}
    \caption{Number of unbalanced features after reweighting by propensity scores obtained by propensity models}\label{table:summary_ps}
\end{table}

\subsection{Model Selection for Nuisance Function Estimation}\label{subsec:nuisance}
We consider LR and Dragonnet for estimating the propensity score model. We follow the literature~\cite{zang2023high} to use the number of unbalanced features after inverse propensity score weighting as the metric (the lower the better) to select the propensity score model. The results are shown in Table \ref{table:summary_ps}, which shows that LR performs as well as or better than Dragonnet. Thus we select LR as the propensity score model for all datasets.

For the potential outcome model, we consider Dragonnet, CF, and CT, as they are the CATE estimators we are comparing against in subgroup identification. Theoretically, CF improves upon CT by reducing bias and producing smoother decision boundaries through aggregating CTs. As for Dragonnet and CF, prior work \cite{kiriakidou2022evaluation} has shown that Dragonnet outperforms CF.

While many other methods exist for nuisance function estimation, our goal is to demonstrate \methodname’s flexibility rather than prescribe a specific estimator. If more effective models are developed or if a particular method performs better in a given setting, we encourage adapting those for nuisance function estimation.

\subsection{Hyperparameter Tuning}\label{subsec:hyperparam}
CAPITAL optimizes the policy tree depth with $max\_depth \in \{2, 3\}$. VT selects decision tree depth for subgroup identification from $max\_depth \in \{3, 5, 7, 10\}$. Similarly, CT uses $max\_depth \in \{3, 5, 7, 10\}$, while CF considers both tree depth $max\_depth \in \{3, 5, 7, 10\}$ and the number of trees $num\_tree \in \{5, 10, 20, 50, 100\}$. Moreover, Dragonnet tunes the hidden layer size with $hidden\_size \in \{50, 100, 200\}$. Finally, the hyperparameters for MOSIC include $\beta \in \{10^{-2},10^{-3},10^{-4},10^{-5}\}$ and those depending on the implementation of the subgroup identification model. For \methodname-MLP, we tune the hidden layer size with $hidden\_size \in \{50, 100, 200\}$.

\section{Real-world observational study setup}
We provide details of our real-world observational study setup using the target trial emulation framework in Table~\ref{tab:TTE_setup_mortality} and Table~\ref{tab:TTE_setup_ventFree}.

\begin{table*}[t]
\centering
\caption{Target Trial Emulation for 7-day mortality}
\label{tab:TTE_setup_mortality}
\resizebox{\linewidth}{!}{%
\begin{tabular}{p{3cm} p{5.5cm} p{5.5cm}}
\toprule
Protocol component & Target trial specification & Target trial emulation \\
\midrule
Enrollment window & Within 24 hours after ICU admission & Within 24 hours after ICU admission \\
\midrule
Eligibility criteria & Adult;
Patient has sepsis at enrollment window; No sepsis history before enrollment
window; No corticosteroid prescription more than 10 hrs before enrollment window & Adult; Patients meet Sepsis-3 criteria at enrollment window; No sepsis history before enrollment window; No corticosteroid prescription more than 10 hrs before enrollment window\\
\midrule
Treatment strategies & Treated: Initiation of hydrocortisone at a dose of 160 mg at enrollment window; \newline\newline Control: No initiation of any corticosteroid drug at enrollment window & 
Treated = receiving >= 160mg per day hydrocortisone-equivalent corticosteroid; 
\newline\newline Control = Otherwise. 
\newline\newline We consider prednisolone, prednisone, hydrocortisone, dexamethasone, and methylprednisolone. \newline
Conversions: \newline
1 mg Prednisolone = 4 mg Hydrocortisone \newline
1 mg Prednisone = 4 mg Hydrocortisone \newline
1 mg Dexamethasone = 25 mg Hydrocortisone \newline
1 mg Methylprednisolone = 5 mg Hydrocortisone\\
\midrule
Treatment assignment & Patients are randomly assigned to
either treatment strategy on the first
day after ICU admission and are
aware of the strategy they are
assigned to. &  Randomization is emulated at enrollment by adjustment for baseline confounders. \newline \newline Covariates include Demographics (Age, gender); Labs/vitals (GCS, Lactate, MAP, PaO2, Platelet, Respiratory rate, Systolic ABP, Temperature, Urine, WBC); Comorbidity (Elixhauser Comorbidity Index, chronic lung disease, liver disease, AIDS); SOFA score and its organ-specific subscores. \\
\midrule
Follow-up & We follow each patient from their baseline until their death, loss of follow-up, or discharge, whichever occurs first. & We follow each patient from their baseline until their death, loss of follow-up, or discharge, whichever occurs first.\\
\midrule
Causal contrasts & Intention-to-treat effect. & Observational analog of intention-to-treat effect.\\
\midrule
Statistical analysis & Mean difference in 7-day mortality & Mean difference in 7-day mortality estimated using AIPW with both outcome and propensity models.\\
\bottomrule
\end{tabular}
}
\end{table*}

\begin{table*}[t]
\centering
\caption{Target Trial Emulation for ventilation-free days within 7 days}
\label{tab:TTE_setup_ventFree}
\resizebox{\linewidth}{!}{%
\begin{tabular}{p{3cm} p{5.5cm} p{5.5cm}}
\toprule
Protocol component & Target trial specification & Target trial emulation \\
\midrule
Enrollment window & Within 24 hours after ICU admission & Within 24 hours after ICU admission \\
\midrule
Eligibility criteria & Adult;
Patient has sepsis at enrollment window; No sepsis history before enrollment
window; No corticosteroid prescription more than 10 hrs before enrollment window & Adult; Patients meet Sepsis-3 criteria at enrollment window; No sepsis history before enrollment window; No corticosteroid prescription more than 10 hrs before enrollment window\\
\midrule
Treatment strategies & Treated: Initiation of hydrocortisone at a dose of 160 mg at enrollment window; \newline\newline Control: No initiation of any corticosteroid drug at enrollment window & 
Treated = receiving >= 160mg per day hydrocortisone-equivalent corticosteroid; 
\newline\newline Control = Otherwise. 
\newline\newline We consider prednisolone, prednisone, hydrocortisone, dexamethasone, and methylprednisolone. \newline
Conversions: \newline
1 mg Prednisolone = 4 mg Hydrocortisone \newline
1 mg Prednisone = 4 mg Hydrocortisone \newline
1 mg Dexamethasone = 25 mg Hydrocortisone \newline
1 mg Methylprednisolone = 5 mg Hydrocortisone\\
\midrule
Treatment assignment & Patients are randomly assigned to
either treatment strategy on the first
day after ICU admission and are
aware of the strategy they are
assigned to. &  Randomization is emulated at enrollment by adjustment for baseline confounders. \newline \newline Covariates include Demographics (Age, gender); Labs/vitals (GCS, Lactate, MAP, PaO2, Platelet, Respiratory rate, Systolic ABP, Temperature, Urine, WBC); Comorbidity (Elixhauser Comorbidity Index, chronic lung disease, liver disease, AIDS); SOFA score and its organ-specific subscores. \\
\midrule
Follow-up & We follow each patient from their baseline until their death, loss of follow-up, or discharge, whichever occurs first. & We follow each patient from their baseline until their death, loss of follow-up, or discharge, whichever occurs first.\\
\midrule
Causal contrasts & Intention-to-treat effect. & Observational analog of intention-to-treat effect.\\
\midrule
Statistical analysis & Mean difference in ventilation-free days within 7 days & Mean difference in ventilation-free days within 7 days estimated using AIPW with both outcome and propensity models.\\
\bottomrule
\end{tabular}
}
\end{table*}

\section{Additional Experiment Results}\label{sec:more_experiment}

\subsection{Correlation of ATE estimation error and feature imbalance}\label{appx:subsec_corr_imbalance}
Figure \ref{fig:syn-error} numerically verifies that a higher number of unbalanced features is correlated with larger estimation error, justifying feature imbalance as a proxy for estimation reliability in real-world experiments. Further, a higher overlap threshold $\alpha$ reduces the subgroup ATE estimation error, suggesting that enforcing stronger overlap improves estimation reliability. 

\begin{figure}[t]
	\centering
    \includegraphics[width=0.9\columnwidth]{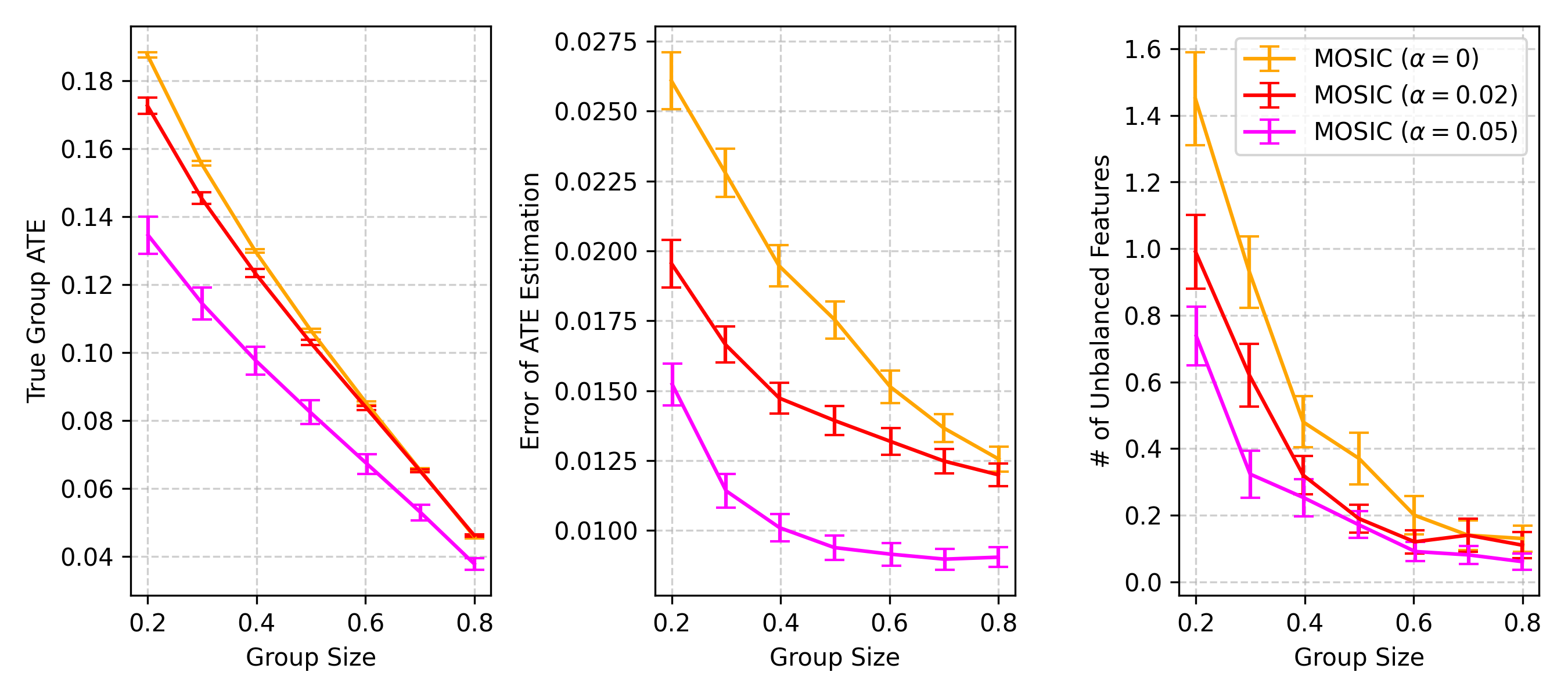}
	\caption{Correlation of ATE estimation error and the number of unbalanced features on confounded synthetic dataset ($\tilde{\omega} = 5$), obtained using \methodname\ with varying $\alpha$.}
    \label{fig:syn-error}
    \vspace{-10pt}
\end{figure}

\subsection{Significance Tests on Real-world data}\label{sec:significance_tests}

Table~\ref{tab:combined_sig_test} presents the p-values comparing \methodname\ ($\alpha=0.01$) against baseline methods, corresponding to the hypothesis test that \methodname\ differs from the baselines. These results directly support the performance trends shown in Figure~\ref{fig:real-world}. For instance, in eICU experiments with c=0.5:
\begin{itemize}
    \item \methodname\ achieves significantly better ATE than Dragonnet (p=0.0057)
    \item \methodname\ maintains significantly fewer unbalanced features than Dragonnet (p=9.2E-04)
\end{itemize}

\subsection{Uncertainty quantification}\label{appx:uncertainty}
To quantify uncertainty, we computed the asymptotic 95\% confidence intervals based on the closed-form influence function of the AIPTW estimator. Figure~\ref{fig:ci-width} reports the distribution of CI widths across 100 independent train–test splits, showing the anticipated increase in uncertainty as subgroup size decreases.

Additionally, we report the Risk Ratio (RR) and E-values to measure the robustness of our subgroup results with respect to unmeasured confounders. 

\begin{figure}[t]
    \centering
    \subfigure[Unconfounded]{%
        \includegraphics[width=0.48\linewidth]{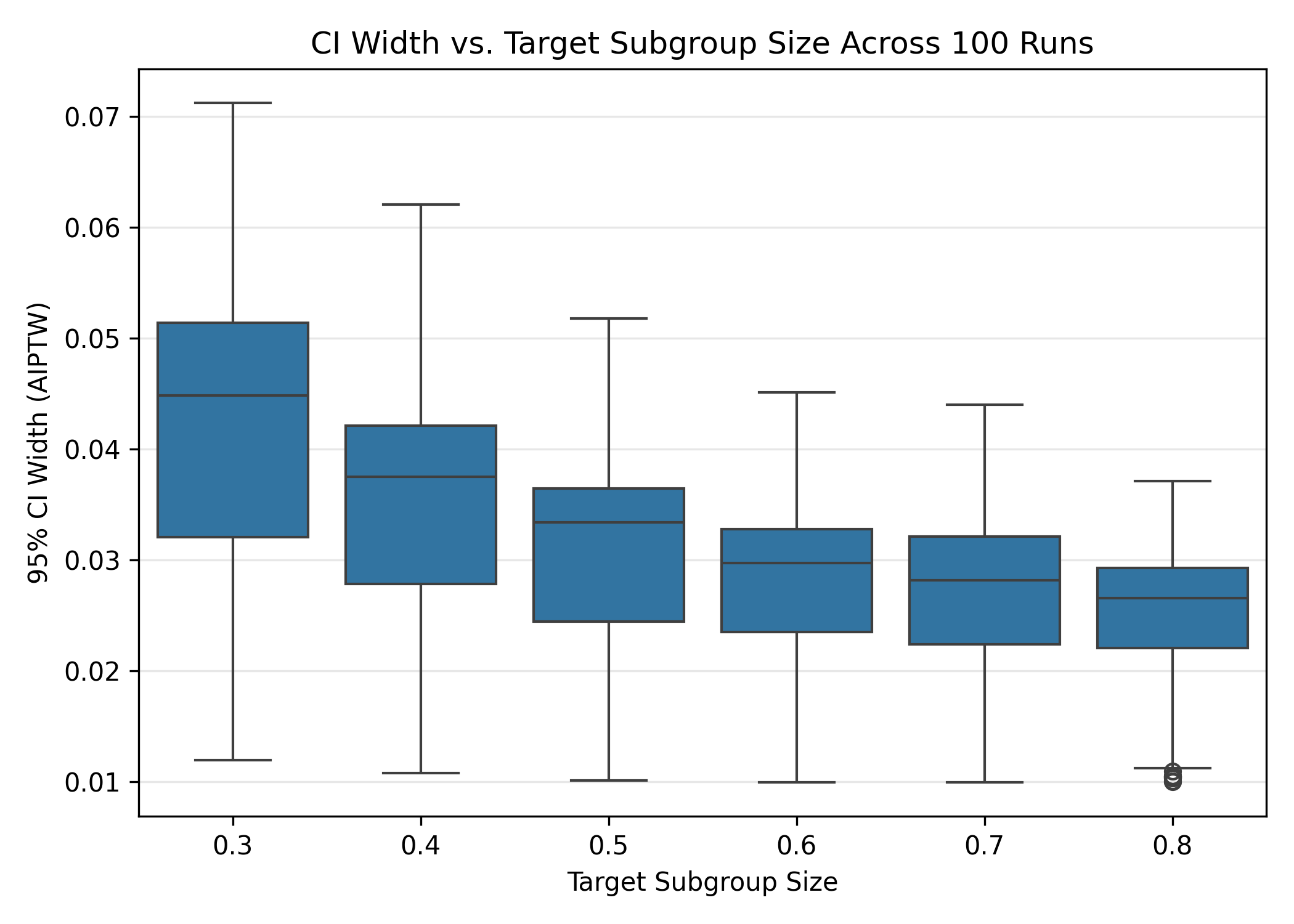}
    }\hfill
    \subfigure[Confounded]{%
        \includegraphics[width=0.48\linewidth]{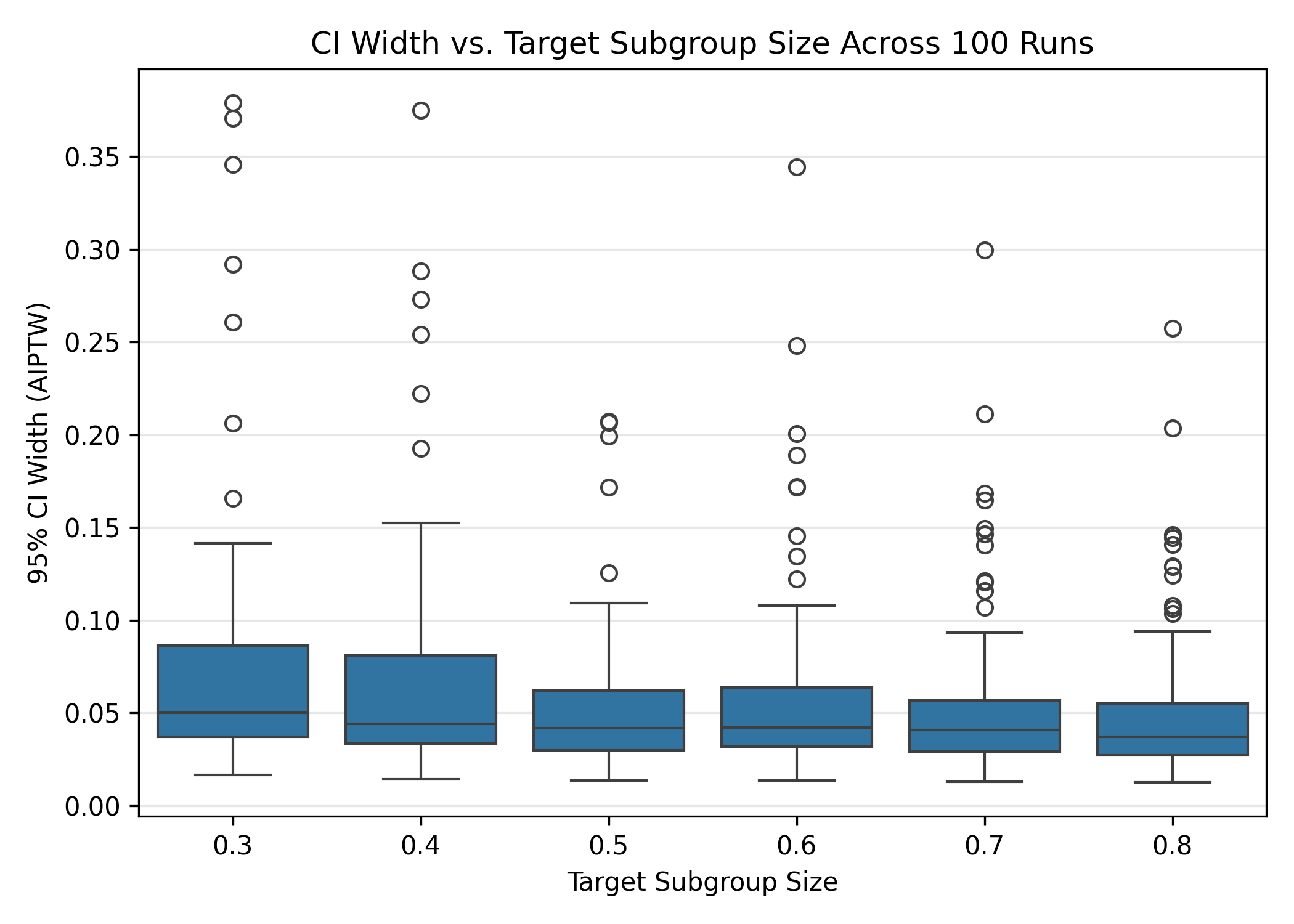}
    }

    \subfigure[eICU]{%
        \includegraphics[width=0.48\linewidth]{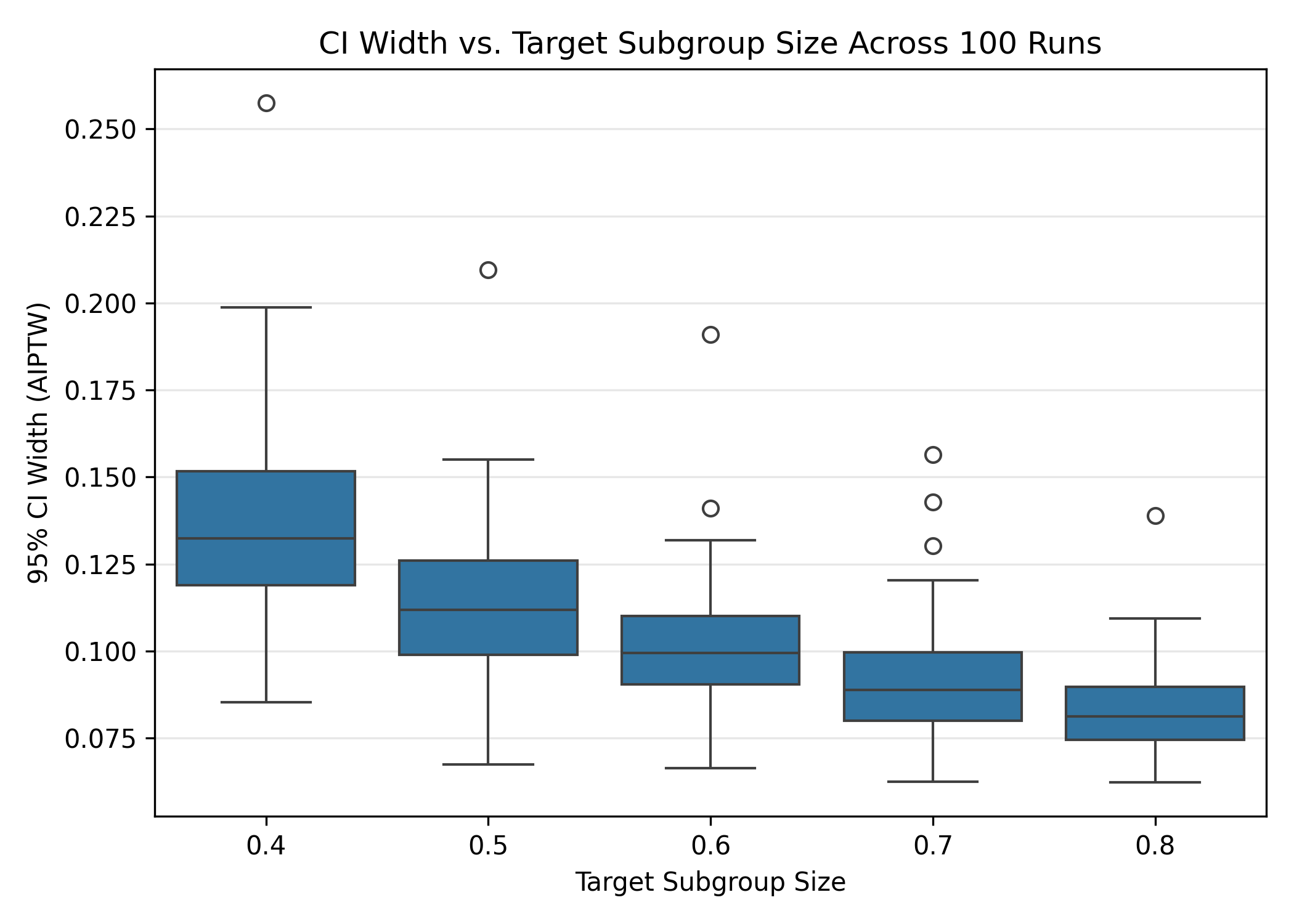}
    }\hfill
    \subfigure[MIMIC-IV]{%
        \includegraphics[width=0.48\linewidth]{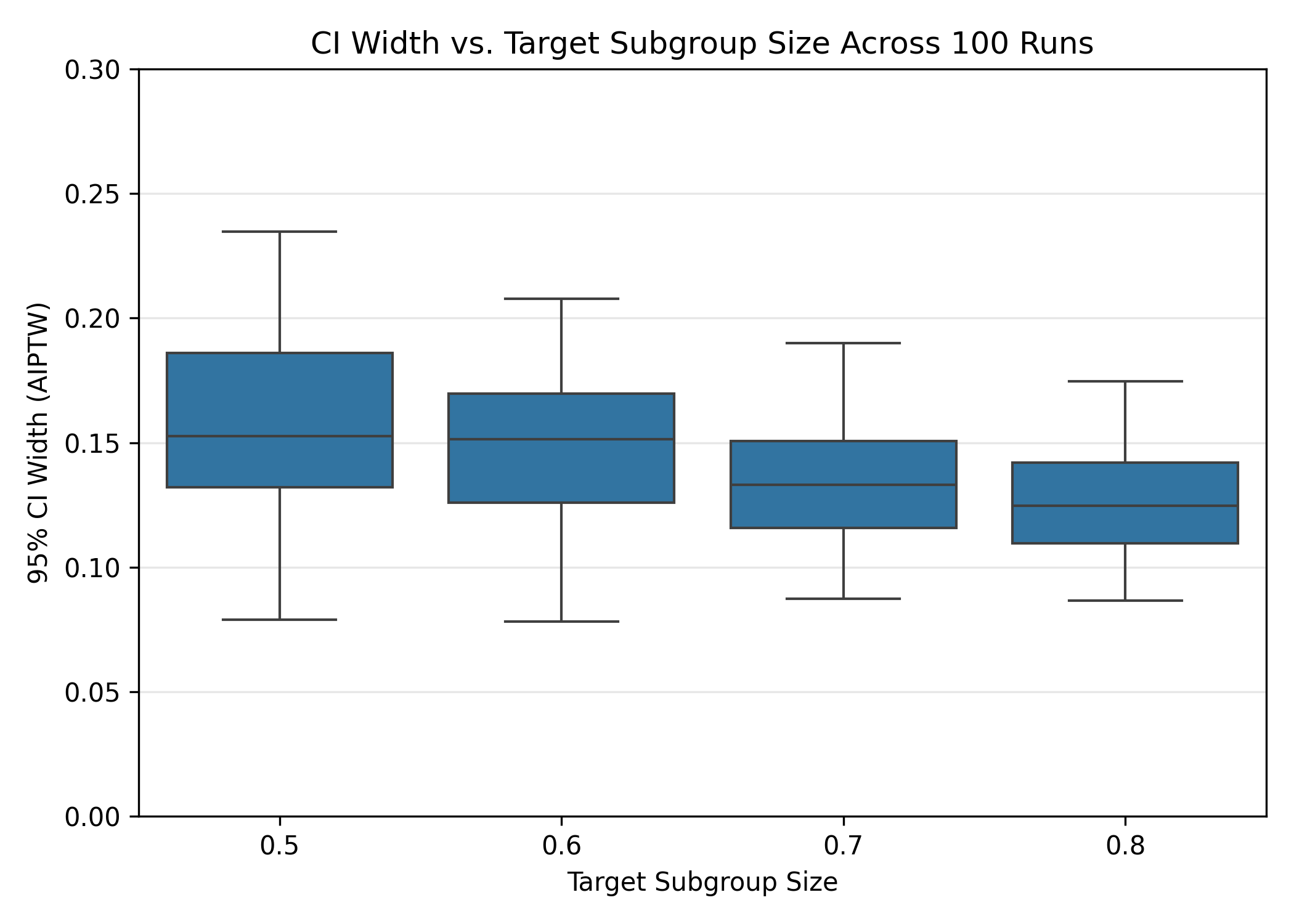}
    }
    \vspace{-5pt}
    \caption{CI width.}
    \vspace{-5pt}\label{fig:ci-width}
\end{figure}

\begin{figure}[t]
    \centering
    \subfigure[Risk Ratio]{%
        \includegraphics[width=0.45\linewidth]{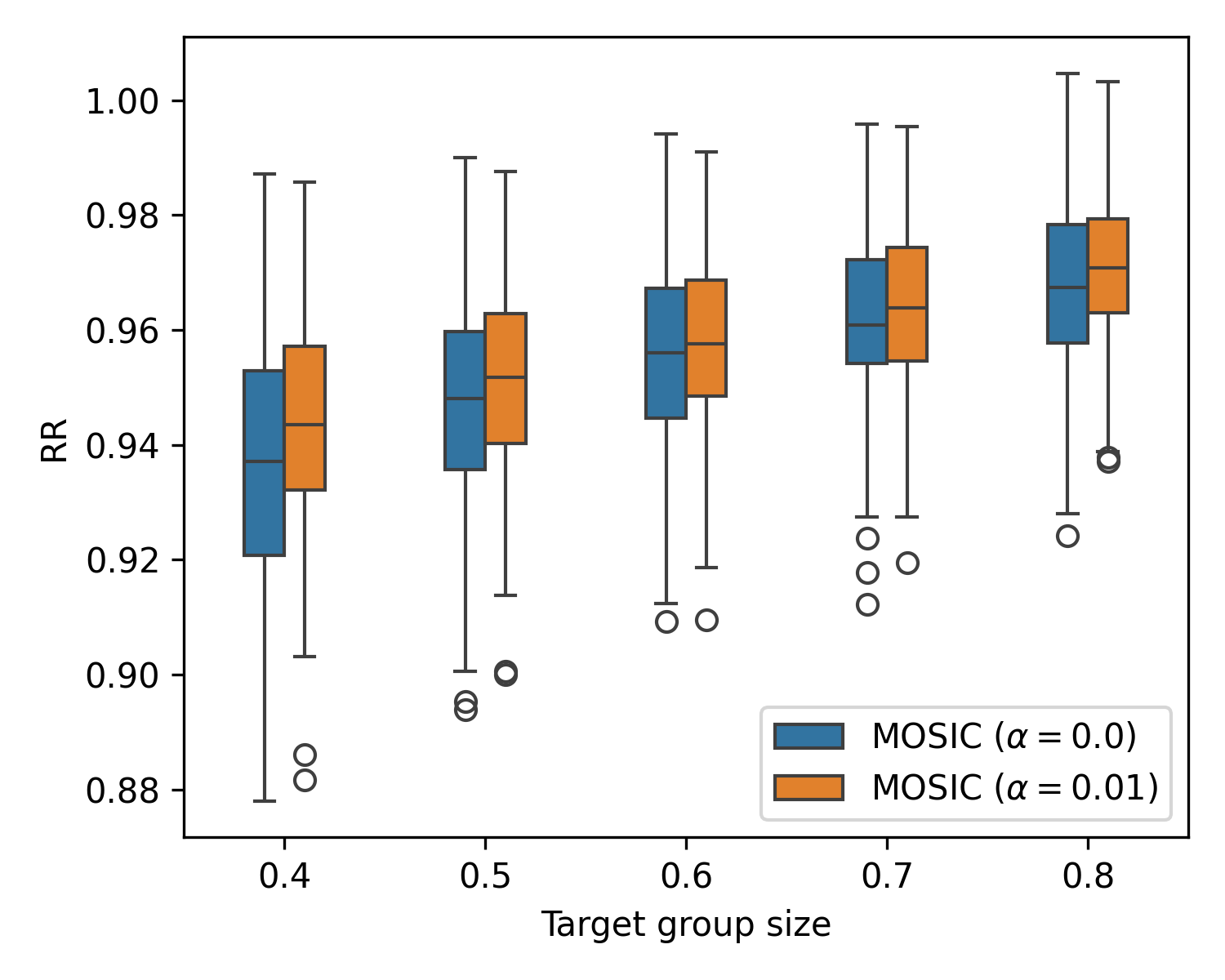}
    }\hfill
    \subfigure[E-value]{%
        \includegraphics[width=0.45\linewidth]{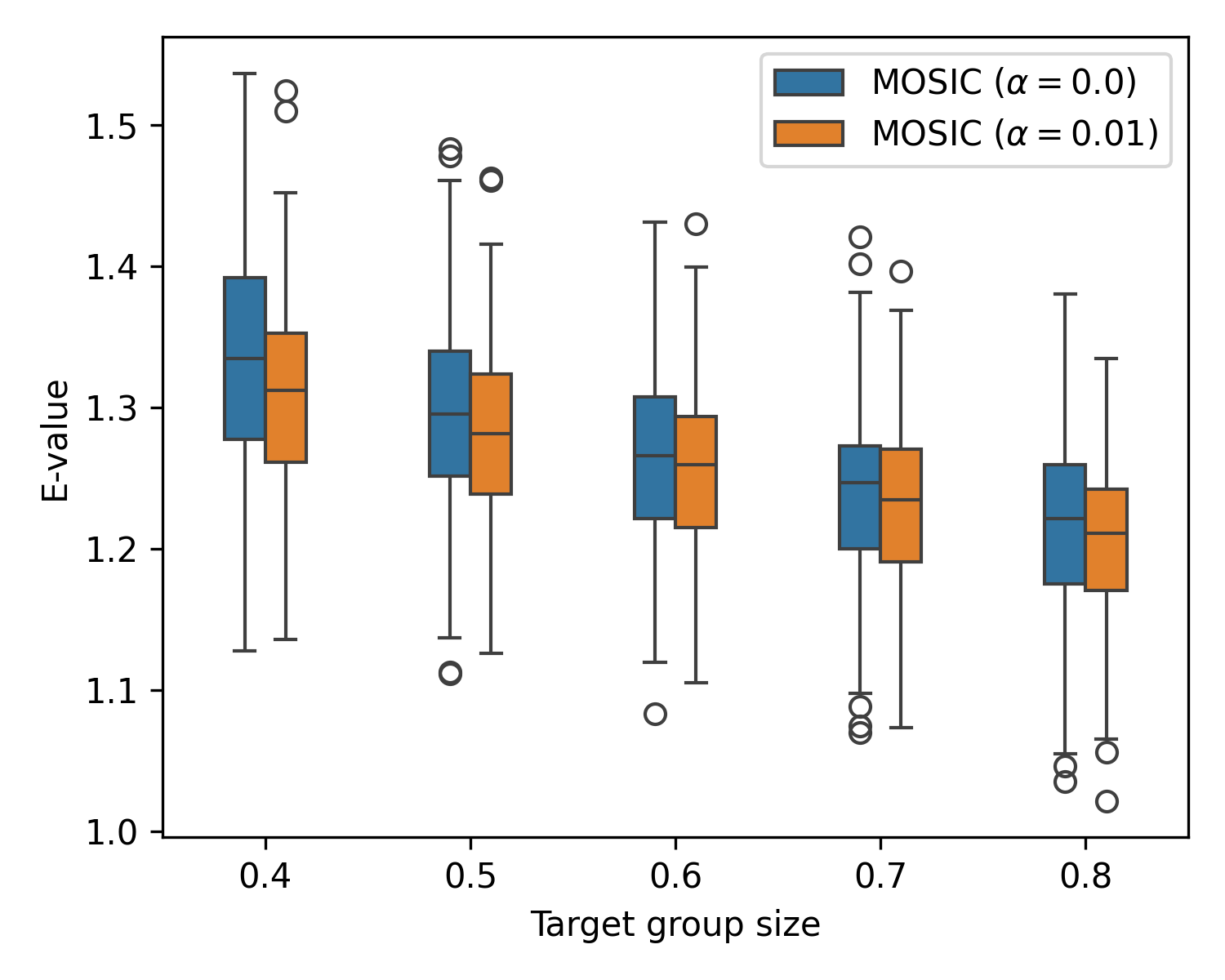}
    }
    \vspace{-5pt}
    \caption{Sensitivity analysis on unmeasured confounding}
    \vspace{-5pt}\label{fig:evalue}
\end{figure}


\subsection{Additional Baselines}\label{appx:more_baslines}
We further compare against DR-learner, R-learner, BART, and an overlap-weighted variant of MOSIC (MOSIC-OW). We adopted the Python package causalml to implement DR-learner and R-learner, using Random Forest as their base learner. For BART, we adopted the R package bartCause. For hyperparameters, DR-learner and R-learner consider $max\_depth \in \{3, 5, 7, 10\}$, and BART considers the number of prior standard deviations $k \in \{1, 2, 3\}$. The hyperparameter tuning procedures are the same as described in Section~\ref{subsec:setting}.  

\begin{figure*}[!t]
    \centering
    \subfigure[Unconfounded($\tilde{\omega}=0$)]{
        \includegraphics[width=0.48\linewidth]{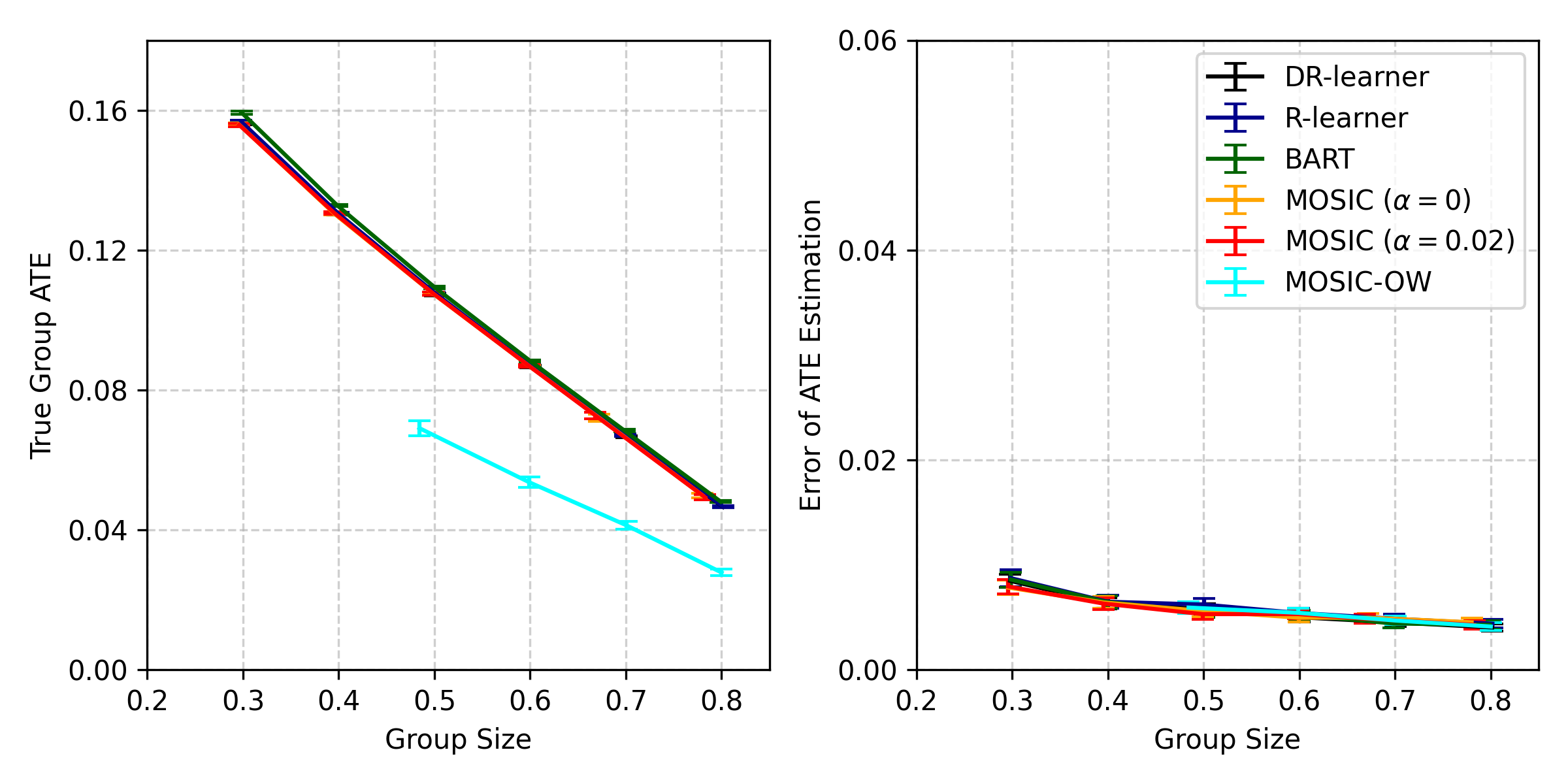}
    }
    \subfigure[Confounded($\tilde{\omega}=5$)]{
        \includegraphics[width=0.48\linewidth]{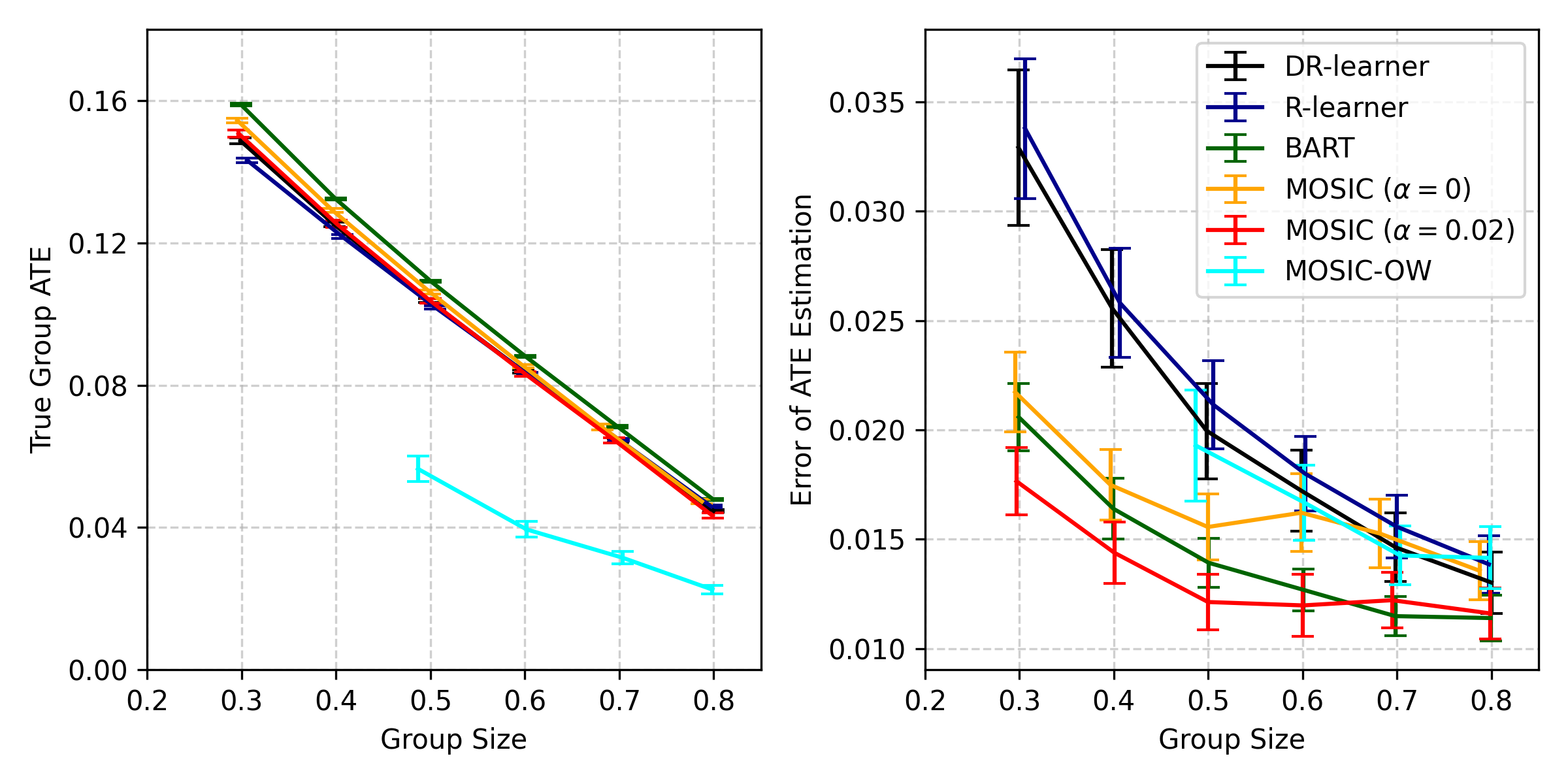}
    }\hfill

    \subfigure[eICU]{
        \includegraphics[width=0.48\linewidth]{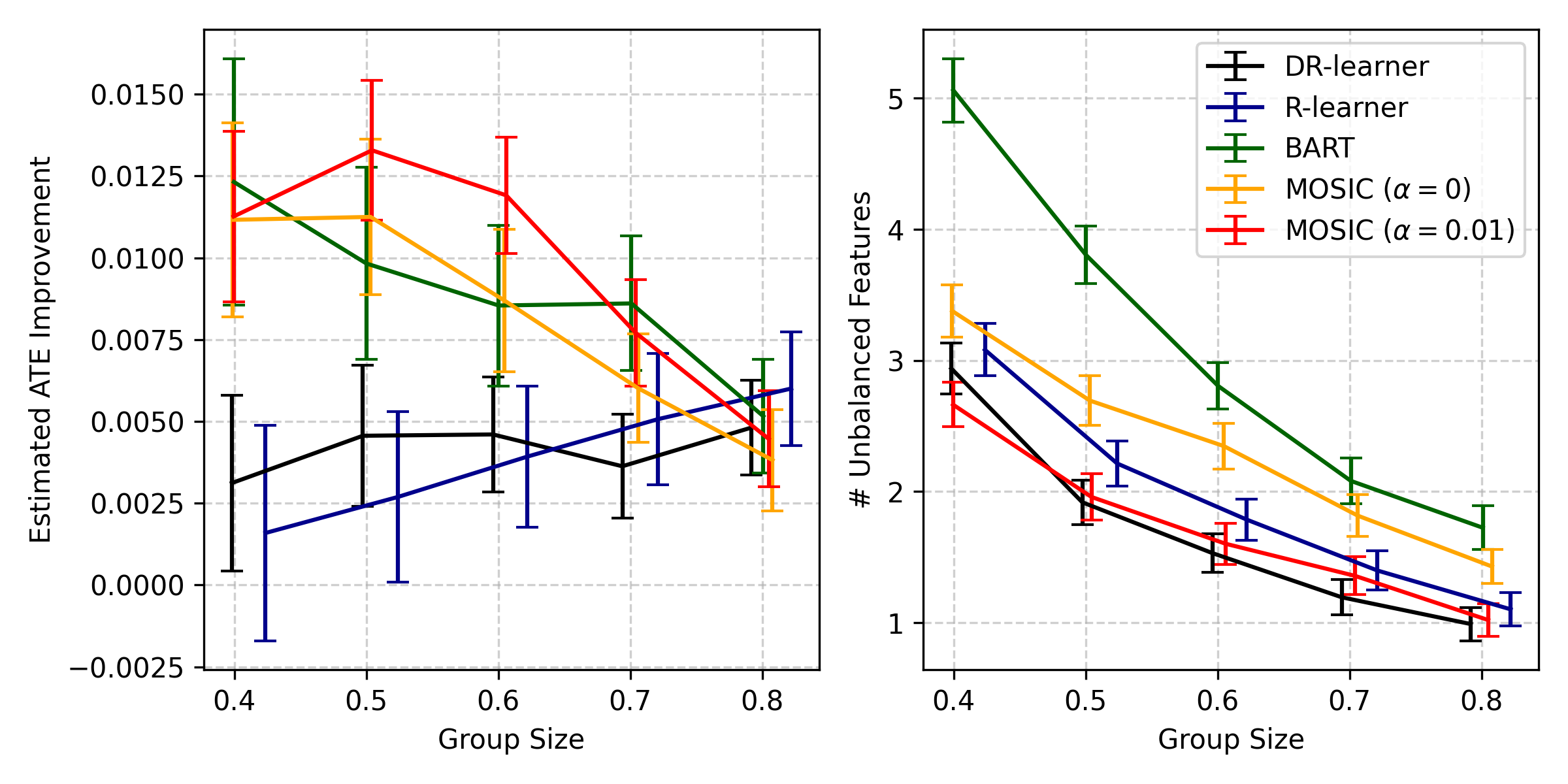}
    }\hfill
    \subfigure[MIMIC-IV]{%
        \includegraphics[width=0.48\linewidth]{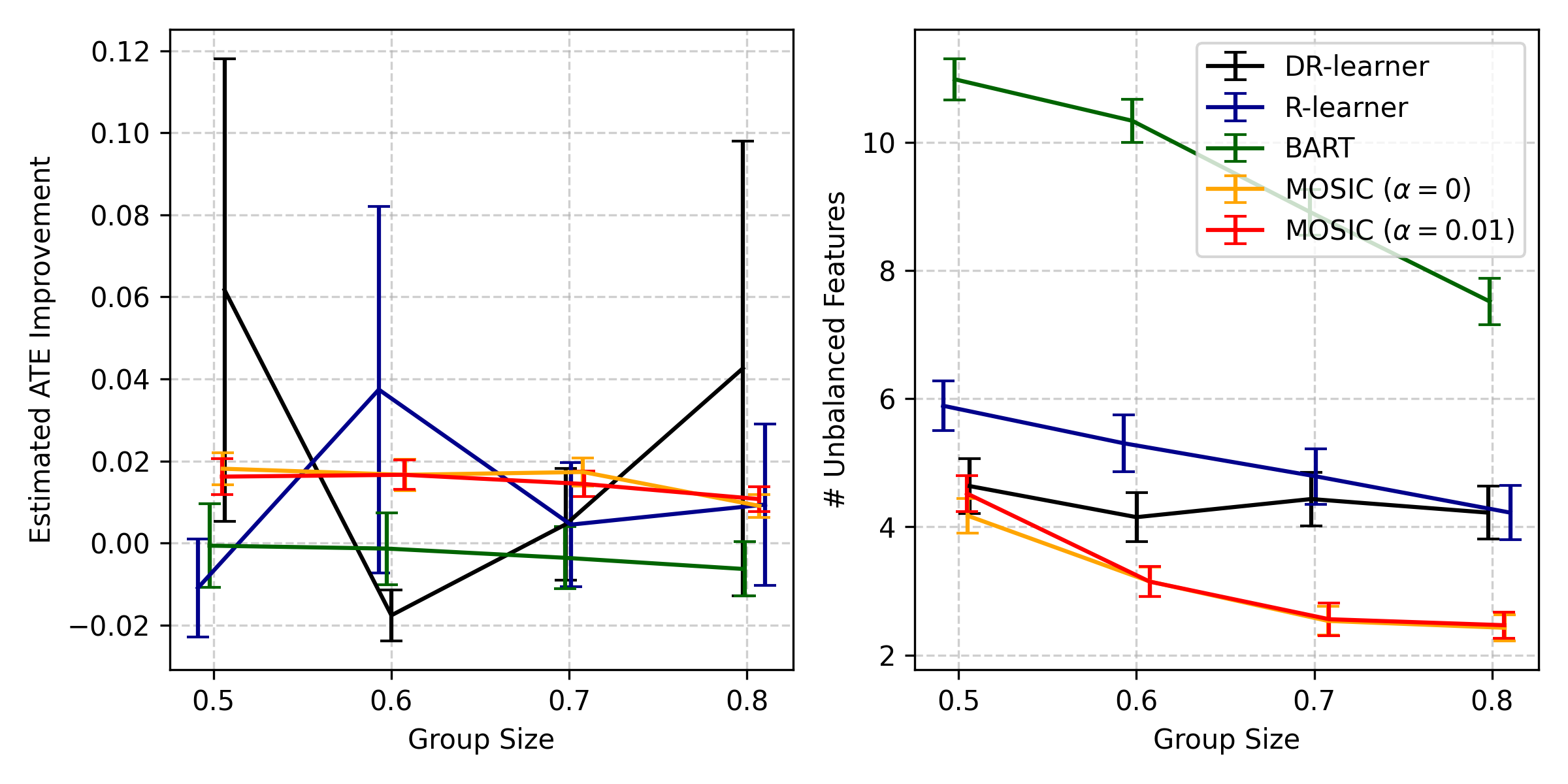}
    }
    \vspace{-5pt}
    \caption{Estimated ATE and the number of unbalanced features on real-world datasets. MOSIC-OW performs substantially worse in synthetic experiments, so we omit it from real-world comparisons.}
    \vspace{-5pt}
    \label{fig:additional_baseline}
\end{figure*}

\subsection{Overlap constraint on synthetic data}
We numerically verify that \methodname\ can indeed accommodate the overlap constraint (Figure \ref{subfig:syn-constraint}). 
Without overlap constraints, \methodname\ selects patients with the highest ITEs, including those in the non-overlap region. With overlap constraints (Figure~\ref{subfig:syn-constraint}, left), \methodname\ continues to select patients with large ITEs but systematically excludes those who violate the overlap constraint ($\hat{e}(x)<0.05$ or $\hat{e}(x)>0.95$).


\subsection{Test Set Overlap on Real-world Data}\label{subsec:additional_overlap}
We evaluated overlap on the held-out test sets and report the proportions of samples with estimated propensity scores falling outside [0.01,0.99], [0.02,0.98], and [0.05,0.95] within the subgroups selected on the test sets (Figure \ref{fig:overlap_evaluation_0.05} and ~\ref{fig:overlap_evaluation_alpha}). When no overlap constraint is imposed ($\alpha=0$), the proportion of low-overlap samples in the selected subgroup closely matches that of the full test set. In contrast, when overlap constraints are activated ($\alpha>0$), these proportions decrease substantially across all thresholds, demonstrating that MOSIC’s overlap constraint effectively improves overlap in the selected subgroups during evaluation as well as training.

\begin{figure}[h!]
	\centering
	\subfigure[Confounded Synthetic Data($\tilde{\omega}=5$)]{
		\centering
		\includegraphics[width=0.48\columnwidth]{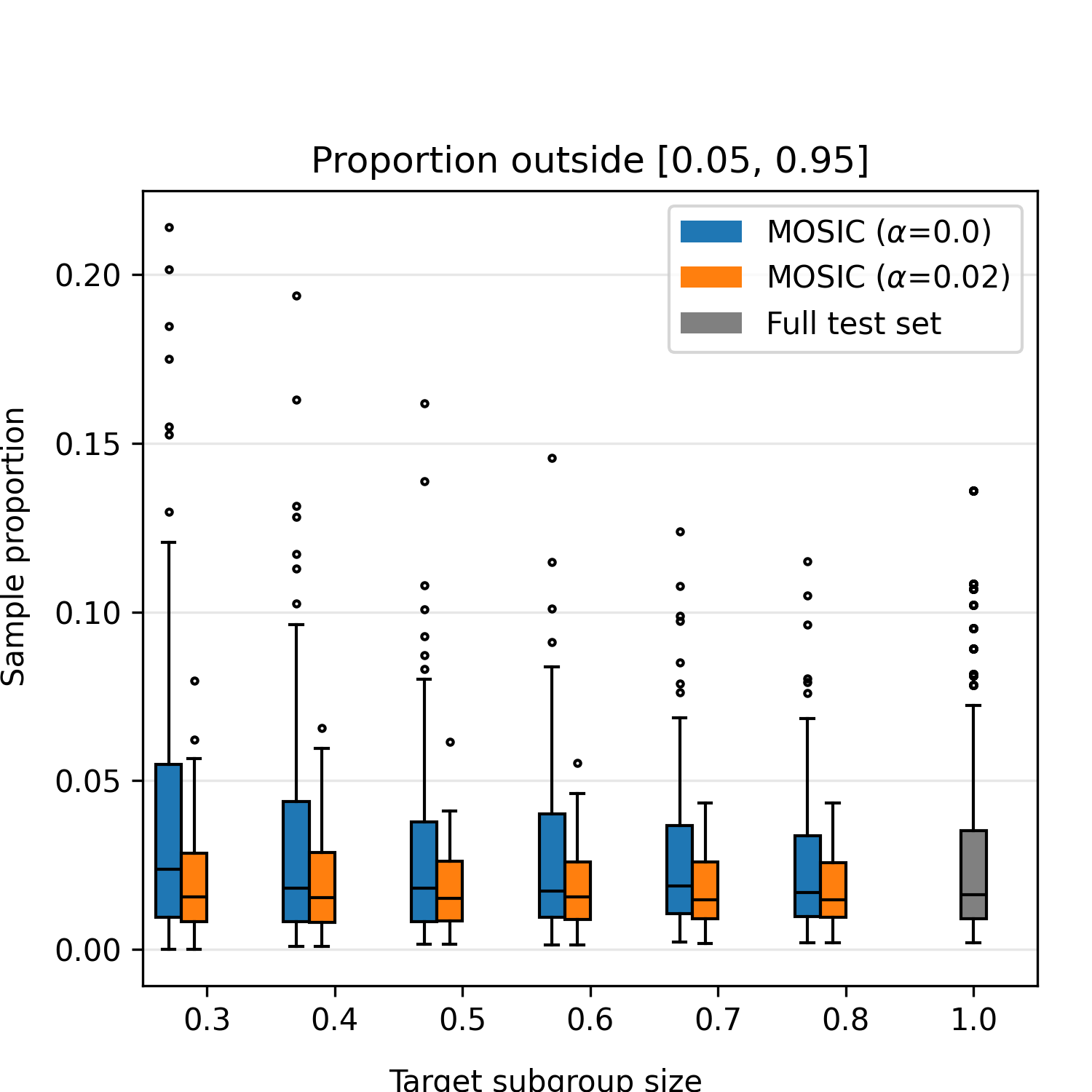}
	}\hfill
	\subfigure[eICU]{
		\centering
		\includegraphics[width=0.48\columnwidth]{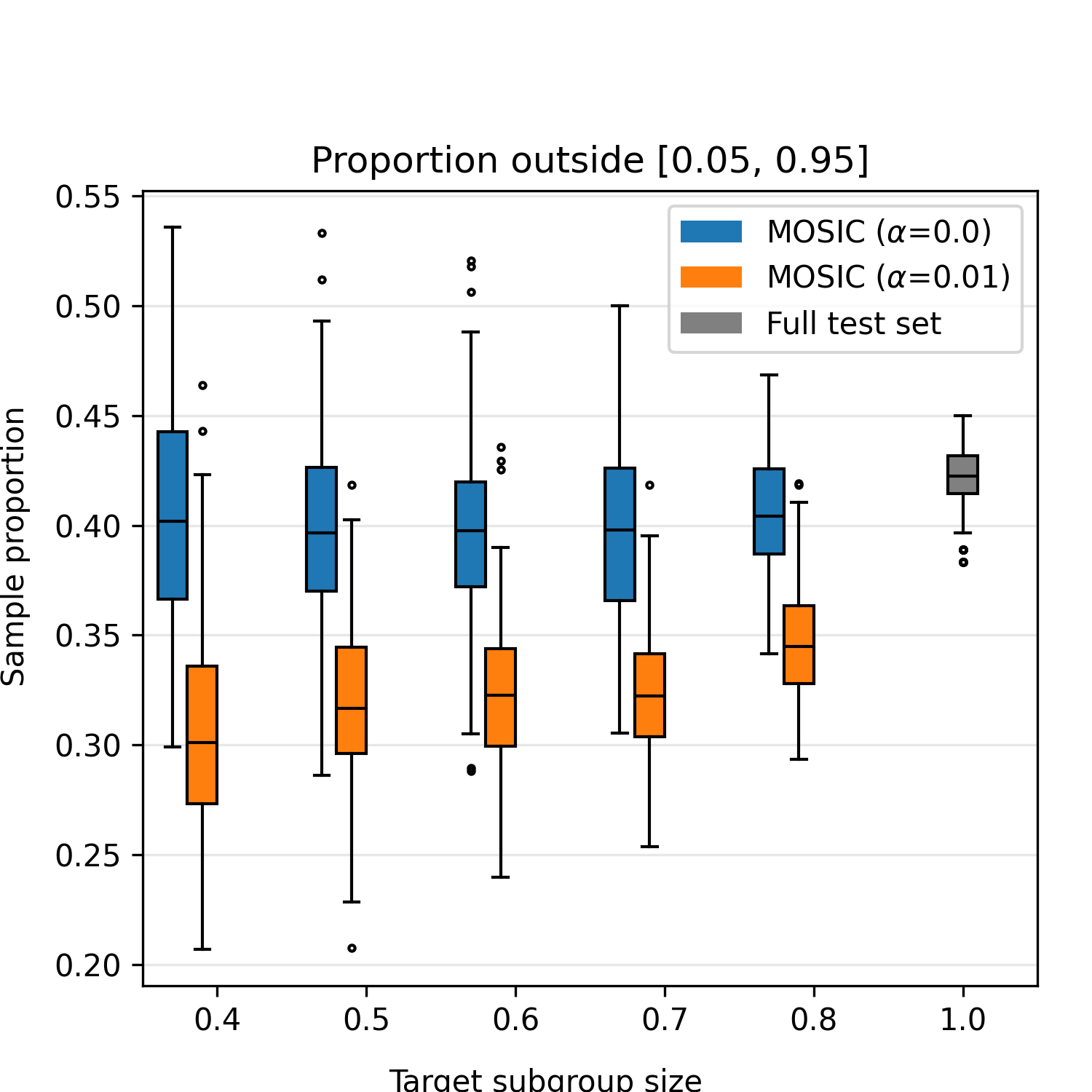}
	}%
	\caption{Overlap Evaluation on Test Set.}
    \label{fig:overlap_evaluation_0.05}
\end{figure}

\begin{figure}[!t]
    \centering
    \includegraphics[width=1.0\columnwidth]{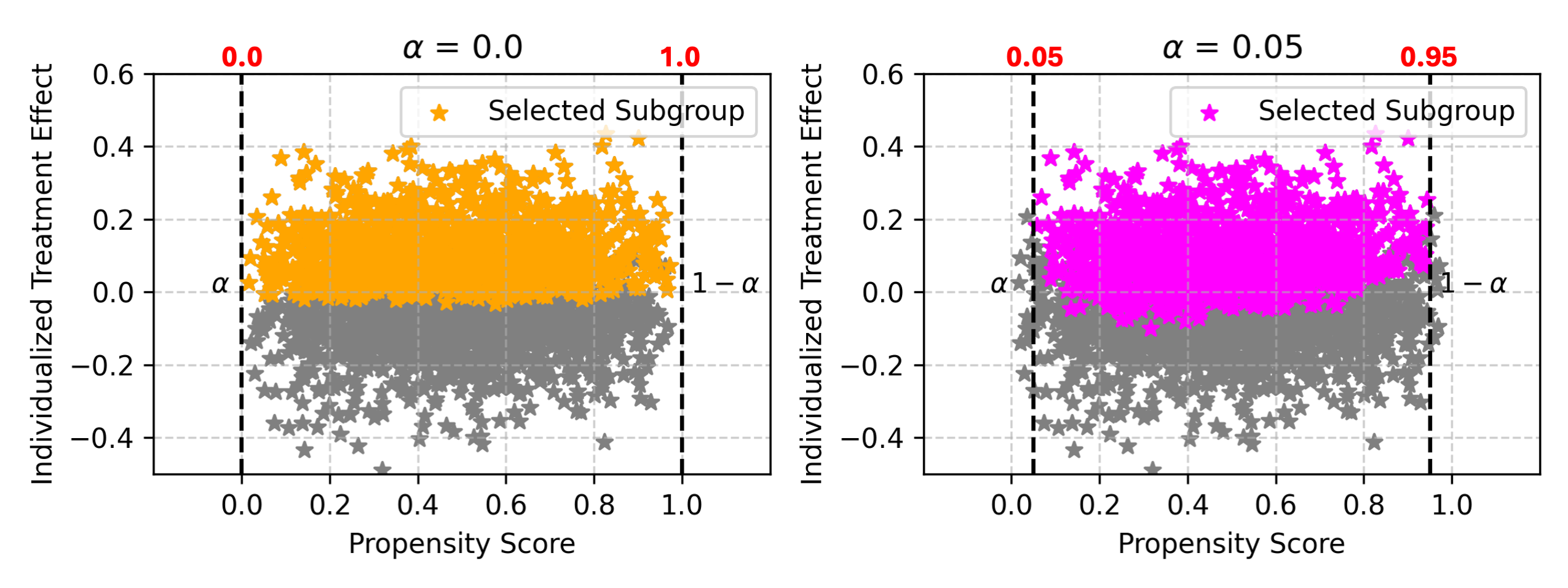}
	\caption{Results on synthetic data with confounding bias ($\tilde{\omega} = 5$). Each dot represents an individual, with the x-axis denoting the estimated propensity score, the y-axis representing the true individual treatment effect (ITE), and the vertical lines indicating the desired overlap threshold.}
    \label{subfig:syn-constraint}
\end{figure}

\subsection{\# Unbalanced features on real-world data using SMD>0.1 as threshold}\label{appx:subsec_smd01}

SMD$>$0.1 is a stricter threshold to evaluate feature imbalance and commonly used in epidemiology studies. However, Austin (2009) notes that "For modest sample sizes, one could expect standardized differences that exceed 0.20 (20 percent) even when the propensity-score model was correctly specified." Given that both of our real-world cohorts are relatively small (13,361 patients in eICU and 6,516 in MIMIC), we chose SMD $>$ 0.2 as the primary threshold to reduce the risk of flagging spurious imbalance driven by limited sample size. This threshold is also frequently used in epidemiology studies.

We now report the results using the stricter SMD $>$ 0.1 criterion, and it does not alter our main conclusions. Although the absolute number of unbalanced covariates increases across all methods due to inherent data size limitations, the relative comparison remains the same: MOSIC achieves comparable balance while yielding substantially higher subgroup ATEs (Figure~\ref{fig:n_unbalance_smd_01}, Table~\ref{tab:smd01_balance_sig} and Figure~\ref{fig:real-world}). (While Dragonnet demonstrates significantly lower feature imbalance, our method obtains significantly higher subgroup ATE improvement than Dragonnet, as shown in Figure 2b.)
\begin{figure}[t]
    \centering
    \subfigure[eICU]{%
        \includegraphics[width=0.48\linewidth]{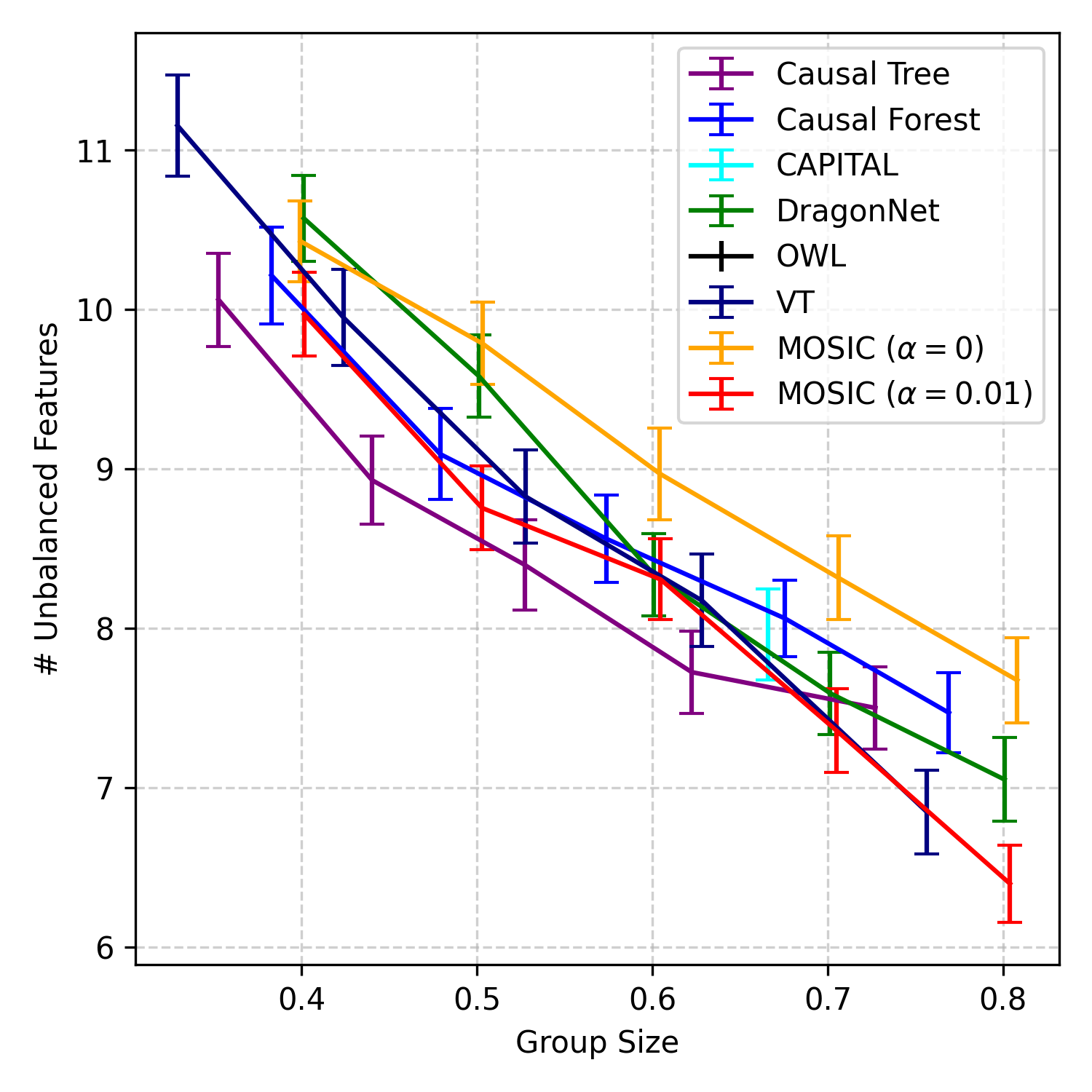}
    }\hfill
    \subfigure[MIMIC]{%
        \includegraphics[width=0.48\linewidth]{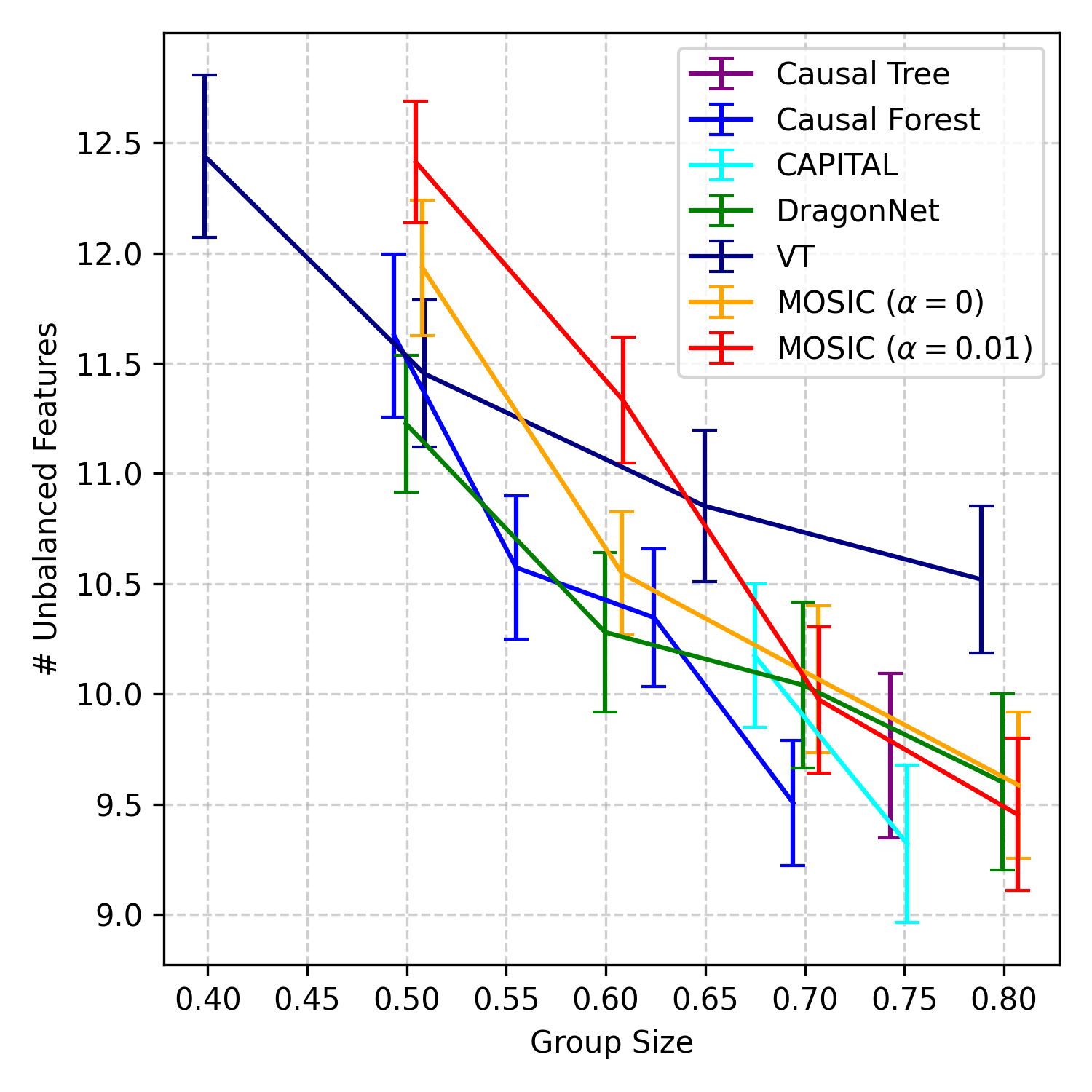}
    }
    \vspace{-5pt}
    \caption{Number of unbalanced features using SMD$>$0.1 as threshold.}
    \vspace{-5pt}\label{fig:n_unbalance_smd_01}
\end{figure}

\begin{table}[t]
\centering
\setlength{\tabcolsep}{4pt}
\caption{P-values for ATE and feature balance comparisons between \methodname\ ($\alpha=0.01$) and baseline methods.}
\label{tab:combined_sig_test}
\resizebox{0.9\columnwidth}{!}{
\begin{tabular}{llccccc}
\toprule
& & $c=0.4$ & $c=0.5$ & $c=0.6$ & $c=0.7$ & $c=0.8$ \\
\midrule

\multicolumn{7}{c}{\textbf{eICU}}\\
\midrule

\multirow{6}{*}{ATE}
& CT         & 0.71 & 0.065 & 0.0011 & 0.0046 & 0.025 \\
& CF         & 0.19 & 0.0048 & 0.00019 & 0.0012 & 0.0057 \\
& CAPITAL    & - & - & - & 3.5E-05 & - \\
& Dragonnet  & 0.30 & 0.0057 & 5.5E-04 & 0.011 & 0.11 \\
& OWL        & 0.27 & 0.0019 & 0.0010 & 0.084 & 0.16 \\
& VT         & 0.38 & 0.07 & 0.039 & 0.31 & 0.46 \\

\midrule

\multirow{6}{*}{Balance}
& CT         & 0.16 & 0.58 & 0.56 & 0.65 & 0.04 \\
& CF         & 0.35 & 0.12 & 0.48 & 0.47 & 0.16 \\
& CAPITAL    & - & - & - & 0.59 & - \\
& Dragonnet  & 0.022 & 9.2E-04 & 0.14 & 0.33 & 0.066 \\
& OWL        & 0.0088 & 0.0077 & 0.13 & 0.22 & 0.046 \\
& VT         & 0.0011 & 0.0060 & 0.036 & 0.23 & 0.39 \\

\midrule
\multicolumn{7}{c}{\textbf{MIMIC-IV}}\\
\midrule

\multirow{5}{*}{ATE}
& CT         & - & - & - & - & 0.02 \\
& CF         & - & 0.42 & 0.68 & 0.35 & 0.16 \\
& CAPITAL    & - & 0.021 & - & 0.30 & 0.72 \\
& Dragonnet  & - & 0.039 & 0.10 & 0.21 & 0.30 \\
& VT         & - & 0.10 & 0.16 & 0.15 & 0.15 \\

\midrule

\multirow{5}{*}{Balance}
& CT         & - & - & - & - & 0.63 \\
& CF         & - & 0.19 & 5.1E-04 & 1.3E-04 & 1.1E-04 \\
& CAPITAL    & - & 6.3E-07 & - & 0.17 & 3.3E-05 \\
& Dragonnet  & - & 0.062 & 0.80 & 0.28 & 0.48 \\
& VT         & - & 0.033 & 0.0020 & 0.018 & 0.20 \\

\bottomrule
\end{tabular}
}
\end{table}

\begin{table}[t]
\centering
\caption{P-values for feature balance comparisons between \methodname\ and baseline methods using SMD$>0.1$ as the threshold for an unbalanced feature.}
\label{tab:smd01_balance_sig}
\resizebox{\columnwidth}{!}{
\begin{tabular}{llccccc}
\toprule
& & $c=0.4$ & $c=0.5$ & $c=0.6$ & $c=0.7$ & $c=0.8$ \\
\midrule

\multicolumn{7}{c}{\textbf{eICU}} \\
\midrule

& CAPITAL   & --     & --      & 0.36 & --      & --      \\
& CF        & 0.54   & 0.39    & 0.50 & 0.050   & 0.0024  \\
& CT        & 0.82   & 0.65    & 0.81 & 0.32    & 0.0022  \\
& Dragonnet & 0.11   & 0.027   & 0.93 & 0.52    & 0.069   \\
& VT        & 0.0044 & 0.0033  & 0.18 & 0.04    & 0.21    \\

\midrule

\multicolumn{7}{c}{\textbf{MIMIC-IV}} \\
\midrule

& CAPITAL   & --      & 1.8E-10 & --    & 0.67 & --    \\
& CF        & --      & 0.091   & 0.081 & 0.41 & 0.91  \\
& CT        & --      & --      & --    & --   & 0.60  \\
& Dragonnet & --      & 0.0049  & 0.024 & 0.89 & 0.78  \\
& VT        & --      & 0.095   & 0.79  & 0.067 & 0.028 \\

\bottomrule
\end{tabular}
}
\end{table}





\subsection{Additional Balance Diagnostics}
\label{appx:subsec_additional_balance}
To further evaluate the reliability of the identified subgroups, we report three additional diagnostics on the alternative outcome setting, ventilation-free days within 7 days of ICU admission: effective sample size (ESS), the proportion of samples with propensity scores outside ([0.05,0.95]) (\% Extreme), and the maximum inverse propensity weight (Max IPW). Consistent with the subgroup ATE and feature-balance results reported in Table~\ref{tab:vfd_results}, MOSIC achieves higher ESS, lower proportions of extreme propensity scores, and smaller maximum inverse propensity weights across subgroup sizes. These complementary metrics further support that MOSIC identifies subgroups with improved overlap and more reliable treatment-effect estimation.

\begin{table*}[t]
\centering
\caption{Additional reliability diagnostics on the ventilation-free-days outcome. Higher ESS and lower \% Extreme / Max IPW indicate better overlap.}
\label{tab:additional_reliability}
\resizebox{\textwidth}{!}{
\begin{tabular}{lccc|ccc|ccc}
\toprule
& \multicolumn{3}{c|}{ESS}
& \multicolumn{3}{c|}{\% Extreme}
& \multicolumn{3}{c}{Max IPW} \\
\cmidrule(lr){2-4}
\cmidrule(lr){5-7}
\cmidrule(lr){8-10}
Method
& $c=0.6$ & $c=0.7$ & $c=0.8$
& $c=0.6$ & $c=0.7$ & $c=0.8$
& $c=0.6$ & $c=0.7$ & $c=0.8$ \\
\midrule
OWL
& 347.27$\pm$5.03 & 401.11$\pm$5.35 & 454.09$\pm$5.92
& 0.24$\pm$0.004 & 0.24$\pm$0.004 & 0.25$\pm$0.003
& 45.88$\pm$1.68 & 47.80$\pm$1.79 & 48.70$\pm$1.77 \\

VT
& 327.13$\pm$4.21 & 378.74$\pm$5.07 & 430.49$\pm$5.43
& 0.25$\pm$0.003 & 0.25$\pm$0.003 & 0.25$\pm$0.003
& 44.97$\pm$1.32 & 47.04$\pm$1.61 & 49.11$\pm$1.73 \\

CT
& 298.06$\pm$4.74 & 360.14$\pm$5.27 & 422.01$\pm$5.46
& 0.24$\pm$0.004 & 0.24$\pm$0.003 & 0.25$\pm$0.003
& 41.74$\pm$1.27 & 43.79$\pm$1.30 & 46.58$\pm$1.45 \\

CF
& 342.29$\pm$4.96 & 392.13$\pm$5.37 & 441.76$\pm$5.75
& 0.23$\pm$0.003 & 0.24$\pm$0.003 & 0.25$\pm$0.003
& 45.13$\pm$1.47 & 46.71$\pm$1.44 & 49.26$\pm$1.74 \\

Dragonnet
& 332.98$\pm$4.72 & 388.79$\pm$5.22 & 441.91$\pm$5.44
& 0.28$\pm$0.00 & 0.27$\pm$0.00 & 0.27$\pm$0.00
& 46.53$\pm$1.47 & 47.49$\pm$1.44 & 48.07$\pm$1.41 \\

\midrule

MOSIC-MLP ($\alpha=0.01$)
& 384.80$\pm$5.63 & 438.94$\pm$5.55 & 488.80$\pm$5.55
& 0.20$\pm$0.003 & 0.20$\pm$0.003 & 0.22$\pm$0.002
& 43.30$\pm$1.75 & 44.88$\pm$1.70 & 46.08$\pm$1.09 \\

MOSIC-DT ($\alpha=0.01$)
& 459.98$\pm$5.54 & 500.53$\pm$6.27 & 518.43$\pm$6.14
& 0.19$\pm$0.003 & 0.20$\pm$0.004 & 0.21$\pm$0.003
& 40.53$\pm$0.87 & 47.49$\pm$1.44 & 48.07$\pm$1.41 \\

MOSIC-Forest ($\alpha=0.01$)
& 489.73$\pm$4.68 & 520.97$\pm$5.87 & 531.92$\pm$5.79
& 0.17$\pm$0.003 & 0.18$\pm$0.004 & 0.20$\pm$0.004
& 45.13$\pm$1.47 & 46.71$\pm$1.44 & 49.26$\pm$1.74 \\

\bottomrule
\end{tabular}
}
\end{table*}

\subsection{Extension to Safety, Budget, and Fairness Constraint on Synthetic Data}\label{appx:syn_more_constraints}

To demonstrate MOSIC's flexibility to handle additional constraints, we extend the synthetic data generation process described in Appendix~\ref{subsec:synthetic} by introducing a safety, budget, and fairness constraint.
In particular:
\begin{itemize}
    \item Safety constraint: Following~\citet{doubleday2022risk}, each sample is assigned a risk score $r_i = 1/(1+\exp(10 * x_i[10]+1))$. We require the average risk of the selected subgroup to be no greater than 0.05: $$\frac{\sum_{i=1}^n\mathds{1}(S(x_i)>0.5)r_i}{\sum_{i=1}^n\mathds{1}(S(x_i)>0.5)} \leq 0.05.$$
    
    \item Budget constraint: Following~\citet{qiu2022individualized}, each sample is assigned a treatment cost value $cost_i = (x_i[3]+5)/5$. The total cost of the selected subgroup must not exceed half the cost of treating the entire population (assuming a unit cost per sample): $$\sum_{i=1}^n\mathds{1}(S(x_i)>0.5)cost_i \leq 0.5*n.$$ The test contains 2500 samples, so the total cost limit $0.5*n=1250$ in this case.
    
    \item Fairness constraint: Let a binary sensitive attribute: $sens_i = \mathds{1}(x[3]>0.5)$. We adopt the conditional statistical parity metric~\cite{mehrabi2021survey}. In our setting, this corresponds to maintaining a sensitive‑group proportion of 0.5 in the selected subgroup: $$\Big|\frac{\sum_{i=1}^n\mathds{1}(S(x_i)>0.5)sens_i}{\sum_{i=1}^n\mathds{1}(S(x_i)>0.5)} - 0.5\Big| \leq 0.01,$$
    where we allow a violation of 0.01. We can then convert this to two ratio-form constraints:
    $$-0.01 \leq \frac{\sum_{i=1}^n\mathds{1}(S(x_i)>0.5)sens_i}{\sum_{i=1}^n\mathds{1}(S(x_i)>0.5)} - 0.5 \leq 0.01.$$
\end{itemize}
In addition to the group size constraint ($c=0.5$) and overlap constraint ($\alpha=0.02$), we progressively add the following constraints: 1) Plus safety constraint; 2) Plus safety and budget constraint; 3) Plus safety, budget, and fairness constraint. Results in Table~\ref{tab:syn_more_constraints} show that \methodname\ can effectively enforce all of these constraints.


\subsection{Extension to Safety Constraint on eICU}\label{appx:eicu_cns_constraint}

To impose the interpretability requirement, we implement MOSIC-DT for this setting. Similarly, we run experiments on 100 random train-test splits and use 5-fold cross-validation to select the tree depth among \{3,5,7\}. Results in Table~\ref{table:eicu_cns4} show MOSIC can effectively exclude patients with GCS$<6$. Additionally, an example (Figure~\ref{subfig:eicu_dt_w_cns}) shows that MOSIC indeed learned to exclude such high-risk patients. 

We additionally run MOSIC-MLP with and without the GCS$<6$ constraints and run post-hoc SHAP-value analysis to evaluate the feature contributions. The dominant features identified by SHAP are generally consistent with the rule structure revealed by the decision tree. In particular, the SHAP values of the GCS features were ambiguous before we enforced the constraint that avoids patients with GCS $<6$, but they became clearly separated after we enforced it, aligning well with our intention. 

We further evaluate subgroup stability across different initializations. For the same train–test split, we reran MOSIC-MLP six times with the same constraints but different random initializations and examined whether the resulting feature contributions remained consistent. As shown in Figure~\ref{fig:model_stablility}, the SHAP value patterns are highly similar across runs, suggesting that the learned subgroups are stable with respect to initialization and all align well with the safety constraint.


\begin{figure*}[t]
    \centering
    \subfigure[Seed 1]{%
        \includegraphics[width=0.32\linewidth]{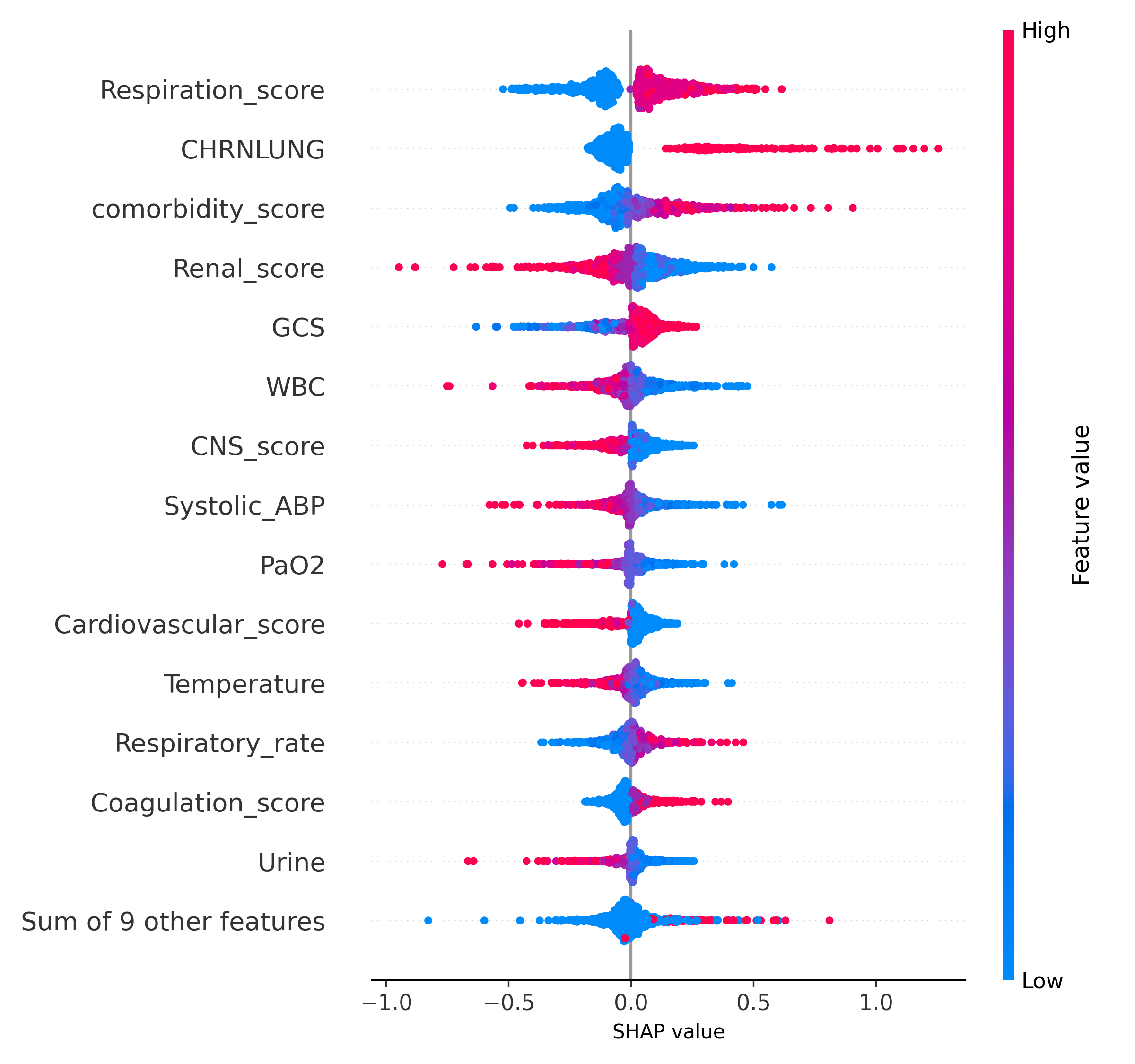}
    }\hfill
    \subfigure[Seed 2]{%
        \includegraphics[width=0.32\linewidth]{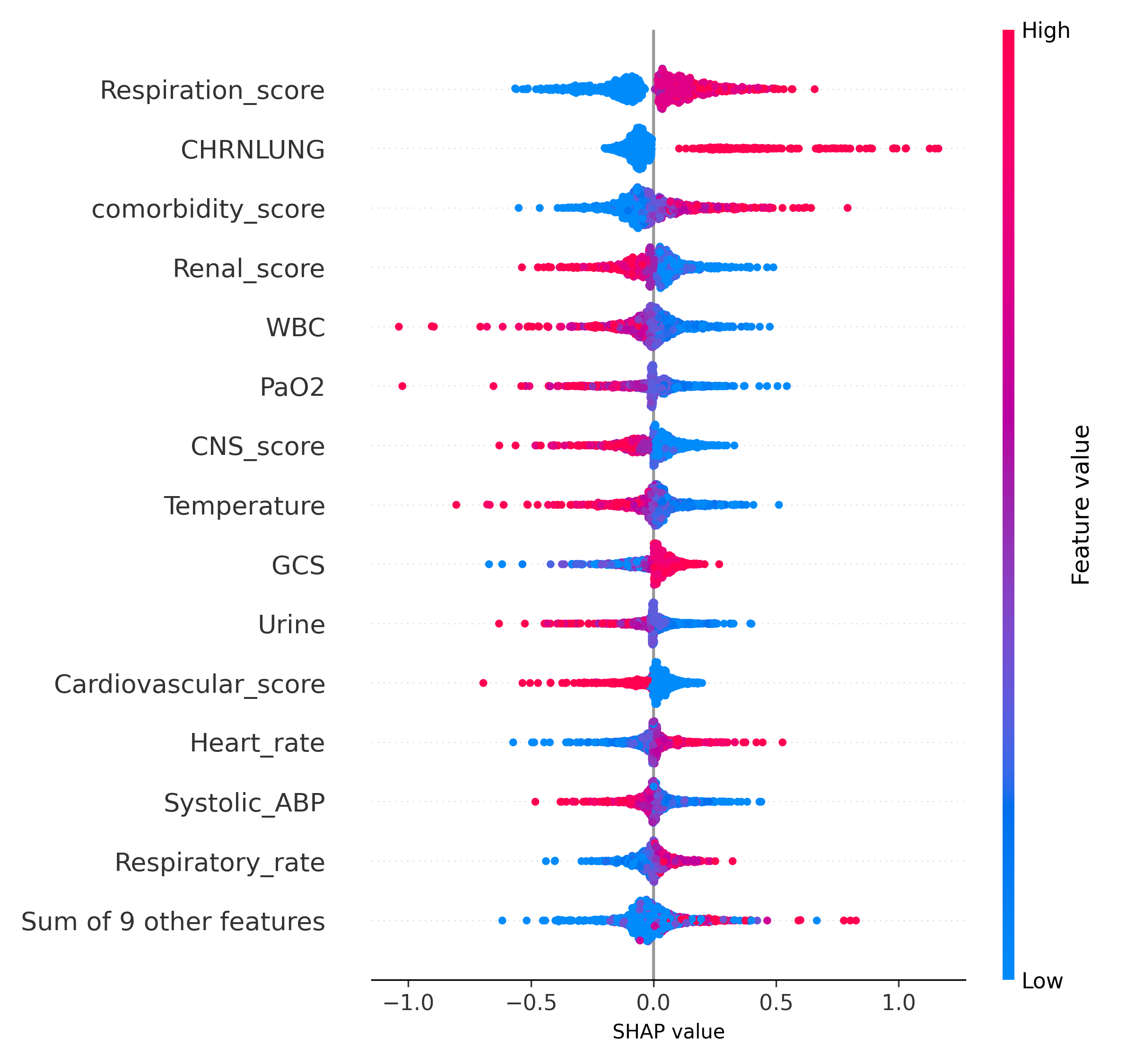}
    }\hfill
    \subfigure[Seed 3]{%
        \includegraphics[width=0.32\linewidth]{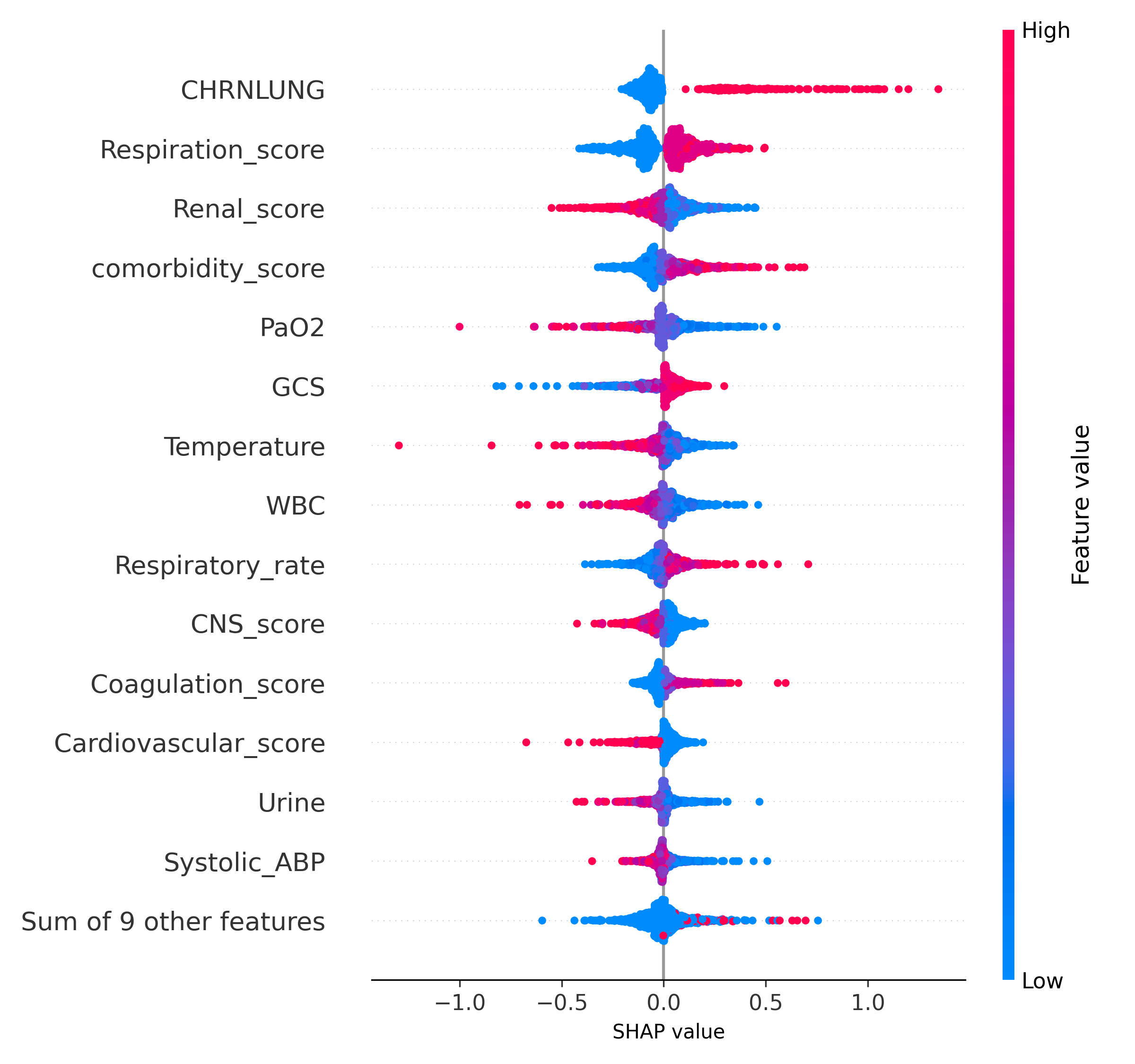}
    }

    \subfigure[Seed 4]{%
        \includegraphics[width=0.32\linewidth]{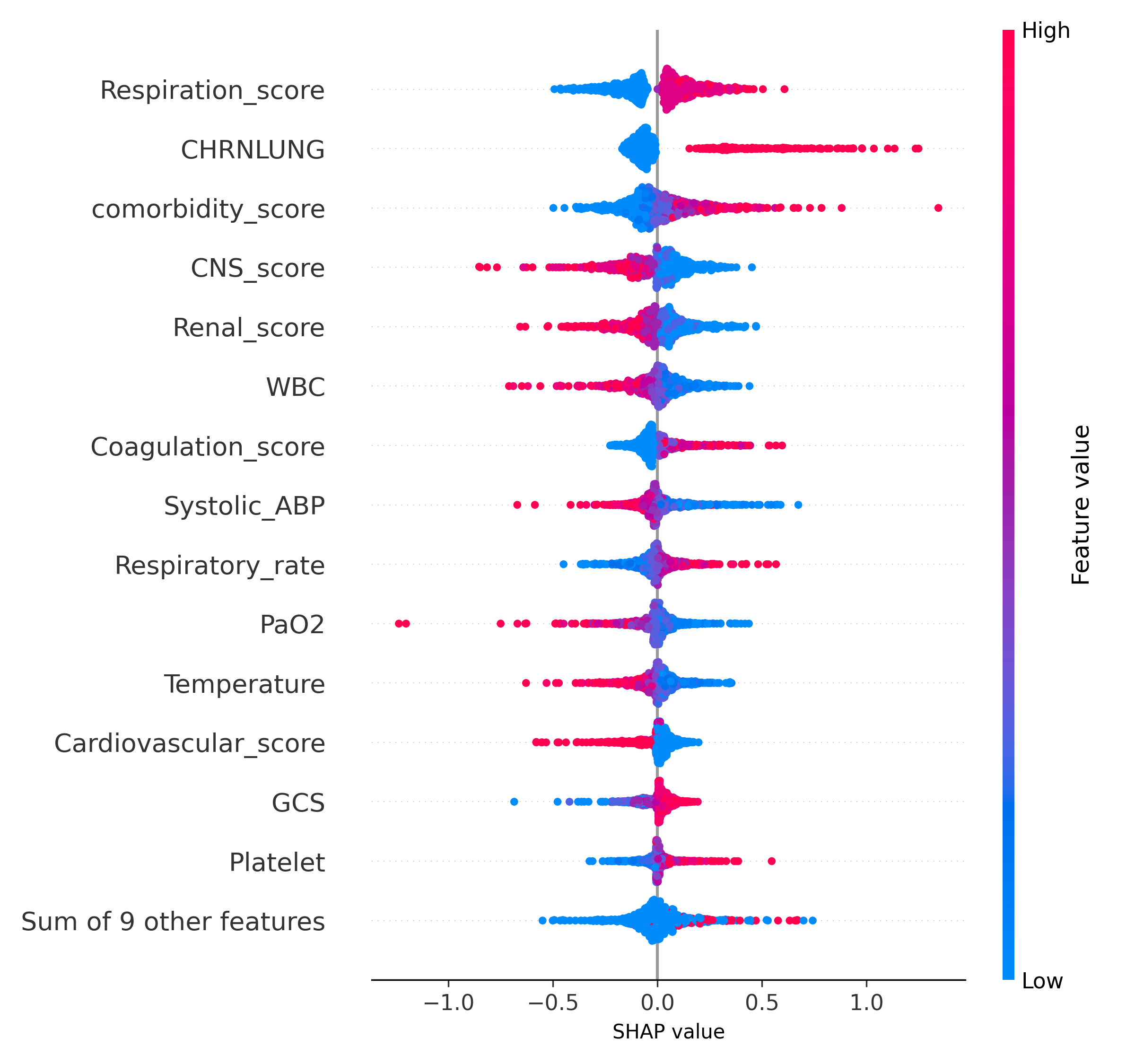}
    }\hfill
    \subfigure[Seed 5]{%
        \includegraphics[width=0.32\linewidth]{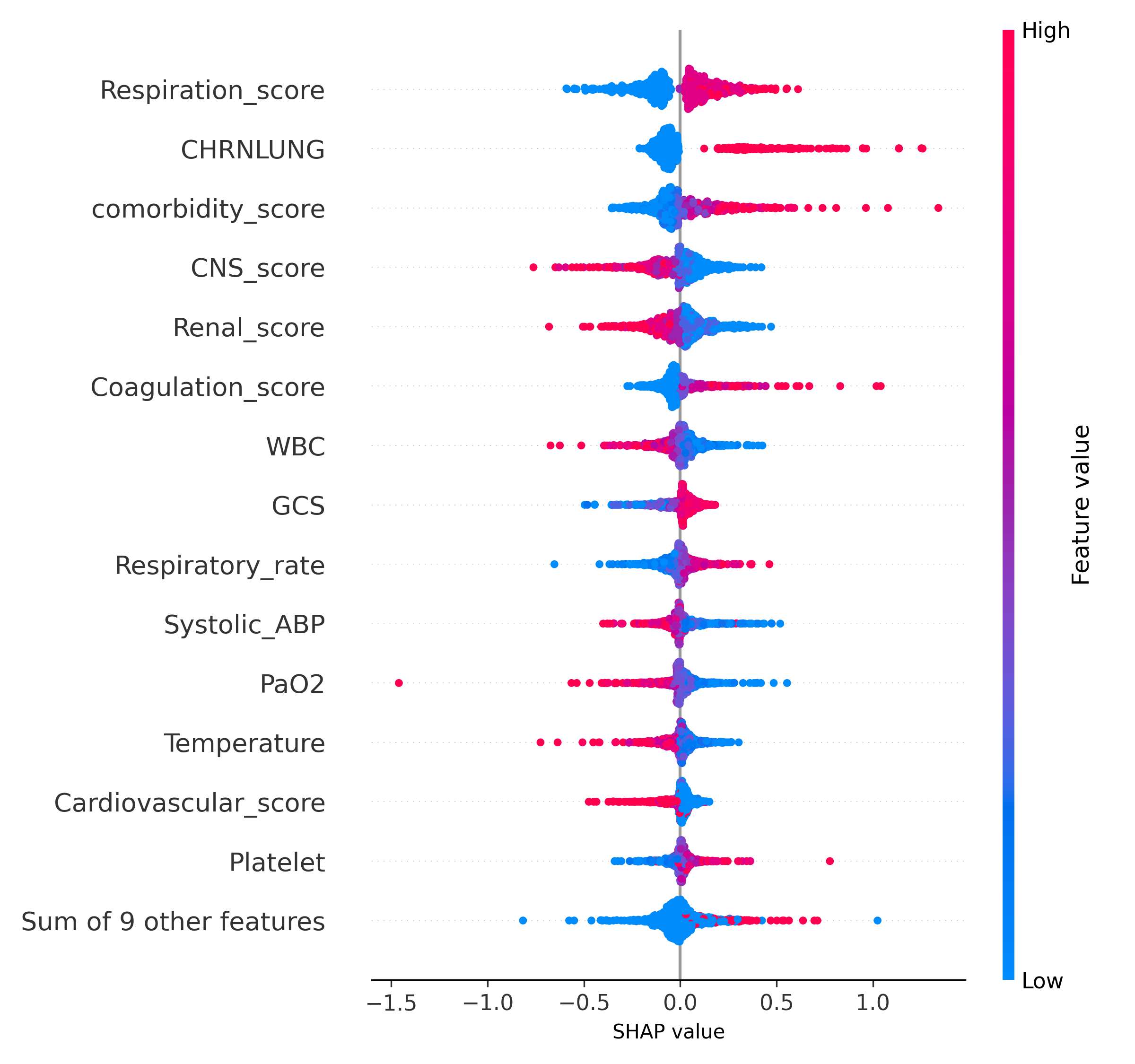}
    }\hfill
    \subfigure[Seed 6]{%
        \includegraphics[width=0.32\linewidth]{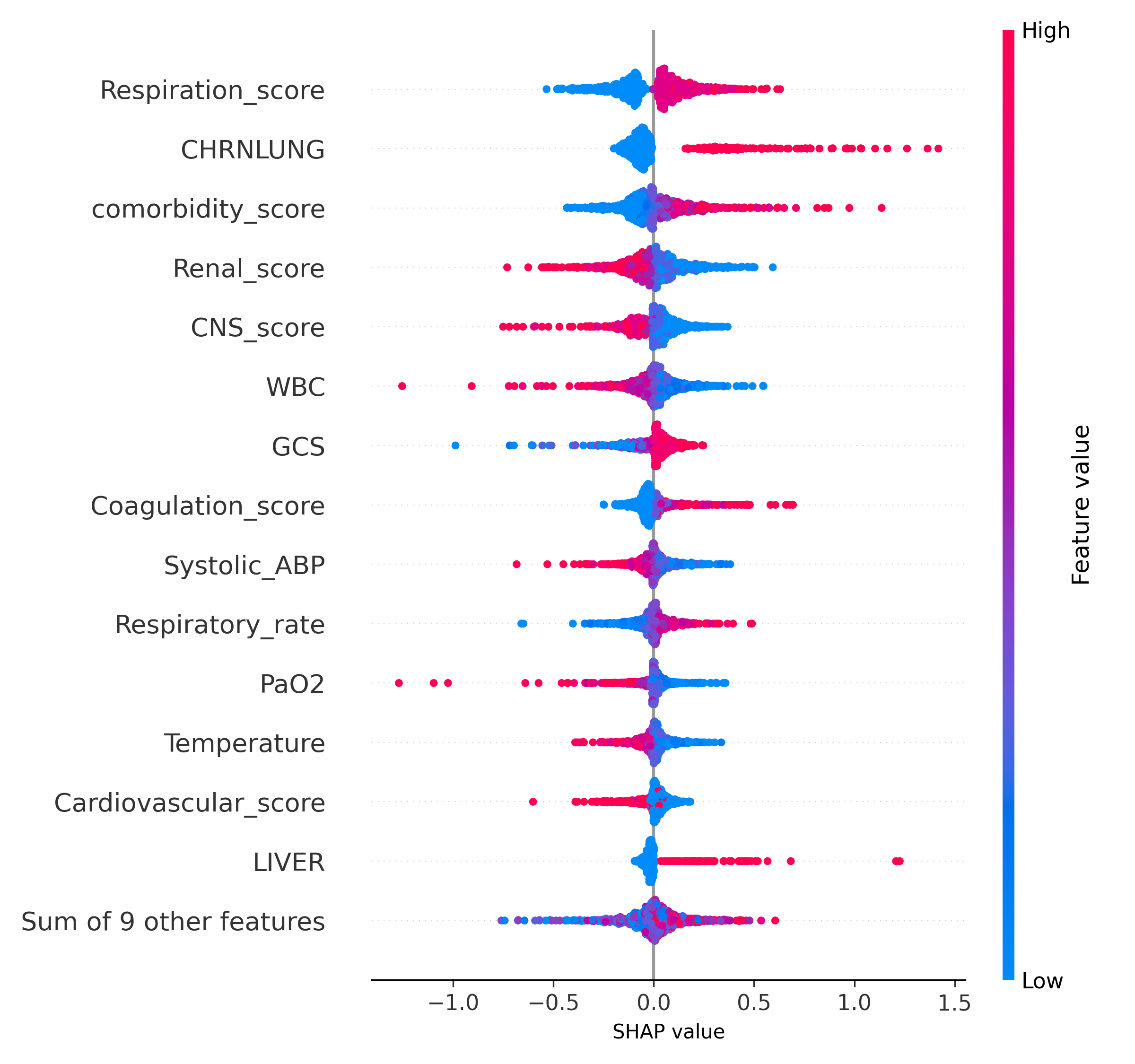}
    }
    \vspace{-5pt}
    \caption{Model Stability across different initialization}
    \vspace{-5pt}\label{fig:model_stablility}
\end{figure*}

\section{Evaluation on Binary Subgroup Setting}\label{sec:binary_subgroup}
Unlike approaches that assume the existence of two or a finite number of subgroups with distinct ATEs, our study focuses on identifying a subset of the population with maximal ATE under real-world constraints, without making structural assumptions about the underlying heterogeneity. This design enables our method to generalize to continuous or complex heterogeneity. Nevertheless, we recognize that such structural assumptions, such as that binary subgroups with distinct ATE exist, are plausible in certain real-world scenarios. In such cases, the real-world constraint will naturally introduce a trade-off for identification performance. 

To illustrate, we modify the DGP of synthetic data by replacing $Y(1)$ in Appendix~\ref{subsec:synthetic} to:
$$Y(1) = (\sin(10 * \bm{X} ) + 5*\bm{X}^2)^T\beta_1 + \mathds{1}(\bm{X}^T > 0.05)\beta_\tau + \epsilon.$$
That is, only patients with covariates $> 0.05$ at positive $\beta_\tau$ indices receive a positive effect; others receive none. This yields a positive subgroup comprising ~68\% of samples. Using this DGP, we test MOSIC with beta = 1e-5, alpha = 0, and c in \{0.6, 0.7, 0.8\}. The precision and recall of subgroup identification are evaluated. Results in Table~\ref{table:binary_subgroup} are reported as mean $\pm$ standard deviation over 100 runs. Performance aligns with theory: precision/recall degrade when $c$ (the group size constraint) exceeds/falls below the true subgroup size, reflecting constraint-driven trade-offs.

\begin{table}[h]
	\centering
	\begin{tabular}{ccccc}
		\toprule
		c & ATE & Group Size & Precision & Recall \\
		\hline
        0.6 & 0.92$\pm$0.07 & 0.60$\pm$0.01 & 0.93$\pm$0.05 & 0.83$\pm$0.05 \\
        0.7 & 0.84$\pm$0.05 & 0.70$\pm$0.01 & 0.88$\pm$0.04 & 0.92$\pm$0.04 \\
        0.8 & 0.75$\pm$0.04 & 0.80$\pm$0.01 & 0.80$\pm$0.03 & 0.96$\pm$0.04 \\
		\bottomrule
	\end{tabular}
    \caption{Performance of binary subgroup identification} \label{table:binary_subgroup}
\end{table}
\section{Evaluation of Type I error}\label{sec:type_i_error}
Although our primary focus is on multi-constraint subgroup identification rather than statistical inference, we also evaluate Type I error of the identified subgroup. We use a data-splitting approach to test whether identified subgroups arise spuriously under the null hypothesis, where all individuals have zero treatment effect. Using data splitting, Type I error of the selected subgroup can be evaluated after we build the subgroup assignment model. Specifically: 
\begin{enumerate}
    \item Split the dataset (e.g., 50-50) into training (subgroup selection) and holdout (inference);
    \item Train MOSIC on the training set to learn the subgroup model;
    \item Apply the model to the holdout set to identify the subgroup, then compute its subgroup ATE, denoted as $ATE_{hold\_out}$;
    \item Test the null hypothesis on the holdout set:
        \begin{enumerate}
            \item Construct the distribution of subgroup ATE under the null (here by directly sampling from the test distribution, with bootstrap subsample size equaling the target subgroup size specified by the parameter $c$);
            \item Compare $ATE_{hold\_out}$ to this distribution, and determine whether the null hypothesis should be rejected.
        \end{enumerate} 
    \item Repeat steps 1-4 on additional synthetic data instances, aggregate results, and estimate type-I error.
\end{enumerate}

We generated 100 synthetic datasets using the same DGP in Appendix~\ref{subsec:synthetic} except that $\beta_\tau$ is set to $\bm{0}$. This aligns with the null hypothesis: all individuals have zero treatment effect. For each instance, we set the bootstrap iterations to 10000. Type I error rate is computed as the proportion of instances in which the null hypothesis is rejected at the 5\% significance level. 

As shown in Table~\ref{table:type_i_error}, the Type I error rate increases when the parameter $c$ is small, consistent with theoretical expectations. Smaller subgroups lead to higher variance in ATE estimates, highlighting a fundamental trade-off between real-world constraints and statistical reliability.

\begin{table}[t]
	\centering
	\begin{tabular}{cccc}
		\toprule
		c & ATE & Group Size & Type I error \\
		\hline
		0.4 & -0.0011 $\pm$ 0.0401 & 0.3991 $\pm$ 0.0141 & 0.12 \\
		0.6 & -0.0018 $\pm$ 0.0298 & 0.6020 $\pm$ 0.0141 & 0.00 \\
        0.8 & -0.0000 $\pm$ 0.0212 & 0.8038 $\pm$ 0.0121 & 0.00 \\
		\bottomrule
	\end{tabular}
    \caption{Type I error under different group size constraints.} \label{table:type_i_error}
\end{table}

\section{Runtime Analysis}\label{appx:runtime}
Each method is run three times, and the average run time is reported; these experiments are run on CPU only to reflect clinical computing environments with limited resources. Since MOSIC's nuisance estimation step uses Dragonnet and Logistic regression, it shares the same nuisance-fitting cost as Dragonnet (Logistic Regression is negligible). Hence, the reported runtime for MOSIC reflects only the additional cost of the optimization step. As shown in Table~\ref{tab:runtime}, this overhead is small relative to nuisance estimation, indicating that computation is unlikely to be a deployment bottleneck.

\end{document}